\newcommand{\be}{\begin{equation}}
\newcommand{\ee}{\end{equation}}
\newtheorem{lemma}{Lemma}
\newtheorem{theorem}{Theorem}
\newtheorem{definition}{Definition}
\newtheorem{proposition}{Proposition}
\newtheorem{assumption}{Assumption}
\newtheorem{remark}{Remark}
\newenvironment{assumptionp}[1]{
	
	\assumptionalt
}{\endassumptionalt}
\newcommand{\bm}[1]{\boldsymbol{#1}}
\def\bibsep{\smallskipamount}%
\newcommand{\bProof}{\begin{proof}{Proof.}}
\newcommand{\eProof}{\hfill\Halmos\\  \end{proof}}
\newcommand{\bpi}{\boldsymbol{\pi}}
\newcommand{\bPi}{\boldsymbol{\Pi}}
\newcommand{\bea}{\begin{equation*}}
\newcommand{\eea}{\end{equation*}}
\DeclareMathOperator*{\argmax}{arg\,max}
\newcommand{\rev}[1]{\textcolor{black}{{#1}}} 
\newcommand{\gao}[1]{\textcolor{black}{{#1}}}
\title{Reinforcement Learning for Jump-Diffusions, with Financial Applications}
\author{
	Xuefeng Gao\thanks{Department of Systems Engineering and Engineering Management, The Chinese University of Hong Kong, Hong Kong, China. E-mail: xfgao@se.cuhk.edu.hk}
	\and
	Lingfei Li\thanks{Department of Systems Engineering and Engineering Management, The Chinese University of Hong Kong, Hong Kong, China. E-mail: lfli@se.cuhk.edu.hk}
	\and
	Xun Yu Zhou\thanks{Department of Industrial Engineering and Operations Research and The Data Science Institute, Columbia University, New York, NY 10027, USA. Email: xz2574@columbia.edu}
}
\begin{document}
\maketitle

\begin{abstract}
We study continuous-time reinforcement learning (RL) for stochastic control in which system dynamics are governed by jump-diffusion processes. We formulate an entropy-regularized exploratory control problem with stochastic policies to capture
the exploration--exploitation balance essential for RL. Unlike the pure diffusion case initially studied by \cite{wang2020reinforcement}, the derivation of the exploratory dynamics under jump-diffusions calls for a careful formulation of the jump part. Through a theoretical analysis, we find that one can simply use the same policy evaluation and q-learning algorithms
in \cite{jia2022policy, jia2022q}, originally developed for controlled diffusions,
without needing to check a priori whether the underlying data come from a pure diffusion
or a jump-diffusion. However, we show that the presence of jumps ought to affect parameterizations of actors and critics in general. We investigate as an application the mean--variance portfolio selection problem with stock price modelled as a jump-diffusion, and show that both RL algorithms and parameterizations are invariant with respect to jumps. Finally, we present a detailed study on applying the general theory to option hedging.

\medskip

\noindent{\bf Keywords.} Reinforcement learning, continuous time, jump-diffusions,
exploratory formulation, well-posedness, Hamiltonian, martingale, q-learning.

\end{abstract}

\section{Introduction}

Recently there is an upsurge of interest in continuous-time reinforcement learning (RL) with continuous state spaces and possibly continuous action spaces. Continuous RL problems are important because: 1) many if not most practical problems are naturally continuous in time (and in space), such as autonomous driving, robot navigation,  video game play and high frequency trading; 2) while one can discretize time upfront and turn a continuous-time problem into a discrete-time MDP, it has been known, indeed shown experimentally in e.g., \cite{munos2006policy}, \cite{tallec2019making} and \cite{park2021time},  that this approach is very sensitive to time discretization and performs poorly with small time steps; 3) there are more analytical tools available for the continuous setting that enable a rigorous and thorough analysis leading to interpretable (instead of black-box) and general (instead of ad hoc) RL algorithms.

Compared with the vast literature of RL for MDPs, continuous-time RL research is still in its infancy with the latest study focusing on establishing a rigorous mathematical theory and devising resulting RL algorithms. This strand of research starts with \cite{wang2020reinforcement} that introduces  a mathematical formulation to capture the essence of RL -- the exploration--exploitation tradeoff -- in the continuous setting, followed by a ``trilogy" (\citealt{jia2022policy, jia2022policyb, jia2022q}) that develops intertwining theories on policy evaluation, policy gradient and q-learning, respectively. The common underpinning of the entire theory is the martingale property of certain stochastic processes, the enforcement of which naturally leads to various temporal difference algorithms to train and learn q-functions, value functions and optimal (stochastic) policies. The research is characterized by carrying out all the analysis in the continuous setting, and discretizing time only at the final, implementation stage for approximating the integrated rewards and the temporal difference. The theory has been adapted and extended in different directions; see e.g. \cite{RZ2021}, \cite{guo2022entropy}, \cite{dai2023learning}, as well as employed for applications; see e.g. \cite{wang2020continuous}, \cite{huang2022achieving}, \cite{gao2022state}, \cite{wang2023reinforcement}, and \cite{wu2024reinforcement}.

The study so far has been predominantly on pure diffusion processes, namely the state processes are governed by controlled stochastic differential equations (SDEs) with a drift part and a diffusion one. While it is reasonable to model the underlying data generating processes as diffusions within a short period of time, sudden and drastic changes can and do happen
over time. An example is a stock price process: Although it is approximately a diffusion over a sufficiently short period, it may respond dramatically to a surprisingly good or bad earning report. Other examples include neuron dynamics (\citealt{GS1997}),
stochastic resonance (\citealt{GHJM1998})
and climate data (\citealt{Goswami2018}).
It is therefore natural and necessary to extend the continuous RL theory and algorithms to the case when jumps are present.
This is particularly important for decision makings in financial markets, where it has been well recognized that using jumps to capture large sudden movements provides a more realistic way to model market dynamics; see the discussions in Chapter 1 of \cite{tankov2004financial}. The financial modeling literature with jumps dates back to the seminal work of \cite{merton1976option}, who extends the classical Black--Scholes model by introducing a compound Poisson process with normally distributed jumps in the log returns. Since then alternative jump size distributions have been proposed in e.g. \cite{kou2002jump} and \cite{cai2011option}. Empirical success of jump-diffusion models has been documented for many asset classes; see \cite{bates1991crash}, \cite{andersen2002empirical}, and \cite{ait2012testing} for stocks and stock indices, \cite{bates1996jumps} for exchange rates, \cite{das2002surprise} for interest rates, and \cite{li2014time} for commodities, among many others.


This paper makes two major contributions. {The first is several mathematical results that form the foundation of the RL theory for jump-diffusions, including a suitable exploratory formulation, a well-posedness proof of the resulting exploratory SDE, and the convergence of the value functions of the grid sample state processes to the exploratory value function.} \cite{wang2020reinforcement} apply the classical stochastic relaxed control to model the exploration or randomization prevalent in RL, and derive an exploratory state equation that dictates the dynamics of the ``average" of infinitely many state processes generated by repeatedly sampling from the same exploratory, stochastic policy. The drift and variance coefficients of the exploratory SDE are the means of those coefficients against the given stochastic policy (which is a probability distribution) respectively. The derivation therein is based on a law of large
number (LLN) argument to the first two moments of the diffusion process. That argument fails for jump-diffusions which are not uniquely determined by the first two moments.
We overcome this difficulty by analyzing instead the infinitesimal behavior of the grid sample state process, which is obtained by sampling the stochastic policy on a discrete time grid. Based on this analysis, we identify the dynamic of the exploratory state process. Inspired by
\cite{kushner2000jump} who studies relaxed control for jump-diffusions, we formulate the exploratory SDE by extending the original Poisson random measures for jumps to capture the effect of random exploration. It should be noted that, like almost all the earlier works on relaxed control,  \cite{kushner2000jump} is motivated by answering  the theoretical question of whether an optimal control exists, as randomization convexifies the universe of control strategies. In comparison, our formulation is guided by the practical motivation of exploration for learning. There is also another subtle but important difference. We consider stochastic \emph{feedback} policies while \cite{kushner2000jump} does not. This in turn creates technical issues in studying the well-posedness of the exploratory SDE in our framework. Although the exploratory state process plays an important role in theoretical analysis, it is unobservable and hence cannot be used in the implementation of any RL algorithm. Instead, we consider the grid sample state processes, which is what one obtains from the environment by following the stochastic policy on a discrete time grid. We prove that the value functions of grid sample state processes converge to that of the exploratory state process at first order as the grid size shrinks to zero, which extends a major result in \cite{jia2025accuracy} from diffusions to jump-diffusions.

The second main contribution is several implications regarding the impact of jumps on RL algorithm design.
Thanks to the established exploratory formulation, we can define the Hamiltonian that, compared with the pure diffusion counterpart,  has to include an additional term corresponding to the jumps. The resulting HJB equation -- called the exploratory HJB -- is now a partial integro-differential equation (PIDE) instead of a PDE due to that additional term. However, when expressed {\it in terms of the Hamiltonian}, the exploratory HJB equation has exactly the same form as that in the diffusion case. This leads to several completely identical {\it statements} of important results, including the optimality of the Gibbs exploration, definition of a q-function, and martingale characterizations of value functions and q-functions. Here by ``identical" we mean in terms of the Hamiltonian; in other words, these statements differ between diffusions and jump-diffusions entirely because the Hamiltonian is defined differently (which also causes some differences in the proofs of the results concerned). Most important of all, in the resulting RL algorithms, the Hamiltonian (or equivalently the q-function) can be computed using temporal difference of the value function by virtue of the It\^o lemma; as a result the algorithms are completely identical no matter whether or not there are jumps. This has a significant practical implication: we can just use the same RL algorithms  without the need of checking in advance whether the underlying data come from a pure diffusion or a jump-diffusion. It is significant for the following reason. In practice, data are always observed or sampled at discrete times, no matter how frequent they arrive. Thus, we encounter successive discontinuities along the sample trajectory even when the data actually come from a diffusion process. There are some criteria that can be used to check whether the underlying process is a diffusion or a jump-diffusion, e.g. \cite{ait2012testing, wang2022testing}. But these methods typically require data with very high frequency to be effective, which may not always be available. In addition, noises must be properly handled for them to work.

Even though we can apply the same RL algorithms irrespective of the presence of jumps, the {\it parametrization} of the policy and value function may still depend on it, if we try to exploit certain special structure of the problem instead of using general neural networks for parameterization. Indeed, we give an example in which the optimal exploratory policy is Gaussian when there are no jumps, whereas an optimal policy either does not exist or becomes non-Gaussian when there are jumps. However, in the mean--variance (MV) portfolio selection problem we present as a concrete application, the optimal Gibbs exploration measure again reduces to Gaussian and the value function is quadratic as in \cite{wang2020continuous}, both owing to the inherent linear--quadratic (LQ) structure of the problem. Hence in this particular case jumps do not even affect the parametrization of the policy and value function/q-function for learning.

We also consider MV hedging of options as another application. This is a non-LQ problem and hence more difficult to solve than MV portfolio selection. The MV hedging problem has been studied in various early works, such as \cite{schweizer1996approximation,schweizer2001guided} and \cite{lim2005mean}. Here, we introduce the entropy-regularized MV hedging objective for an asset following a jump-diffusion and derive analytical representations for the optimal stochastic policy, which is again Gaussian, as well as the optimal value function. We use these representations to devise an actor--critic algorithm to learn the optimal hedging policy from data.

We compare our work with five recent related papers. (1) \cite{bender2023entropy} consider the continuous-time MV portfolio selection problem with exploration under a jump-diffusion setting. {Our paper differs from theirs in two main aspects. First, they consider the specific MV problem only while we study RL for general controlled jump-diffusions with MV as an instance of application.
Second, they do not develop algorithms based on their solution of the exploratory MV portfolio selection, which we do in this paper.} (2) {During the revision process of our paper, we became aware of the concurrent study \cite{bender2024continuous}, which independently investigates RL for jump-diffusions using a randon measure approach. Specifically, they derive a grid sample state process as an equation driven by suitable random measures and establish a limit theorem for these measures as the sampling grid size approaches zero. By contrast, our work focuses on developing q-learning theory and algorithms for jump-diffusions, with a particular emphasis on applying this general framework to two financial applications.} (3) \cite{bo2024continuous} develop q-learning for jump-diffusions by using Tsallis' entropy for regularization instead of  Shannon's entropy considered in our paper and \cite{jia2022policyb,jia2022q}.\footnote{The paper \cite{bo2024continuous} came to our attention after a previous version of our paper was completed and posted.} While this entropy presents an interesting alternative for developing RL algorithms, it may make the exploratory control problem less tractable to solve and lead to policy distributions that are inefficient to sample for exploration in certain applications. (4) \cite{guo2023reinforcement} consider continuous-time RL for linear--convex models with jumps. The scope and motivation are different from ours: They focus on the  Lipschitz stability of feedback controls for this special class of control problems where the diffusion and jump terms are not controlled, and propose a least-square model-based algorithm and obtain sublinear regret guarantees in the episodic setting. By contrast, we consider RL for general jump-diffusions and develop model-free algorithms without considering regret bounds. (5) \cite{denkert2024control} aim to unify certain types of stochastic control problems by considering the so-called randomized control formulation which leads to the same optimal value functions as those of the original problems. They develop a policy gradient representation and actor--critic algorithms for RL. The randomized control formulation is fundamentally different from the framework we are considering: therein the control is applied at a set of random time points generated by a random point process.


The remainder of the paper is organized as follows. In Section~\ref{sec:formulation}, we discuss the formulation of the control problem and the exploratory state process. In Section~\ref{sec:qlearn}, we present the theory of q-learning for jump-diffusions, followed by the discussion of q-learning algorithms in Section~\ref{sec:q-algo}. In Section~\ref{sec:mean-var}, we apply the general theory and algorithms to a mean--variance portfolio selection problem, and discuss the impact of jumps. Section \ref{sec:MV-hedge} presents the application to a mean--variance option hedging problem.
Finally, Section~\ref{sec:conclusion} concludes. The appendix contains the proofs and some results regarding the convergence of the value functions of the grid sample state processes to the value function of the exploratory state process.

\section{Problem Formulation and Preliminaries}\label{sec:formulation}
For readers' convenience, we first recall some basic concepts for one-dimensional (1D) L\'evy processes, which can be found in standard references such as \cite{sato1999levy} and \cite{applebaum2009levy}.
A 1D process $L=\{L_t: t \ge 0 \}$ is a L{\'e}vy process if it is continuous in probability, has stationary and independent increments, and $L_0=0$ almost surely. Denote the jump of $L$ at time $t$ by $\Delta L_t = L_t - L_{t-}$, and let $\pmb{\text{B}}_0$ be the collection of Borel sets of $\mathbb{R}$ whose closure does not contain $0$. The Poisson random measure (or jump measure) of $L$ is defined as
\begin{align}
	N(t, B) = \sum_{s: 0 < s \le t} 1_{B} (\Delta L_s),\ B\in \pmb{\text{B}}_0,
\end{align}
which gives the number of jumps up to time $t$ with jump size in a Borel set $B$ away from $0$. The L{\'e}vy measure $\nu$ of $L$ is defined by $\nu (B) = \mathbb{E} [ N(1, B)]$ for $B\in \pmb{\text{B}}_0$, which shows the expected number of jumps in $B$ in unit time, and $\nu (B)$ is finite. For any $B\in \pmb{\text{B}}_0$, $\{N(t, B): t\ge 0\}$ is a Poisson process with intensity given by $\nu(B)$. The differential forms of these two measures are written as $N(dt,dz)$ and $\nu(dz)$, respectively. If $\nu$ is absolutely continuous, we write $\nu(dz)=\nu(z)dz$ by using the same letter for the measure and its density function. The L\'evy measure $\nu$ must satisfy the integrability condition
\begin{equation}\label{eq:levy-integ-cond}
\int_{\mathbb{R}} \min\{z^2, 1\}\nu(dz)<\infty.
\end{equation}
However, it is not necessarily a finite measure on $\mathbb{R}/\{0\}$ but always a $\sigma$-finite measure. The L\'evy process $L$ is said to have jumps of finite (infinite activity) if $\int_{\mathbb{R}}\nu(dz)<\infty$ ($=\infty$). The number of jumps on any finite time interval is finite in the former case but infinite in the latter. For any Borel set $B$ with its closure including $0$,
$\nu(B)$ is finite in the finite activity case but infinite otherwise.
Finally, the compensated Poisson random measure is defined as $\widetilde{N}(dt, dz)=N(dt, dz)-\nu(dz)dt$. For any $B\in \pmb{\text{B}}_0$, the process $\{\widetilde{N}(t,B): t\ge 0\}$ is a martingale.

Throughout the paper, we use the following notations. By convention, all vectors are column vectors unless otherwise specified. We use $\mathbb{R}^k$ and $\mathbb{R}^{k\times \ell}$ to denote the space of $k$-dimensional vectors and $k\times \ell$ matrices, respectively. For matrix $A$, we use $A^\top$ for its transpose, $|A|$ for its Euclidean/Frobenius norm, and write $A^2\coloneqq AA^\top$. Given two matrices $A$ and $B$ of the same size, we denote by $A\circ B$ the inner product between $A$ and $B$, which is given by $\text{tr}(AB^\top)$. For a positive semidefinite matrix $A$, we write $\sqrt{A}=UD^{1/2}V^\top$, where $A=UDV^\top$ is its singular value decomposition with $U,V$ as two orthogonal matrices and $D$ as a diagonal matrix, and $D^{1/2}$ is the diagonal matrix whose entries are the square root of those of $D$.  We use $f=f(\cdot)$ to denote the function $f$, and $f(x)$ to denote the function value of $f$ at $x$. We use  $\partial_xf, \partial_{x}^2f$ for the first and second partial derivatives of a function $f$ with respect to $x$. We write the minimum of two values $a$ and $b$ as $a\wedge b$. The notation $\mathcal{U}(B)$ denotes the uniform distribution over set $B$ while $\mathcal{N}(\mu,\Sigma)$ refers to the Gaussian distribution with mean vector $\mu$ and covariance matrix $\Sigma$.

\subsection{Classical stochastic control of jump-diffusions}
Consider a filtered probability space $(\Omega,\mathcal{F},\{\mathcal{F}_t\},\mathbb{P})$ satisfying the usual hypothesis. Assume that this space is rich enough to support $W = \{W_t : t \ge 0\}$, a standard Brownian motion in $\mathbb{R}^m$, and $\ell$ independent one-dimensional (1D) L\'evy processes $L_1,\cdots,L_{\ell}$, which are also independent of $W$. Let $N(dt,dz)=(N_1(dt,dz_1),\cdots,N_{\ell}(dt,dz_\ell))^\top$ be the vector of their Poisson random measures, and similarly define $\nu(dz)$ and $\widetilde{N}(dt,dz)$. The controlled system dynamics are governed by the following L{\'e}vy SDE \cite[Chapter 3]{oksendal2007applied}:
\begin{align} \label{eq:LevySDE}
dX_s^a = b(s, X_{s-}^a, a_s)ds + \sigma(s, X_{s-}^a, a_s)dW_s + \int_{\mathbb{R}^\ell}\gamma(s, X_{s-}^a, a_{s}, z) \widetilde{N}(ds, dz),\ s \in [0, T],
\end{align}
where
\begin{equation}
	b: [0,T]\times \mathbb{R}^d \times \mathcal{A} \rightarrow \mathbb{R}^d,\ \sigma: [0,T]\times \mathbb{R}^d \times \mathcal{A} \rightarrow \mathbb{R}^{d \times m} \ \text { and } \ \gamma: [0,T]\times \mathbb{R}^d \times \mathcal{A} \times \mathbb{R}^\ell \rightarrow \mathbb{R}^{d \times \ell},
\end{equation}
$a_s$ is the control or action at time $s$, $\mathcal{A}\subseteq\mathbb{R}^n$ is the control space, and $a=\{a_s: s\in[0,T]\}$ is the control process assumed to be predictable with respect to $\{\mathcal{F}_s: s\in [0,T]\}$. We denote the $k$-th column of the matrix $\gamma$ by $\gamma_k$.
The goal of stochastic control is, for each initial time-state pair $(t, x)$ of \eqref{eq:LevySDE}, to find the optimal control process $a$ that maximizes the expected total reward:
\begin{equation} \label{eq:FH-control}
 \mathbb{E} \left[ \int_t^T e^{-\beta(s-t) }r(s, X_s^a, a_s) ds + e^{-\beta(T-t) } h (X_T^a) \Big| X_t^a =x \right],
\end{equation}
where $\beta \ge 0$ is a discount factor that measures the time value of the payoff.

The stochastic control problem \eqref{eq:LevySDE}--\eqref{eq:FH-control} is very general; in particular, control processes affect the drift, diffusion and jump coefficients.
We now make the following assumption to ensure well-posedness of the problem. Define $\mathbb{R}^d_K\coloneqq\{x\in\mathbb{R}^d: |x|\leq K\}$.

\begin{assumption}\label{assump:SDE}
Suppose the following conditions are satisfied by the state dynamics and reward functions:
\begin{enumerate}[(i)]
	\item $b, \sigma, \gamma, r, h$ are all continuous functions in their respective arguments;
	
	\item (local Lipschitz continuity) for any $K>0$ and any $p\geq 2$, there exist positive constants $C_K$ and $C_{K,p}$ such that $\forall(t, a) \in[0, T] \times \mathcal{A}, (x,x')\in\mathbb{R}^d_K$,
	\begin{align}
		&|b(t,x,a)-b(t,x',a)|+|\sigma(t,x,a)-\sigma(t,x',a)|\le C_K|x-x'|,\\
		&\sum_{k=1}^\ell \int_{\mathbb{R}}|\gamma_k(t,x,a,z_k)-\gamma_k(t,x',a,z_k)|^p\nu_k(dz)\le C_{K,p}|x-x'|^p;
	\end{align}
	
	\item (linear growth in $x$) for any $p\geq 1$, there exist positive constants $C$ and $C_p$ such that $\forall(t, x, a) \in[0, T] \times \mathbb{R}^d \times \mathcal{A}$,
	\begin{align}
		&|b(t,x,a)|+|\sigma(t,x,a)|\le C(1+|x|+{|a|}),\\
		&\sum_{k=1}^\ell \int_{\mathbb{R}}|\gamma_k(t,x,a,z)|^p\nu_k(dz)\le C_p(1+|x|^p+{|a|^p});
	\end{align}

	\item there exists a constant $C>0$ such that
	\begin{equation}
		|r(t, x, a)| \le C\left(1+|x|^{p}+|a|^{q}\right), \ |h(x)| \le C\left(1+|x|^{p}\right),\ \forall(t, x, a) \in[0, T] \times \mathbb{R}^d \times \mathcal{A}
	\end{equation}
	for some $p\ge 2$ and some $q\ge 1$.
\end{enumerate}
\end{assumption}

A control process $a$ is admissible if $\mathbb{E}[\int_0^T|a_s|^{p} ds]$ is finite for any $p\geq 1$. Conditions (i)-(iii) guarantee the existence of a unique strong solution to the controlled L\'evy SDE \eqref{eq:LevySDE} with initial condition $X_t^a=x\in\mathbb{R}^d$ for any admissible control process. Furthermore, for any $p\ge 2$, there exists a constant $C_p>0$ such that
\begin{equation}\label{eq:moment}
	\mathbb{E}_{t,x}\Big[\sup_{t\le s\le T} |X_s^a|^{p}\Big]\le C_p(1+|x|^{p});
\end{equation}
see \cite[Theorem 3.2]{kunita2004stochastic} and \cite[Theorem 119]{rong2006theory}. With the moment estimate \eqref{eq:moment}, it follows from condition (iv) and the admissibility condition that the expected value in \eqref{eq:FH-control} is finite.

Let $\mathcal{L}^a$ be the infinitesimal generator associated with the L{\'e}vy SDE \eqref{eq:LevySDE}. Under condition (iii), we have $	\int_{\mathbb{R}}|\gamma_k(t,x,a,z)|\nu_k(dz)<\infty$ for $k=1,\cdots,\ell$.
Thus, we can write $\mathcal{L}^a$ in the following form:
\begin{align}\label{eq:gene1}
	\mathcal{L}^a f(t,x) &= \partial_t f(t,x) + b(t, x, a) \circ \partial_x f  (t,x) + \frac{1}{2} \sigma^2(t,x,a) \circ \partial^2_x f(t,x) \nonumber \\
	&+ \sum_{k=1}^\ell \int_{\mathbb{R}} \left( f(t, x+ \gamma_k(t, x ,a, z)) - f(t,x) - \gamma_k(t,x,a,z) \circ \partial_xf(t,x) \right) \nu_k (dz),
\end{align}
where $\partial_xf\in\mathbb{R}^d$ is the gradient and $\partial^2_x f\in\mathbb{R}^{d\times d}$ is the Hessian matrix.

We recall It\^o's formula, which will be frequently used in our analysis; see e.g. \cite[Theorem 1.16]{oksendal2007applied}. Let $X^a$ be the unique strong solution to  \eqref{eq:LevySDE}. For any $f\in C^{1,2}(\mathbb{R}^+\times\mathbb{R}^d)$, we have
\begin{align}
	&df(t,X^a_t)=\partial_t f(t,X^a_{t-})dt + b(t,X^a_{t-},a_t)\circ \partial_x f(t,X^a_{t-})dt + \frac{1}{2}\sigma^2(t,X^a_{t-},a_t) \circ\partial_x^2 f(t,X^a_{t-})dt\\
	+&\sum_{k=1}^\ell \int_{\mathbb{R}} \left( f(t, X^a_{t-}+ \gamma_k(t, X^a_{t-}, a_{t}, z)) - f(t,X^a_{t-}) - \gamma_k(t,X^a_{t-}, a_{t}, z) \circ  \partial_x f(t,X^a_{t-}) \right) \nu_k (dz)dt\\
	+&\partial_x f(t, X^a_{t-})\circ \sigma(t,X^a_{t-},a_t)dW_t+\sum_{k=1}^\ell \int_{\mathbb{R}} \left( f(t, X^a_{t-}+ \gamma_k(t, X^a_{t-}, a_{t}, z)) - f(t,X^a_{t-})\right) \widetilde{N}_k (dt,dz).\label{eq:Ito-JD}
\end{align}

It is known that the Hamilton--Jacobi--Bellman (HJB)  equation for the control problem \eqref{eq:LevySDE}--\eqref{eq:FH-control} is given by
\begin{align} \label{eq:hjb-pde1}
	\sup_{a \in \mathcal{A}} \{ r(t, x,a) + \mathcal{L}^a V(t,x) \}  - \beta V(t,x) & =0, \quad (t,x) \in [0, T)\times\mathbb{R}^d,\\
	V(T,x) &=h (x), \nonumber
\end{align}
where $\mathcal{L}^a$ is given in \eqref{eq:gene1}.
Under proper conditions, the solution to the above equation is the optimal value function $V^*$ for control problem \eqref{eq:FH-control}. Moreover, the following function, which maps a time--state pair to an action:
\begin{align}\label{vt}
	a^*(t,x) = \argmax_{a \in \mathcal{A}} \left\{ r(t, x,a) + \mathcal{L}^a V^*(t,x)  \right\}
\end{align}
is the optimal feedback control policy of the problem, a result called the verification theorem.

Given a smooth function $V(t,x) \in C^{1,2}([0, T] \times \mathbb{R}^d),$
we define the Hamiltonian $H$ by
\begin{align}
	&H\big(t,x, a, \partial_x V, \partial_x^2 V, V\big)=r(t, x, a)  +  b(t, x, a) \circ \partial_x V(t,x) + \frac{1}{2} \sigma^2(t,x,a) \circ \partial^2_x V(t,x) \\
	&\phantom{\quad\quad}+\sum_{k=1}^\ell \int_{\mathbb{R}^d} \left( V(t, x+  \gamma_k(t, x ,a, z)) - V(t,x) - \gamma_k(t,x,a,z) \circ  \partial_xV(t,x) \right) \nu_k (dz).\label{eq:Hal-1}
\end{align}
Then, the HJB equation \eqref{eq:hjb-pde1} can be recast as
\begin{align} \label{eq:hjb-pde2}
	\partial_t V(t, x) + \sup_{a \in \mathcal{A}} \{ H(t, x, a, \partial_xV, \partial_x^2V, V)   \} - \beta V(t, x) &= 0, \quad (t,x) \in [0, T)\times \mathbb{R}^d,\\
	V(T, x) &= h(x).\nonumber
\end{align}

What we have just outlined above is the classical {\it model-based} approach to solving the stochastic control problem \eqref{eq:LevySDE}--\eqref{eq:FH-control}, under the assumption that the model primitives $b, \sigma, \gamma, r, h$ are known functions. When they are not known, there is no HJB equation \eqref{eq:hjb-pde1} to solve and no verification condition \eqref{vt} to apply, in which case we   resort to RL.

\subsection{Randomized control and the grid sample state process}\label{sec:expl-formulation}
A key idea of RL is to explore the unknown environment by randomizing the actions. Let $\bpi: (t,x) \in [0,T] \times \mathbb{R}^d \rightarrow  \bpi (\cdot| t,x) \in \mathcal{P} (\mathcal{A})$ be a given {\it stochastic} feedback policy, where $\mathcal{P} (\mathcal{A})$ is the set of probability density functions defined on $\mathcal{A}$.  \gao{ An action $a\in \mathcal{A}$ sampled from $\bpi (\cdot| t,x)$ can be expressed as $a = G^{\bpi}(t,x,U)$ for some measurable function $G^{\bpi}$ and some $U$ following the uniform distribution over $[0,1]^n$ ($n$ is the dimension of the control space). In the following, we will also interchangeably use the more compact notations $\bpi_{t,x}(\cdot)$ for $\bpi (\cdot| t,x)$ and  $G^{\bpi_{t,x}}(\cdot)$ for $G^{\bpi}(t,x,U)$. }

When it comes to randomizing actions in {\it continuous} time, a subtle measurability issue arises from continuum (uncountable in particular) and independent samplings from the stochastic policy; see, e.g. \cite{szpruch2024optimal} and \cite{bender2024continuous} for detailed discussions. To avoid such issues, we follow \cite{jia2025accuracy} to only consider randomization on a time grid and use piecewise constant actions.

Let $\mathbb{S}= \{0 = t_0 < t_1 < \ldots < t_N = T\}$ be a time grid of $[0, T]$ with $N \in \mathbb{N}$ and $|\mathbb{S}| = \max_{i=1, \ldots,N}(t_i - t_{i-1})$. Given a stochastic feedback policy $\bpi$, we follow a {\it grid sampling scheme} where action randomization occurs at the grid points only. Specifically, to define the state dynamic under such a scheme, we introduce the grid sampling process:
\begin{align} \label{eq:grid-sampling}
U_s^{\mathbb{S}} = U_1  \cdot 1_{[0, t_1]}(s) + \sum_{i=2}^N U_i \cdot 1_{(t_{i-1}, t_i]}(s), \quad s \in [0,T].
\end{align}
Here $(U_i)$ are i.i.d random vectors uniformly distributed over $[0,1]^{n}$, which are independent of $W$ and $L_1,\cdots,L_\ell$, assuming that the original probability space $(\Omega, \mathcal{F}, \mathbb{P})$ is rich enough to support these random elements. This process essentially describes sampling from $U_i$ at time $t_{i-1}$. We now enlarge the original filtration to include the additional randomness from sampling actions. Denote by $\mathcal{G}^{\mathbb{S}} = (\mathcal{G}^{\mathbb{S}}_t)_{t \in [0, T]}$ the right-continuous, augmented version of the filtration generated by $W$, $L_1,\cdots,L_\ell$ and the grid sampling process $U^{\mathbb{S}}$.  The new filtered probability space is now $(\Omega,\mathcal{F}, \mathcal{G}^{\mathbb{S}}, \mathbb{P})$.


Fix an initial system state $x_0$ at time 0, a time grid $\mathbb{S}$ of $[0,T]$ and a stochastic feedback policy $\bpi$. Under a grid sampling scheme, the state dynamic $X^{\bpi, \mathbb{S}} = \{X^{\bpi, \mathbb{S}}_s: 0 \le s \le T\}$, which will be henceforth referred to as the \emph{grid sample state process}, satisfies the following SDE: for all $i = 0, . . . , n - 1$ and all $s \in (t_{i}, t_{i+1}]$,
\begin{align}
d X^{\bpi, \mathbb{S}}_s &= b(s, X^{\bpi, \mathbb{S}}_{s-} ,  G^{\bpi}(t_i, X^{\bpi, \mathbb{S}}_{t_i}, U_s^{\mathbb{S}} ) ) ds + \sigma (s, X^{\bpi, \mathbb{S}}_{s-}, G^{\bpi}(t_i, X^{\bpi, \mathbb{S}}_{t_i}, U_s^{\mathbb{S}}) ) dW_s \\
& \quad + \int_{\mathbb{R}^\ell} \gamma (s, X^{\bpi, \mathbb{S}}_{s-}, G^{\bpi}(t_i, X^{\bpi, \mathbb{S}}_{t_i}, U_s^{\mathbb{S}} ), z) \widetilde N(ds, dz), \label{eq:samplestate}
\end{align}
where $U_s^{\mathbb{S}}=U_{i+1}$ for $s \in (t_{i}, t_{i+1}]$. This means a random action $G^{\bpi}(t_i, X^{\bpi, \mathbb{S}}_{t_i}, U_{i+1})$ is generated at $t_i$ and remains unchanged over $(t_{i}, t_{i+1}]$.\footnote{To define the grid sample state process, one can alternatively consider an action process given by $G^{\bpi}(s, X^{\bpi, \mathbb{S}}_{s-}, U_s^{\mathbb{S}}) = G^{\bpi}(s, X^{\bpi, \mathbb{S}}_{s-}, U_{i+1})$ for $s \in (t_{i}, t_{i+1}]$ as in \cite{bender2024continuous}. In this paper we choose to work with the piecewise constant action process given in \eqref{eq:samplestate} because it is simpler and indeed is the actual {\it implemented} control process in our RL algorithms.} Under appropriate technical assumptions on $\bpi$, we will show later that the SDE \eqref{eq:samplestate} admits a unique strong solution that has a desirable moment estimate; see Proposition~\ref{prop:SDE-sampled} for details.

\subsection{Exploratory state process}
For theoretical analysis, we need to formulate the so-called exploratory state process, which represents the averaged controlled dynamic over infinitely many randomized actions. In the case of diffusions, \cite{wang2020reinforcement} \rev{heuristically formulated such exploratory dynamics by applying an LLN (Law of Large Numbers) argument to the first two moments of the diffusion processes. We will still use the same heuristic approach based on LLN here, but the first two moments cannot determine a jump-diffusion. Instead, we study the infinitesimal behavior of the grid sample state process by averaging randomized actions, based on which we will formulate the exploratory state process.}

To this end, let $f\in C_0^{1,2}([0,T)\times\mathbb{R}^d)$ be  continuously differentiable in $t$ and twice continuously differentiable in $x$ with compact support. \rev{Let $t=t_k$, for some $k$, be a grid point of $\mathbb{S}$ and fix $x$. Consider the SDE \eqref{eq:samplestate} starting from $X^{\bpi, \mathbb{S}}_t=x$. Let $s>0$ be very small such that $[t, t +s]$ lies within $[t_k, t_{k+1}]$. We apply the random  action $G^{\bpi}(t_k, X_{t_k}^{\bpi, \mathbb{S}}, U_{k+1})$ throughout $(t,t+s]$. Let $a_1,\cdots,a_N$ be $N$ independent samples of this random action.} With a slight abuse of notations, denote by $X_{t+s}^{a_i}$ the value of the state process corresponding to ${a}_i$ at $t+s$.  We consider  $\mathbb{E}_{t,x}[f(t+s, X_{t+s}^{\bpi, \mathbb{S}})]$, where the expectation, conditional on  $X^{\bpi, \mathbb{S}}_t=x$, is taken over the randomness caused by $U_{k+1}$ as well as $W$, $L_1,\cdots,L_\ell$.  Then
\begin{align}
	&\phantom{=}\lim_{s\to 0}\frac{\mathbb{E}_{t,x}[f(t+s, X_{t+s}^{\bpi, \mathbb{S}})]-f(t,x)}{s}\\
	&=\lim_{s\to 0}\frac{\lim_{N\to\infty}\frac{1}{N}\sum_{i=1}^N \mathbb{E}_{t,x}[f(t+s,X_{t+s}^{a_i})]-f(t,x)}{s}\\
	&=\lim_{N\to\infty}\frac{1}{N}\sum_{i=1}^N\lim_{s\to 0}\frac{ \mathbb{E}_{t,x}[f(t+s,X_{t+s}^{a_i})]-f(t,x)}{s}\\
	&=\lim_{N\to\infty}\frac{1}{N}\sum_{i=1}^N \big( \partial_t f(t,x) + b(t,x,a_i)\circ\partial_x f(t,x) + \frac{1}{2}\sigma^2(t,x,a_i)\circ\partial_x^2 f(t,x)\big)\label{eq:part1}\\
	&+\lim_{N\to\infty}\frac{1}{N} \sum_{i=1}^N\sum_{k=1}^\ell \int_{\mathbb{R}} \left( f(t, x+ \gamma_k(t, x ,a_i, z)) - f(t,x) - \gamma_k(t,x,a_i,z) \circ  \partial_xf(t,x) \right) \nu_k (dz)\label{eq:part2}.
\end{align}
Using LLN, we obtain
\begin{equation}
\eqref{eq:part1}=\partial_tf(t,x) + \tilde{b}(t, x, \bpi_{t,x})\circ\partial_x f(t,x) + \frac{1}{2}\tilde{\sigma}^2(t, x, \bpi_{t,x})\circ\partial^2_x f(t,x),\label{eq:expl-gen-diff}
\end{equation}
where
\begin{equation}
	\tilde{b}(t, x, \bpi_{t,x})\coloneqq\int_{\mathcal{A}} b(t, x, a) \bpi(a|t,x)da,\quad \tilde{\sigma}(t, x, \bpi_{t,x})\coloneqq\Big(\int_{\mathcal{A}} \sigma^2(t, x, a) \bpi(a|t,x)da\Big)^{1/2}.\label{eq:diff-expl-coeff}
\end{equation}
These ``exploratory" drift and diffusion coefficients are 
consistent with those in \cite{wang2020reinforcement}. It is thus tempting to think the exploratory jump coefficient $\tilde \gamma$ is similarly the average of $\gamma$ with respect to $\bpi$; but unfortunately it is generally not true. This in turn is one of the main distinctive features in studying RL for jump-diffusions.

We approach the problem by analyzing the integrals in  \eqref{eq:part2}.  Using the second-order Taylor expansion, the boundedness of $\partial^2_x f(t,x)$ for $x\in\mathbb{R}^d$ and condition (iii) of Assumption \ref{assump:SDE}, we obtain that for fixed $(t,x)$ and each $k$,
\begin{align}
	&\Big|\int_{\mathbb{R}} \big( f(t, x+ \gamma_k(t, x ,a, z)) - f(t,x) - \gamma_k(t,x,a,z) \circ \partial_xf(t,x) \big) \nu_k (dz)\Big|\\
	&\leq C \int_{\mathbb{R}}|\gamma_k(t,x,a,z)|^2\nu_k(dz)\leq C(1+|x|^2) \label{eq:bound-for-Fubini}
\end{align}
for some constant $C>0$, which is independent of $a$. It follows that
\begin{equation}
	\eqref{eq:part2}=\sum_{k=1}^\ell \int_{\mathcal{A}}\int_{\mathbb{R}} \big( f(t, x+ \gamma_k(t, x ,a, z)) - f(t,x) - \gamma_k(t,x,a,z) \circ  \partial_xf (t,x) \big) \nu_k (dz)\bpi(a | t,x)da.\label{eq:expl-gen-jump}
\end{equation}

Combining \eqref{eq:expl-gen-diff} and \eqref{eq:expl-gen-jump}, \rev{the infinitesimal behavior of the grid sample state process (at grid points) can be characterized by the operator $\mathcal{L}^{\bpi}$, which is given by the probability weighted average of the generator $\mathcal{L}^a$ of the classical controlled jump-diffusion, i.e.,
\begin{equation}\label{eq:expl-gen-1}
	\mathcal{L}^{\bpi}f(t,x) := \lim_{s\to 0}\frac{\mathbb{E}_{t,x}[f(t+s, X_{t+s}^{\bpi, \mathbb{S}})]-f(t,x)}{s} = \int_{\mathcal{A}}\mathcal{L}^a f(t,x)\bpi(a|t,x)da.
\end{equation}}

We now reformulate the integrals in \eqref{eq:part2} to convert them to the same form as \eqref{eq:gene1}. Note that
\begin{align}
	&\int_{\mathcal{A}}\int_{\mathbb{R}} \big( f(t, x+ \gamma_k(t, x ,a, z)) - f(t,x) - \gamma_k(t,x,a,z) \circ  \partial_x f(t,x) \big) \nu_k (dz)\bpi(a | t,x)da\\
	=&\int_{\mathbb{R}\times[0,1]^n} \big( f\left(t, x+ \gamma_k\left(t, x , G^{\bpi_{t,x}}(u), z\right)\right) - f(t,x) - \gamma_k\left(t,x,G^{\bpi_{t,x}}(u),z\right) \circ \partial_x f(t,x) \big) \nu_k (dz)du.
\end{align}
It follows that $\mathcal{L}^{\bpi}$ can be written as
\begin{align}
	&\mathcal{L}^{\bpi}f(t,x)= \partial_t f(t,x) + \tilde{b}(s, x, \bpi_{t,x})\circ\partial_x f(t,x) + \frac{1}{2}\tilde{\sigma}^2(s, x, \bpi_{t,x})\circ\partial_x^2 f(t,x)\\
	&+\sum_{k=1}^\ell\int_{\mathbb{R}\times[0,1]^n} \big(f\left(t, x+ \gamma_k\left(t, x , G^{\bpi_{t,x}}(u), z\right)\right) - f(t,x) - \gamma_k\left(t,x,G^{\bpi_{t,x}}(u),z\right) \circ \partial_x f(t,x) \big) \nu_k (dz)du. \label{eq:expl-gen-2}
\end{align}

For each $k=1,\cdots,\ell$, we can view $\nu_k(dz)du$ as the intensity measure of a new Poisson random measure, denoted by $N'_k(dt,dz,du)$, defined on the product space $[0,T]\times\mathbb{R}\times[0,1]^n$. We let $N'(dt,dz,du)=(N'_1(dt,dz_1,du),\cdots,N'_{\ell}(dt,dz_\ell,du))^\top$.\footnote{\rev{We assume the original probability space $(\Omega,\mathcal{F},\mathbb{P})$ is rich enough to support $N'$.}}


Based on \eqref{eq:expl-gen-2}, we formulate the exploratory state process as the solution to the following SDE:
\begin{align}
	d X^{\bpi}_s =&\ \tilde{b}(s, X^{\bpi}_{s-}, \bpi(\cdot|s,X^{\bpi}_{s-})) ds + \tilde{\sigma}(s, X^{\bpi}_{s-}, \bpi(\cdot|s,X^{\bpi}_{s-})) dW_s \\
	& + \int_{\mathbb{R}\times[0,1]^n} \gamma\left(s, X^{\bpi}_{s-}, G^{\bpi}(s,X^{\bpi}_{s-},u), z\right) \widetilde N'(ds, dz, du),\ X^{\bpi}_t=x,\ s \in [t, T],\label{eq:SDE-expl}
\end{align}
which we call the {\it exploratory L\'evy SDE}. The solution process, should it exist, is denoted by $\tilde{X}^{\bpi}$ and called the exploratory (state) process. \rev{It is a semimartingale by \eqref{eq:SDE-expl} and determined by three characteristics: the drift, the quadratic variation of the continuous local martingale, and the compensator of the random measure associated with the process's jumps; see \cite{jacod2013limit} for detailed discussions of semimartingales and their characteristics. Given $\tilde{X}_{s-}^{\bpi}=x$, the semimartingale characteristics of $\tilde{X}^{\bpi}$ over an infinitesimally small time interval $[s,s+ds]$ are given by the triplet $(\tilde{b}(s, x, \bpi_{s,x})ds, \tilde{\sigma}^2(s, x, \bpi_{s,x})ds, \sum_{k=1}^{\ell}\int_{[0,1]^n}\gamma_k(s, x, G^{\bpi}(s,x,u), z)du\cdot\nu_k(dz)ds)$, where the third characteristic is obtained by calculating $\mathbb{E}[\sum_{k=1}^\ell\gamma_k\left(s, x, G^{\bpi}(s,x,u), z\right)N_k(ds, dz, du)]$ for L\'evy jumps with size from $[z,z+dz]$. Using \eqref{eq:diff-expl-coeff}, we have
\begin{align}
	&\tilde{b}(s, x, \bpi_{s,x})ds = \int_{\mathcal{A}}b(s,x,a)\bpi(a|s,x)dads,\ \ \tilde{\sigma}^2(s, x, \bpi_{s,x})ds = \int_{\mathcal{A}}\sigma^2(s,x,a)\bpi(a|s,x)dads,\\
	&\sum_{k=1}^\ell\int_{[0,1]^n}\gamma_k(s, x, G^{\bpi}(s,x,u), z)du\cdot\nu_k(dz)ds = \sum_{k=1}^\ell\int_{\mathcal{A}}\gamma_k\left(s, x, a, z\right)\nu_k(dz)ds\cdot\bpi(a|s,x)da.
\end{align}
Thus, the semimartingale characteristics of the exploratory state process are the averages of those of the original controlled state process \eqref{eq:LevySDE} over action randomization.}

A technical yet foundational question is the well-posedness (i.e. existence and uniqueness of solution) of the exploratory SDE \eqref{eq:SDE-expl}, which we now address. For that we first specify the class of admissible strategies, which is the same as that considered in \cite{jia2022q} for pure diffusions.

\begin{definition}\label{def:policy}
	A policy $\bpi = \bpi(\cdot| \cdot, \cdot)$ is called admissible, if
	\begin{enumerate}[(i)]
		\item $\bpi(\cdot| t,x) \in \mathcal{P} (\mathcal{A})$,  $\text{supp}\bpi(\cdot| t,x) = \mathcal{A}$ for every $(t,x) \in [0, T] \times \mathbb{R}^d$ and $\boldsymbol{\pi} (a| t,x): (t,x,a) \in [0, T] \times   \mathbb{R}^d \times \mathcal{A} \rightarrow \mathbb{R}$ is measurable;
		
		\item $\bpi(a|t,x)$ is continuous in $(t,x)$, i.e.,  $\int_{\mathcal{A}}\left|\bpi(a|t,x)-\bpi(a|t',x')\right|da\to 0$ as $(t',x')\to (t,x)$. Furthermore, for any $K>0$, there is a constant $C_K>0$ independent of $(t,a)$ such that
		\begin{equation}
			\int_{\mathcal{A}}\left|\bpi(a|t,x)-\bpi(a|t,x')\right|da\leq C_K|x-x'|,\ \forall x,x'\in\mathbb{R}^d_K;
		\end{equation}
	
		\item $\forall (t,x)$, $\int_{\mathcal{A}} |\log \boldsymbol{\pi}(a|t,x)| \boldsymbol{\pi}(a |t,x) da \le C (1+ |x|^p)$ for some $p\geq 2$ and some positive constant $C$. \rev{Moreover, $\int_{\mathcal{A}} |a|^p\bpi(a|t,x) da \le C_p (1+ |x|^p)$ for any $p\geq 1$ and some positive constant $C_p$}.
	\end{enumerate}
\end{definition}

Next, we establish the well-posedness of \eqref{eq:SDE-expl} under any given admissible policy. The result of the next lemma regarding $\tilde b$ and $\tilde \sigma$ is provided in the proof of Lemma 2 in \cite{jia2022policyb}, which uses property (ii) of admissibility.

\begin{lemma}\label{lem:expl-diff-coeffs}
	Under Assumption \ref{assump:SDE}, for any admissible policy $\bpi$, the functions $\tilde{b}(t, x, \bpi_{t,x})$ and $\tilde{\sigma}(t, x, \bpi_{t,x})$ have the following properties:
	\begin{enumerate}[(i)]
		\item (local Lipschitz continuity) for $K>0$, there exists a constant $C_K>0$ such that $\forall t\in[0, T], (x,x')\in\mathbb{R}^d_K$,
		\begin{equation}
			|\tilde{b}(t,x,\bpi_{t,x})-\tilde{b}(t,x',\bpi_{t,x'})|+|\tilde{\sigma}(t,x,\bpi_{t,x})-\tilde{\sigma}(t,x',\bpi_{t,x'})|\leq C_K|x-x'|\ ;
		\end{equation}
		
		\item (linear growth in $x$) there exists a constant $C>0$ such that $\forall(t, x) \in[0, T] \times \mathbb{R}^d$,
		\begin{equation}
			|\tilde{b}(t,x,\bpi_{t,x})|+|\tilde{\sigma}(t,x,\bpi_{t,x})|\leq C(1+|x|).
		\end{equation}
	\end{enumerate}
\end{lemma}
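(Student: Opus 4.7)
The plan is to establish each of the two properties for $\tilde b$ first, then reduce the corresponding statements for $\tilde\sigma$ to an intermediate analysis of $\tilde\sigma^2(t,x,\bpi_{t,x}) = \int_{\mathcal{A}}\sigma\sigma^\top(t,x,a)\bpi(a|t,x)\,da$, from which the matrix square root is finally extracted.

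For the linear growth bound, I would directly estimate
\[
|\tilde b(t,x,\bpi_{t,x})| \le \int_{\mathcal{A}} |b(t,x,a)|\,\bpi(a|t,x)\,da \le C\int_{\mathcal{A}}(1+|x|+|a|)\bpi(a|t,x)\,da
\]
using Assumption \ref{assump:SDE}(iii), then invoke the $p=1$ case of the moment bound $\int_{\mathcal{A}}|a|\bpi(a|t,x)\,da \le C_1(1+|x|)$ from Definition \ref{def:policy}(iii) to conclude $|\tilde b(t,x,\bpi_{t,x})|\le C(1+|x|)$. The same calculation applied to $|\sigma|^2(t,x,a)$ combined with the $p=2$ moment bound yields $|\tilde\sigma^2(t,x,\bpi_{t,x})| \le C(1+|x|^2)$, which transfers to $|\tilde\sigma(t,x,\bpi_{t,x})|\le C(1+|x|)$ upon taking the matrix square root.

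For the local Lipschitz property on $\mathbb{R}^d_K$, the decomposition I would use is
\[
\tilde b(t,x,\bpi_{t,x}) - \tilde b(t,x',\bpi_{t,x'}) = \int_{\mathcal{A}}[b(t,x,a)-b(t,x',a)]\bpi(a|t,x)\,da + \int_{\mathcal{A}} b(t,x',a)[\bpi(a|t,x)-\bpi(a|t,x')]\,da.
\]
The first integral is bounded by $C_K|x-x'|\int_{\mathcal{A}}\bpi(a|t,x)\,da = C_K|x-x'|$ via Assumption \ref{assump:SDE}(ii). For the second, I would combine the linear growth $|b(t,x',a)|\le C(1+K+|a|)$ with the $L^1$-Lipschitz estimate $\int_{\mathcal{A}}|\bpi(a|t,x)-\bpi(a|t,x')|\,da \le C_K|x-x'|$ from Definition \ref{def:policy}(ii), reinforced by the moment control from Definition \ref{def:policy}(iii) — this is the step carried out in the proof of Lemma 2 in \cite{jia2022policyb}. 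The same decomposition applied entry-wise to $\tilde\sigma^2 - \tilde\sigma'^{\,2}$ gives its local Lipschitz continuity on $\mathbb{R}^d_K$; local Lipschitz continuity of $\tilde\sigma$ then follows via smoothness of the matrix square root map applied to $\tilde\sigma^2$, using the SVD-based definition given in the notation section.

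The main obstacle will be the second integral in the decomposition above. The assumed bound $\int_{\mathcal{A}}|\bpi(a|t,x)-\bpi(a|t,x')|\,da \le C_K|x-x'|$ controls only the plain total variation of $\bpi$ in $a$, whereas $|b(t,x',a)|$ may grow linearly in $|a|$, so a direct Cauchy--Schwarz splitting against the moment bound yields only a Hölder-$1/2$ estimate. Resolving this cleanly requires exploiting finer structure — either the specific parametric policy classes considered in \cite{jia2022policyb} (for which an explicit $L^1$-Lipschitz bound with a polynomial weight in $|a|$ is available), or interpreting Definition \ref{def:policy}(ii) together with (iii) as implicitly delivering the weighted estimate $\int_{\mathcal{A}}(1+|a|)|\bpi(a|t,x)-\bpi(a|t,x')|\,da \le C_K|x-x'|$ on $\mathbb{R}^d_K$. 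Once this weighted Lipschitz bound is in hand, the argument closes routinely, and the matching estimate for $\tilde\sigma$ follows by the same route applied to $\sigma\sigma^\top$.
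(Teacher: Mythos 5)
The paper does not actually prove this lemma: it only remarks that the result ``is provided in the proof of Lemma 2 in \cite{jia2022policyb}'', so there is no in-house argument to compare yours against. Your route --- integrating the pointwise bounds on $b$ and $\sigma\sigma^\top$ against $\bpi(\cdot|t,x)$ for the linear growth, and splitting the Lipschitz difference into a frozen-policy term plus a policy-variation term, working through $\tilde\sigma^2$ and then extracting the square root --- is exactly the standard argument that the cited proof carries out. The linear-growth part and the first half of the Lipschitz decomposition are complete and correct as written.

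The obstacle you flag in the second integral is genuine, and you have located it precisely: with $|b(t,x',a)|\le C(1+|x'|+|a|)$ on an unbounded $\mathcal{A}$, the unweighted bound $\int_{\mathcal{A}}|\bpi(a|t,x)-\bpi(a|t,x')|\,da\le C_K|x-x'|$ of Definition \ref{def:policy}-(ii), combined with the moment bounds of (iii), yields via Cauchy--Schwarz only a H\"older-$1/2$ modulus, not Lipschitz. This is a deficiency of the stated admissibility conditions (inherited from the cited reference) rather than of your reasoning; the fix is exactly what you propose, namely to read (ii) as the weighted total-variation estimate $\int_{\mathcal{A}}(1+|a|^p)|\bpi(a|t,x)-\bpi(a|t,x')|\,da\le C_{K,p}|x-x'|$, which the Gaussian policies actually used in Sections \ref{sec:mean-var} and \ref{sec:MV-hedge} do satisfy. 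There is, however, a second gap you gloss over: the map $A\mapsto\sqrt{A}$ is smooth only on positive \emph{definite} matrices and is merely $1/2$-H\"older near the boundary of the positive semidefinite cone (already in the scalar case $\sqrt{a}-\sqrt{b}=(a-b)/(\sqrt{a}+\sqrt{b})$ is not Lipschitz near $0$). So passing from local Lipschitz continuity of $\tilde\sigma^2$ to that of $\tilde\sigma$ on $\mathbb{R}^d_K$ requires a uniform nondegeneracy bound $\tilde\sigma^2(t,x,\bpi_{t,x})\succeq \epsilon_K I$ on the compact set, or some other structural input; without it the square-root step degrades the modulus just as the weighted-TV issue does.
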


We now establish analogous  properties for $\gamma(t,x,G^{\bpi_{t,x}}(u),z)$ in the following lemmas, whose proofs are relegated to the appendix.

\begin{lemma}[linear growth in $x$]\label{lem:expl-jump-LG}
Under Assumption \ref{assump:SDE}, for any admissible $\bpi$ and any $p\geq 2$, there exists a constant $C_p>0$ that can depend on $p$ such that $\forall(t, x) \in[0, T] \times \mathbb{R}^d$,
\begin{equation}
	\sum_{k=1}^\ell \int_{\mathbb{R}\times[0,1]^n}\left|\gamma_k\left(t,x,G^{\bpi_{t,x}}(u),z\right)\right|^p\nu_k(dz)du\leq C_p(1+|x|^p).
\end{equation}
\end{lemma}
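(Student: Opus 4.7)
The plan is to reduce the integral over $[0,1]^n$ to an integral over $\mathcal{A}$ against the policy density $\bpi(\cdot|t,x)$, apply Fubini, and then invoke the polynomial growth conditions from Assumption~\ref{assump:SDE}(iii) together with the moment bound in part (iii) of Definition~\ref{def:policy}.

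First, I would recall that $G^{\bpi_{t,x}}(U)$, for $U$ uniform on $[0,1]^n$, has distribution $\bpi(\cdot|t,x)$. Hence for any nonnegative measurable function $\varphi:\mathcal{A}\to\mathbb{R}$,
\begin{equation}
\int_{[0,1]^n}\varphi\bigl(G^{\bpi_{t,x}}(u)\bigr)\,du=\int_{\mathcal{A}}\varphi(a)\,\bpi(a|t,x)\,da.
\end{equation}
Applying this with $\varphi(a)=\int_{\mathbb{R}}|\gamma_k(t,x,a,z)|^p\nu_k(dz)$, which is measurable and finite by Assumption~\ref{assump:SDE}(iii), Tonelli's theorem lets me interchange the $z$- and $u$-integrals to obtain
\begin{equation}
\sum_{k=1}^\ell\int_{\mathbb{R}\times[0,1]^n}\bigl|\gamma_k(t,x,G^{\bpi_{t,x}}(u),z)\bigr|^p\nu_k(dz)\,du
=\int_{\mathcal{A}}\sum_{k=1}^\ell\int_{\mathbb{R}}|\gamma_k(t,x,a,z)|^p\nu_k(dz)\,\bpi(a|t,x)\,da.
\end{equation}

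Next I bound the inner object using condition (iii) of Assumption~\ref{assump:SDE}: there is $C_p>0$, independent of $(t,x,a)$, such that $\sum_{k=1}^\ell\int_{\mathbb{R}}|\gamma_k(t,x,a,z)|^p\nu_k(dz)\le C_p(1+|x|^p+|a|^p)$. Substituting this into the right-hand side yields
\begin{equation}
\sum_{k=1}^\ell\int_{\mathbb{R}\times[0,1]^n}\bigl|\gamma_k(t,x,G^{\bpi_{t,x}}(u),z)\bigr|^p\nu_k(dz)\,du
\le C_p\int_{\mathcal{A}}\bigl(1+|x|^p+|a|^p\bigr)\bpi(a|t,x)\,da.
\end{equation}
Since $\bpi(\cdot|t,x)$ is a probability density, the first two terms contribute $C_p(1+|x|^p)$. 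For the third, I invoke condition (iii) of Definition~\ref{def:policy}, which provides a constant $C_p'>0$ with $\int_{\mathcal{A}}|a|^p\bpi(a|t,x)\,da\le C_p'(1+|x|^p)$. Combining these gives the bound $C_p''(1+|x|^p)$ for a new constant $C_p''>0$, establishing the claim.

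The only step that requires any care is the change-of-variables identity together with Tonelli: one has to verify that the integrand is jointly measurable in $(u,z)$, which follows from measurability of $G^{\bpi}$ and of $\gamma_k$ in all arguments, and that the iterated integral is finite, which is exactly what Assumption~\ref{assump:SDE}(iii) delivers pointwise in $a$. Beyond that, the lemma is a direct consequence of the two polynomial growth estimates and should pose no substantive obstacle.
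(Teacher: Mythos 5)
Your proposal is correct and follows essentially the same route as the paper's proof: rewrite the $u$-integral as an integral over $\mathcal{A}$ against $\bpi(\cdot|t,x)$, apply the growth bound from Assumption~\ref{assump:SDE}(iii), and close with the moment condition $\int_{\mathcal{A}}|a|^p\bpi(a|t,x)\,da\le C_p(1+|x|^p)$ from Definition~\ref{def:policy}(iii). The extra care you take with Tonelli and joint measurability is a harmless elaboration of what the paper leaves implicit.
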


For the local Lipschitz continuity of $\gamma_k(t,x,G^{\bpi_{t,x}}(u),z)$, we make an additional  assumption.
\begin{assumption}\label{assump:jump}
	For $k=1,\cdots,\ell$, the following conditions hold.
	\begin{enumerate}[(i)]
		\item For any $K>0$ and any $p\geq 2$, there exists a constant $C_{K,p}>0$ that can depend on $K$ and $p$ such that
		\begin{equation}
			\int_{\mathbb{R}}\left|\gamma_{k}(t,x,a,z)-\gamma_{k}(t,x,a',z)\right|^p\nu_k(dz)\leq C_{K,p}|a-a'|^p,\quad \forall t\in[0,T], a,a'\in\mathcal{A}, x\in \mathbb{R}^d_K, z\in\mathbb{R}.
		\end{equation}
		
		\item For any $K>0$ and any $p\geq 2$, there exists a constant $C_{K,p}>0$ that can depend on $K$ and $p$ such that
		\begin{equation}\label{eq:G-LC}
			\int_{[0,1]^n}\left|G^{\bpi}(t,x,u)-G^{\bpi}(t,x',u)\right|^pdu\le C_{K,p}|x-x'|^p,\quad \forall t\in[0,T], {x, x' \in \mathbb{R}^d_K.}
		\end{equation}
	\end{enumerate}
\end{assumption}

For a stochastic feedback policy $\bpi_{t,x}\sim\mathcal{N}(\mu(t,x), A(t,x)A(t,x)^\top)$, we have $G^{\bpi}(t,x,u)=\mu(t,x) + A(t,x)\Phi^{-1}(u)$. Clearly, Assumption \ref{assump:jump}-(ii) holds provided that $\mu(t,x)$ and $A(t,x)$ are locally Lipschitz continuous in $x$.

\begin{lemma}[local Lipschitz continuity]\label{lem:expl-jump-LC} Under Assumptions \ref{assump:SDE} and \ref{assump:jump}, for any admissible policy $\bpi$, any $K>0$, and any $p\geq 2$, there exists a constant $C_{K,p}>0$ that can depend on $K$ and $p$ such that $\forall t\in[0, T], (x,x')\in\mathbb{R}^d_K$,
	\begin{equation}
		\sum_{k=1}^\ell \int_{\mathbb{R}\times[0,1]^n}\left|\gamma_k\left(t,x,G^{\bpi_{t,x}}(u),z\right)-\gamma_k\left(t,x',G^{\bpi_{t,x'}}(u),z\right)\right|^p\nu_k(dz)du\le C_{K,p}|x-x'|^p.
	\end{equation}
\end{lemma}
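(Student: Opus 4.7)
The plan is to estimate the difference $\gamma_k(t,x,G^{\bpi_{t,x}}(u),z)-\gamma_k(t,x',G^{\bpi_{t,x'}}(u),z)$ by inserting an intermediate term that isolates the dependence on $x$ through the first ``state'' slot of $\gamma_k$ from the dependence on $x$ through the sampling map $G^{\bpi}$. Specifically, I would add and subtract $\gamma_k(t,x',G^{\bpi_{t,x}}(u),z)$, then apply the elementary inequality $|a+b|^p\le 2^{p-1}(|a|^p+|b|^p)$ to write
\begin{align}
&\sum_{k=1}^\ell\int_{\mathbb{R}\times[0,1]^n}\bigl|\gamma_k(t,x,G^{\bpi_{t,x}}(u),z)-\gamma_k(t,x',G^{\bpi_{t,x'}}(u),z)\bigr|^p\nu_k(dz)du \\
&\le 2^{p-1}(I_1+I_2),
\end{align}
where $I_1$ collects terms in which only the state argument changes (from $x$ to $x'$) while the action $G^{\bpi_{t,x}}(u)$ is held fixed, and $I_2$ collects terms in which only the action changes (from $G^{\bpi_{t,x}}(u)$ to $G^{\bpi_{t,x'}}(u)$) while the state $x'$ is held fixed.

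For $I_1$, I would apply the second inequality in condition (ii) of Assumption~\ref{assump:SDE}. That inequality is stated for a fixed deterministic $a\in\mathcal{A}$ with a constant $C_{K,p}$ that does not depend on $a$; therefore, substituting $a=G^{\bpi_{t,x}}(u)\in\mathcal{A}$ and then integrating over $u\in[0,1]^n$ (which contributes a unit factor) immediately yields $I_1\le C_{K,p}|x-x'|^p$. For $I_2$, I would first fix $u$ and apply Assumption~\ref{assump:jump}-(i) to $\gamma_k$ with the two actions $G^{\bpi_{t,x}}(u)$ and $G^{\bpi_{t,x'}}(u)$ at state $x'\in\mathbb{R}^d_K$, obtaining a pointwise (in $u$) bound of the form $C_{K,p}|G^{\bpi_{t,x}}(u)-G^{\bpi_{t,x'}}(u)|^p$. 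Then I would integrate in $u$ and invoke Assumption~\ref{assump:jump}-(ii), i.e.\ \eqref{eq:G-LC}, to conclude that $I_2\le C_{K,p}C'_{K,p}|x-x'|^p$.

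Combining the two estimates and absorbing the constants into a single $C_{K,p}$ yields the claimed inequality. I do not expect any serious obstacle here; the proof is essentially a triangle-inequality decomposition followed by two applications of the Lipschitz assumptions, and the main technical point is simply to verify that the constants in Assumption~\ref{assump:SDE}-(ii) and Assumption~\ref{assump:jump}-(i) are uniform in the action argument, so that substituting the random action $G^{\bpi_{t,x}}(u)$ and integrating over $u$ preserves the bound. The use of condition (iii) of Definition~\ref{def:policy} is implicit through the measurability/integrability of $G^{\bpi}$, but no additional estimate is needed beyond the two assumptions already invoked.
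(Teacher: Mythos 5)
Your proposal is correct and follows essentially the same route as the paper: a triangle-inequality decomposition via an intermediate term (you insert $\gamma_k(t,x',G^{\bpi_{t,x}}(u),z)$ whereas the paper inserts the symmetric choice $\gamma_k(t,x,G^{\bpi_{t,x'}}(u),z)$), followed by Assumption~\ref{assump:SDE}-(ii) for the state-change piece (using that its constant is uniform in $a$) and Assumption~\ref{assump:jump}-(i) combined with \eqref{eq:G-LC} for the action-change piece. The only cosmetic difference is that the paper rewrites its state-change integral as an integral against $\bpi(a|t,x')da$ before applying Assumption~\ref{assump:SDE}-(ii), while you apply it pointwise in $u$; both yield the same bound.
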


With Lemmas \ref{lem:expl-diff-coeffs} to \ref{lem:expl-jump-LC}, we can now apply \cite[Theorem 3.2]{kunita2004stochastic} and \cite[Theorem 119]{rong2006theory} to obtain the well-posedness of \eqref{eq:SDE-expl} along with the moment estimate of its solution.
\begin{proposition}\label{prop:expl-SDE-wellposed}
	Under Assumptions \ref{assump:SDE} and \ref{assump:jump}, for any admissible policy $\bpi$, there exists a unique strong solution $\{\tilde{X}_t^{\bpi}, 0\leq t\leq T\}$ to the exploratory L\'evy SDE \eqref{eq:SDE-expl}. Furthermore, for any $p\geq 2$, there exists a constant $C_p>0$ such that
	\begin{equation}\label{eq:moment-expl}
		\mathbb{E}_{t,x}\Big[\sup_{t\le s\le T} |\tilde{X}_s^{\bpi}|^{p}\Big]\le C_p(1+|x|^{p}).
	\end{equation}
\end{proposition}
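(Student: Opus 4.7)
The plan is a verification argument: the three lemmas immediately preceding the proposition have been carefully designed to supply exactly the hypotheses (local Lipschitz continuity and linear growth) that the standard existence--uniqueness and moment results of \cite{kunita2004stochastic} and \cite{rong2006theory} require for L\'evy-driven SDEs with a general $\sigma$-finite jump intensity. So first I would identify the three coefficients of \eqref{eq:SDE-expl} as (a) the drift $\tilde b(s,x,\bpi_{s,x})$, (b) the diffusion $\tilde\sigma(s,x,\bpi_{s,x})$, and (c) the jump coefficient $\gamma_k(s,x,G^{\bpi}(s,x,u),z)$ integrated against the enlarged intensity measure $\nu_k(dz)\,du$ on $\mathbb{R}\times[0,1]^n$.

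Next I would state explicitly the list of conditions needed to apply \cite[Theorem 3.2]{kunita2004stochastic}: for each compact set $\mathbb{R}^d_K$ and each $p\geq 2$, (i) $\tilde b$ and $\tilde\sigma$ are locally Lipschitz and of linear growth in $x$ uniformly in $s$; (ii) the jump coefficient, when viewed as a function valued in $L^p(\nu_k(dz)\,du)$, is likewise locally Lipschitz and of linear growth in $x$; (iii) $\nu_k(dz)\,du$ is $\sigma$-finite and satisfies the L\'evy integrability condition (which follows from \eqref{eq:levy-integ-cond} for $\nu_k$ together with the boundedness of $[0,1]^n$). Properties (i) are delivered by Lemma \ref{lem:expl-diff-coeffs}, while the growth part of (ii) is Lemma \ref{lem:expl-jump-LG} and the Lipschitz part is Lemma \ref{lem:expl-jump-LC} under the extra Assumption \ref{assump:jump}. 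With these in hand, \cite[Theorem 3.2]{kunita2004stochastic} yields a unique $\{\mathcal{F}_s\}$-adapted c\`adl\`ag strong solution $\tilde X^{\bpi}$ on $[t,T]$.

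For the moment estimate \eqref{eq:moment-expl}, I would invoke \cite[Theorem 119]{rong2006theory} (equivalently, one can derive it directly by It\^o's formula applied to $|x|^p$, Burkholder--Davis--Gundy for the Brownian and compensated-jump martingales, and the Kunita inequality for the jump integrals, then close with Gr\"onwall). The relevant bound on the jump martingale,
\[
\mathbb{E}_{t,x}\!\left[\sup_{t\le s\le T}\Big|\int_t^{s}\!\!\int_{\mathbb{R}\times[0,1]^n}\!\gamma(r,\tilde X^{\bpi}_{r-},G^{\bpi}(r,\tilde X^{\bpi}_{r-},u),z)\widetilde N'(dr,dz,du)\Big|^{p}\right],
\]
is controlled via Kunita's inequality by $\mathbb{E}_{t,x}\!\int_t^T\!\sum_k\!\int|\gamma_k|^{p}\nu_k(dz)du\,ds$, which by Lemma \ref{lem:expl-jump-LG} is dominated by $C_p\int_t^T(1+\mathbb{E}_{t,x}|\tilde X^{\bpi}_s|^p)ds$, and similarly for the quadratic-variation bound with $p=2$ inside the Burkholder step; the diffusion and drift pieces are bounded using Lemma \ref{lem:expl-diff-coeffs}. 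A standard Gr\"onwall argument then gives \eqref{eq:moment-expl}.

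The only genuinely delicate issue, in my view, is making sure that the extended random measure $N'(dt,dz,du)$ fits the framework of the cited theorems: its compensator $\nu_k(dz)\,du\,dt$ lives on the product space $[0,T]\times\mathbb{R}\times[0,1]^n$ rather than on $[0,T]\times\mathbb{R}$, so I would briefly remark that the results of \cite{kunita2004stochastic} and \cite{rong2006theory} apply verbatim to Poisson random measures on any $\sigma$-finite mark space, and that the integrability of $\nu_k(dz)\,du$ reduces to that of $\nu_k$ thanks to the finite Lebesgue measure of $[0,1]^n$. Once this point is recorded, the proof reduces to citing the two theorems.
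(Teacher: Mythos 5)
Your proposal is correct and follows essentially the same route as the paper, which likewise establishes the local Lipschitz and linear-growth properties of the exploratory coefficients via Lemmas~\ref{lem:expl-diff-coeffs}--\ref{lem:expl-jump-LC} and then invokes \cite[Theorem 3.2]{kunita2004stochastic} and \cite[Theorem 119]{rong2006theory} for well-posedness and the moment estimate \eqref{eq:moment-expl}. Your additional remarks on the enlarged mark space $\mathbb{R}\times[0,1]^n$ and the Kunita/Gr\"onwall derivation of the moment bound are consistent elaborations of what the paper leaves implicit.
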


It should be noted that the conditions imposed in Assumptions \ref{assump:SDE} and \ref{assump:jump} are sufficient but not necessary for obtaining the well-posedness and moment estimate of the exploratory L\'evy SDE \eqref{eq:SDE-expl}. For a specific problem, weaker conditions may suffice for these results if we exploit special structures of the problem.

\gao{
\begin{remark}
   \cite{bender2024continuous} formulate the grid sample state process as an equation driven by certain random measures, and prove a limit theorem  for these random measures as the size of the sampling grid goes to zero.
 This limit theorem in turn motivates a limiting formulation of the grid sample state process, which is referred to as the grid-sampling limit SDE in their paper. Based on the discussion in Section 6.2 of \cite{bender2024continuous}, this limiting SDE turns out to have the same probability law as our exploratory SDE, even though it has a different representation.
\end{remark}
}



\subsection{Exploratory HJB equation}
\rev{Given the exploratory dynamic \eqref{eq:SDE-expl}, we now define the (exploratory) value function for any given admissible stochastic policy $\bpi$. To encourage exploration, we follow \cite{wang2020reinforcement} to add an entropy regularizer to the running reward of the orginal control problem, and set
\begin{align}
	J(t, x; \bpi) & = \mathbb{E}_{t,x} \bigg[  \int_t^T  e^{-\beta (s-t)} \int_{\mathcal{A}}\big( r(s, \tilde X_s^{\bpi}, a) - \theta \log\bpi(a|s, \tilde X_{s-}^{\bpi})\big)\bpi(a|s, \tilde X_{s-}^{\bpi})da ds\\
	&\quad\quad\quad~ +  e^{-\beta (T-t) } h( \tilde X_T^{\bpi} )\bigg], \label{eq:J2}
\end{align}
where the expectation is conditioned on $\tilde X_t^{\bpi} =x$. Here $\theta>0$ is the {\it temperature parameter} that controls the level of exploration. The function $J(\cdot, \cdot; \bpi)$ is called the (exploratory) value function of the policy $\bpi$. The goal of RL is to find the policy that maximizes the value function among admissible policies specified in Definition \ref{def:policy}. }

Under Assumption \ref{assump:SDE} and using the admissibility of $\bpi$ and \eqref{eq:moment-expl}, it is easy to see that $J$ has polynomial growth in $x$. We provide the Feynman--Kac formula for this function in Lemma \ref{lem:PK} by working with the representation \eqref{eq:J2}. In the proof, we consider the finite and infinite jump activity cases separately because special care is needed in the latter. We revise Assumption \ref{assump:SDE} by adding one more condition for this case.

\begin{assumptionp}{\ref*{assump:SDE}$'$}\label{assump:SDE-1}
	Conditions (i) to (iv) in Assumption \ref{assump:SDE} hold. We further assume condition (v): if $\int_{\mathbb{R}}\nu_k(dz)=\infty$, then $|\gamma_k(t,x,a,z)|$ is bounded for any $|z|\leq 1$, $t\in[0,T]$, $x\in\mathbb{R}_K^d$ and $a\in\mathcal{A}$.
\end{assumptionp}

\textit{For Lemma \ref{lem:PK}, Lemma \ref{lem:expHJB} and Theorem \ref{thm:PI}, we impose Assumption \ref{assump:SDE-1} and assume that the exploratory SDE \eqref{eq:SDE-expl} is well-posed with the moment estimate \eqref{eq:moment-expl}}. For simplicity, we do not explicitly mention these assumptions in the statement of the results.

\begin{lemma}\label{lem:PK}
		Given an admissible stochastic policy $\bpi$, suppose there exists a solution $ \phi\in C^{1,2}([0,T)\times\mathbb{R}^d)\cap C([0,T]\times\mathbb{R}^d)$ to the following partial integro-differential equation (PIDE): $\forall (t,x) \in [0, T)\times\mathbb{R}^d$,
	\begin{align}
		\partial_t \phi(t,x) +   \int_{\mathcal{A}} \big(H(t,x, a, \partial_x\phi, \partial_x^2\phi, \phi) - \theta \log \bpi(a | t, x)\big) \bpi(a | t, x) da  - \beta \phi(t, x) = 0, \label{eq:FK}
	\end{align}
	with terminal condition $\phi(T, x) = h (x)$, $x\in \mathbb{R}^d$. Moreover, for some $p\geq 2$, $\phi$ satisfies
	\begin{equation}
		|\phi(t, x)|\leq C(1+|x|^p),\ \forall (t,x)\in[0,T]\times\mathbb{R}^d.\label{eq:VF-PolyG}
	\end{equation}
	Then $\phi$ is the value function of the policy $\boldsymbol{\pi}$, i.e. $J(t, x; \boldsymbol{\pi})=\phi(t,x)$.
\end{lemma}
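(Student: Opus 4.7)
The plan is to carry out a verification argument via Itô's formula applied to $e^{-\beta(s-t)}\phi(s,\tilde X_s^{\bpi})$ along the exploratory trajectory solving \eqref{eq:SDE-expl}. My first step is to rewrite the PIDE \eqref{eq:FK} in a form that matches the drift coefficient of this process. From the definition of the Hamiltonian \eqref{eq:Hal-1} and the classical generator \eqref{eq:gene1}, one has the identity $H(t,x,a,\partial_x\phi,\partial_x^2\phi,\phi) = r(t,x,a) + \mathcal{L}^a\phi(t,x) - \partial_t\phi(t,x)$. Substituting this into \eqref{eq:FK} and using \eqref{eq:expl-gen-1} to collapse the mean with respect to $\bpi$ of $\mathcal{L}^a\phi$ into $\mathcal{L}^{\bpi}\phi$, the PIDE becomes
\begin{equation}
\mathcal{L}^{\bpi}\phi(t,x) - \beta\phi(t,x) \;=\; -\int_{\mathcal{A}}\bigl[r(t,x,a) - \theta\log\bpi(a|t,x)\bigr]\bpi(a|t,x)\,da. \label{eq:pkplan}
\end{equation}

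Next, I apply the It\^o formula \eqref{eq:Ito-JD} to $g(s,y):=e^{-\beta(s-t)}\phi(s,y)$ along $\tilde X^{\bpi}$. Since $\mathcal{L}^{\bpi}$ is precisely the generator of the exploratory SDE \eqref{eq:SDE-expl} (as derived in \eqref{eq:expl-gen-2}), the bounded-variation part of $dg(s,\tilde X_s^{\bpi})$ equals $e^{-\beta(s-t)}[\mathcal{L}^{\bpi}\phi - \beta\phi](s,\tilde X_s^{\bpi})ds$, which by \eqref{eq:pkplan} equals $-e^{-\beta(s-t)}\int_{\mathcal{A}}[r-\theta\log\bpi]\bpi\,da\,ds$. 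Integrating from $t$ to $T$, using $\phi(T,\cdot)=h(\cdot)$, and taking $\mathbb{E}_{t,x}$ gives exactly $\phi(t,x)=J(t,x;\bpi)$, \emph{provided} the $dW$- and $\widetilde N'$-integrals are true martingales rather than merely local ones.

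The key technical obstacle is verifying the martingale property. I would localize with stopping times $\tau_R:=\inf\{s\ge t:|\tilde X_s^{\bpi}|\ge R\}\wedge (T-\varepsilon)$, so that on $[t,\tau_R]$ the trajectory stays inside $\mathbb{R}^d_R$ where $\partial_x\phi$ and $\partial_x^2\phi$ are bounded (being continuous on the compact $[0,T-\varepsilon]\times\mathbb{R}^d_R$), making the stopped stochastic integrals bona fide martingales. The diffusion part is controlled by Lemma~\ref{lem:expl-diff-coeffs}(ii). For the compensated Poisson integral
\[
\sum_{k=1}^{\ell}\int_t^{\cdot}\!\!\int_{\mathbb{R}\times[0,1]^n}\!\bigl[\phi(s,\tilde X_{s-}^{\bpi}+\gamma_k(s,\tilde X_{s-}^{\bpi},G^{\bpi_{s,\tilde X_{s-}^{\bpi}}}(u),z))-\phi(s,\tilde X_{s-}^{\bpi})\bigr]\widetilde N_k'(ds,dz,du),
\]
square-integrability of the integrand is needed. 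This is where the finite- vs.\ infinite-activity dichotomy matters: in the finite-activity case the integrand is bounded on $[t,\tau_R]$ using polynomial growth of $\phi$ together with Lemma~\ref{lem:expl-jump-LG}; in the infinite-activity case I split the jump domain into $\{|z|\le 1\}$ and $\{|z|>1\}$. On the large-jump set, the bound $|\phi(s,y+\gamma_k)-\phi(s,y)|\le|\phi(s,y+\gamma_k)|+|\phi(s,y)|$ with \eqref{eq:VF-PolyG} and Lemma~\ref{lem:expl-jump-LG} suffice. On the small-jump set, a second-order Taylor expansion analogous to \eqref{eq:bound-for-Fubini} bounds the bracket by $C|\gamma_k|^2$ where $C$ depends on the sup of $|\partial_x^2\phi|$ over the localized region, and the boundedness of $|\gamma_k|$ for $|z|\le 1$ guaranteed by condition (v) of Assumption~\ref{assump:SDE-1}$'$ ensures the Taylor remainder is integrable against $\nu_k$ on $\{|z|\le 1\}$ uniformly in $u$.

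Finally, passing $R\to\infty$ and $\varepsilon\downarrow 0$ uses the continuity of $\phi$ on $[0,T]\times\mathbb{R}^d$, the polynomial growth \eqref{eq:VF-PolyG}, the moment estimate \eqref{eq:moment-expl}, the admissibility bound on $\int|\log\bpi|\bpi\,da$ (condition (iii) of Definition~\ref{def:policy}), and the polynomial growth of $r$ and $h$ from Assumption~\ref{assump:SDE-1}$'$(iv), invoking dominated convergence. The main obstacle, as noted, is the martingale justification in the infinite-activity regime; everything else is routine verification-theorem bookkeeping enabled by the moment estimate \eqref{eq:moment-expl} and the lemmas already established in the paper.
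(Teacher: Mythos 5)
Your proposal is correct and follows essentially the same route as the paper's proof: It\^o's formula along the exploratory SDE, localization by stopping times, the finite-/infinite-activity dichotomy with the small-/large-jump split and Assumption \ref{assump:SDE-1}-(v) controlling the small jumps, and dominated convergence via the moment estimate \eqref{eq:moment-expl}. One small imprecision: the integrand of the compensated Poisson integral is $\phi(s,x+\gamma_k)-\phi(s,x)$ with no first-order term subtracted, so on the small-jump set it is bounded by $C|\gamma_k|$ via the mean-value theorem (as in the paper) rather than by $C|\gamma_k|^2$; squaring that bound still yields the required integrability against $\nu_k$, so the argument goes through unchanged.
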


For ease of  presentation, we henceforth assume the value function $J(\cdot, \cdot; \bpi) \in C^{1,2}([0,T)\times\mathbb{R}^d)\cap C([0,T]\times\mathbb{R}^d)$ for any admissible stochastic policy $\bpi$.

\begin{remark}
The conclusion in Lemma \ref{lem:PK} still holds if we assume $\partial_x\phi(t,x)$ has polynomial growth in $x$ instead of Assumption~\ref{assump:SDE-1}-(v).
\end{remark}

Next, we consider the {\it optimal} value function defined by
\begin{align} \label{eq:Jstar}
J^*(t, x) = \sup_{\bpi\in\bPi} J(t, x; \bpi),
\end{align}
where $\bPi$ is the class of admissible strategies. The following result characterizes $J^*$ and the optimal stochastic policy through the so-called {\it exploratory HJB equation}.

\begin{lemma} \label{lem:expHJB}
Suppose there exists a solution $ \psi \in C^{1,2}([0,T)\times\mathbb{R}^d)\cap C([0,T]\times\mathbb{R}^d)$ to the exploratory HJB equation: 
\begin{align} \label{eq:HJB-explore}
	\partial_t \psi(t,x)   +  \sup_{\bpi\in \mathcal{P} (\mathcal{A})} \int_{\mathcal{A}} \big( H(t,x, a, \partial_x\psi,  \partial_x^2\psi, \psi) - \theta \log \boldsymbol{\pi}(a|t,x)\big) \boldsymbol{\pi}(a|t,x) da  - \beta \psi (t, x) = 0,
\end{align}
with the terminal condition $\psi(T, x)  = h (x)$,  where $H$ is the Hamiltonian defined in \eqref{eq:Hal-1}. Moreover, for some $p\geq 2$, $\psi$ satisfies
\begin{equation}
	|\psi(t, x)|\leq C(1+|x|^p),\ \forall (t,x)\in[0,T]\times\mathbb{R}^d,\label{eq:OVF-PolyG}
\end{equation}
and it holds that
\begin{equation}
	\int_{\mathcal{A}} \exp \Big(\frac{1}{\theta} H(t,x, a, \partial_x\psi,  \partial_x^2\psi, \psi)\Big) da<\infty.
\end{equation}
Then, the Gibbs measure or Boltzman distribution
\begin{align}\label{eq:opt-pol}
	{\bpi}^*(a| t, x) \propto \exp \Big(\frac{1}{\theta} H(t,x, a, \partial_x\psi,  \partial_x^2\psi, \psi)\Big)
\end{align}
 is the optimal stochastic policy and $J^*(t,x)=\psi(t,x)$ provided that $\boldsymbol{\pi}^*$ is admissible.
\end{lemma}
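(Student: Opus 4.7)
The plan is to proceed by first resolving the pointwise inner supremum in \eqref{eq:HJB-explore}, then invoking Lemma~\ref{lem:PK} for the equality claim, and finally running a verification inequality to show that $\psi$ dominates every other value function.

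For the pointwise optimization, treating $H(t,x,a,\partial_x\psi,\partial_x^2\psi,\psi)$ as a fixed function of $a$ and maximizing $\int_{\mathcal{A}} (H - \theta\log\bpi)\bpi\,da$ over $\bpi\in\mathcal{P}(\mathcal{A})$ by standard entropy-regularized variational calculus (the objective is strictly concave in $\bpi$), the unique maximizer is the Gibbs density~\eqref{eq:opt-pol}; the finiteness of $\int_{\mathcal{A}} \exp(H/\theta)\,da$ guarantees that $\bpi^*$ is a well-defined probability density, and the supremum equals $\theta\log\int_{\mathcal{A}} \exp(H/\theta)\,da$. Substituting $\bpi^*$ back into \eqref{eq:HJB-explore} reduces it to the Feynman--Kac PIDE \eqref{eq:FK} for the policy $\bpi^*$, while \eqref{eq:OVF-PolyG} is precisely condition~\eqref{eq:VF-PolyG}; hence Lemma~\ref{lem:PK} immediately yields $\psi(t,x) = J(t,x;\bpi^*)$ provided $\bpi^*$ is admissible.

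To complete the proof, I would establish the inequality $\psi(t,x) \geq J(t,x;\bpi)$ for every admissible $\bpi\in\bPi$ by applying It\^o's formula \eqref{eq:Ito-JD} to $e^{-\beta(s-t)}\psi(s,\tilde X_s^{\bpi})$ along the exploratory SDE \eqref{eq:SDE-expl}. Using the identity $H = r + \mathcal{L}^a\psi - \partial_t\psi$ and the fact that averaging $\mathcal{L}^a\psi$ against $\bpi$ recovers the exploratory generator applied to $\psi$, the finite-variation part of the differential equals
\begin{equation*}
e^{-\beta(s-t)}\Big[\partial_t\psi + \int_{\mathcal{A}} H\,\bpi\,da - \beta\psi - \int_{\mathcal{A}} r\,\bpi\,da\Big]ds.
\end{equation*}
The HJB inequality gives $\partial_t\psi + \int_{\mathcal{A}} (H - \theta\log\bpi)\,\bpi\,da - \beta\psi \leq 0$, so this drift is bounded above by $-e^{-\beta(s-t)}\int_{\mathcal{A}} (r - \theta\log\bpi)\,\bpi\,da\,ds$. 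Integrating from $t$ to $T$, taking conditional expectation given $\tilde X_t^{\bpi}=x$, and using the terminal condition $\psi(T,\cdot) = h$ then delivers $\psi(t,x) \geq J(t,x;\bpi)$. Combined with the previous step, this yields $J^*(t,x) = \psi(t,x)$.

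The main technical hurdle is the one familiar from jump-diffusion verification: justifying that the stochastic integrals against $dW_s$ and the compensated Poisson measure $\widetilde{N}'(ds,dz,du)$ are true martingales so that their expectations vanish. This requires $L^p$ control of $\partial_x\psi$ and of the jump increments $\psi(t,x+\gamma_k) - \psi(t,x)$, obtained from the polynomial growth~\eqref{eq:OVF-PolyG} together with the moment bound~\eqref{eq:moment-expl}. In the infinite-activity case one must split the jump integral at $|z|\leq 1$ and apply a second-order Taylor expansion invoking Assumption~\ref{assump:SDE-1}-(v), exactly as in the proof of Lemma~\ref{lem:PK}; a standard localization via stopping times $\tau_n = \inf\{s:|\tilde X_s^{\bpi}|\geq n\}\wedge T$ followed by $n\to\infty$ disposes of any residual integrability issues.
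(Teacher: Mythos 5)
Your proposal is correct and follows essentially the same route as the paper's proof: identify the Gibbs density as the unique maximizer of the entropy-regularized inner supremum, invoke Lemma~\ref{lem:PK} to get $\psi = J(\cdot,\cdot;\bpi^*)$, and run the It\^o/verification inequality (with the same localization and finite-/infinite-activity jump estimates as in the proof of Lemma~\ref{lem:PK}) to show $\psi \geq J(\cdot,\cdot;\bpi)$ for every admissible $\bpi$. The only cosmetic difference is that you spell out the variational step and the verification drift computation explicitly, whereas the paper cites Lemma~13 of \cite{jia2022q} and refers back to the argument of Lemma~\ref{lem:PK}.
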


Plugging the optimal stochastic policy \eqref{eq:opt-pol} into Eq.~\eqref{eq:HJB-explore} to remove the supremum operator, we obtain the following nonlinear PIDE for the optimal value function $J^*$: $\forall (t,x) \in [0, T)\times\mathbb{R}^d$,
\begin{equation} \label{eq:HJB-explore2}
\partial_t J^*(t,x)   + \theta \log \bigg(\int_{\mathcal{A}} \exp \Big(\frac{1}{\theta} H(t,x, a, \partial_xJ^*,  \partial_x^2J^*, J^*)\Big)  da \bigg)  - \beta J^* (t, x) = 0,
\end{equation}
with terminal condition $J^*(T,x)=h(x)$, $x\in\mathbb{R}^d$.

\gao{
\begin{remark}
The optimal value function $ J^*$ in \eqref{eq:HJB-explore2} for the exploratory control problem is expected to converge to the optimal value function $V^*$ for the classical control problem \eqref{eq:FH-control} when $\theta \rightarrow 0$ under certain technical conditions. In the case of pure diffusions without jumps, this has been proved in \cite{tang2022exploratory} for infinite-horizon problems. Establishing such a result for general jump-diffusions necessitates a careful analysis of the dependency of PIDE \eqref{eq:HJB-explore2} on $\theta$, which falls beyond the scope of this paper.
 However, we will see that in each of the two applications considered in this paper (see Sections~\ref{sec:mean-var} and \ref{sec:MV-hedge}), the optimal value function $J^*$ of the exploratory problem indeed converges to the classical optimal value function $V^*$  as $\theta \rightarrow 0$,  due to the convergence of the optimal stochastic feedback policy to the optimal deterministic feedback policy.
\end{remark}
}
\section{q-Learning Theory} \label{sec:qlearn}
\subsection{q-function and policy improvement}

We present the q-learning theory for jump-diffusions that includes both policy evaluation and policy improvement, now that the exploratory formulation has been set up. The theory can be developed similarly to \cite{jia2022q,jia2025erratum}; so we will just highlight the main differences in the analysis, skipping the parts that are similar.\footnote{The q-learning theory for pure diffusions is established originally in \cite{jia2022q}. In the wake of the observations of the aforementioned measurability issue, an erratum that works with grid sample state processes is put forth in \cite{jia2025erratum}.}

\begin{definition}\label{eq：q-def}
The q-function of the problem \eqref{eq:LevySDE}--\eqref{eq:FH-control} associated with a given policy $\boldsymbol{\pi} \in \boldsymbol{\Pi}$ is defined by
\begin{align}
	q(t,x, a; \boldsymbol{\pi}) & =
	\partial_t J(t, x; \boldsymbol{\pi}) +   H(t,x, a, \partial_xJ,  \partial_x^2J, J) - \beta J(t, x; \boldsymbol{\pi}),\;\;(t,x,a) \in [0, T] \times \mathbb{R}^d \times \mathcal{A},
\end{align}
where $J$ is given in \eqref{eq:J2} and the Hamiltonian function $H$ is defined in \eqref{eq:Hal-1}.
\end{definition}

It is an immediate consequence of Lemma~\ref{lem:PK} that the q-function satisfies
\begin{align}\label{eq:q-eq}
\int_{\mathcal{A}} \big(q(t,x, a; \boldsymbol{\pi}) - \theta \log \boldsymbol{\pi}(a |t,x)\big)\boldsymbol{\pi}(a |t,x) da =0,\;\;\forall(t,x) \in [0, T] \times \mathbb{R}^d.
\end{align}

The following  policy improvement theorem can be proved similarly as in \cite[Theorem 2]{jia2022q} by using the arguments in the proof of Lemma \ref{lem:PK}.
\begin{theorem} [Policy Improvement] \label{thm:PI}
For any given $\boldsymbol{\pi} \in \boldsymbol{\Pi}$, define $$\boldsymbol{\pi}'(\cdot | t, x) \propto \exp \Big(\frac{1}{\gamma} H\big(t,x, \cdot, \partial_xJ(t,x; \boldsymbol{\pi} ),  \partial_x^2J(t,x; \boldsymbol{\pi}), J(t, \cdot; \boldsymbol{\pi})\big) \Big)  \propto \exp \Big(\frac{1}{\gamma} q(t,x, \cdot; \boldsymbol{\pi})\Big).$$ If $\boldsymbol{\pi}' \in \boldsymbol{\Pi},$ then
\begin{align}\label{eq:PI}
	J(t,x, \boldsymbol{\pi}' ) \ge J(t,x, \boldsymbol{\pi} ).
\end{align}
Moreover, if the following map
\begin{align}
	\mathcal{I} ( \boldsymbol{\pi} ) & = \frac{\exp(\frac{1}{\theta} H(t,x, \cdot, \partial_xJ(t,x; \boldsymbol{\pi} ),  \partial_{x}^2J(t,x; \boldsymbol{\pi}), J(t, \cdot; \boldsymbol{\pi})) ) }{\int_{\mathcal{A}} \exp (\frac{1}{\theta} H(t,x, a, \partial_xJ(t,x; \boldsymbol{\pi} ),  \partial_x^2J(t,x; \boldsymbol{\pi}), J(t, \cdot; \boldsymbol{\pi}))) da}, \quad \boldsymbol{\pi} \in \boldsymbol{\Pi} \\
	& = \frac{\exp (\frac{1}{\theta}  q(t,x, \cdot; \boldsymbol{\pi})) }{\int_{\mathcal{A}} \exp(\frac{1}{\theta} q(t,x, a; \boldsymbol{\pi})) da}
\end{align}
has a fixed point $\boldsymbol{\pi}^*$, then $\boldsymbol{\pi}^*$ is an optimal policy.
\end{theorem}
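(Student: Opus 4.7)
The plan is to compare $J(t,x;\bpi')$ and $J(t,x;\bpi)$ by applying the It\^o formula \eqref{eq:Ito-JD} to the process $e^{-\beta(s-t)}J(s,\tilde X_s^{\bpi'};\bpi)$ along the exploratory state process $\tilde X^{\bpi'}$ driven by the new policy, and then invoking the Feynman--Kac PIDE \eqref{eq:FK} that Lemma~\ref{lem:PK} provides for $J(\cdot,\cdot;\bpi)$. Using the generator identity \eqref{eq:expl-gen-1} together with the definition of $H$ in \eqref{eq:Hal-1}, the drift of this process is seen to equal $e^{-\beta(s-t)}\bigl[\partial_s J - \beta J + \int_{\mathcal{A}}(H-r)\bpi'(a|s,\cdot)\,da\bigr]$, where all derivatives of $J=J(\cdot,\cdot;\bpi)$ are evaluated at $(s,\tilde X_{s-}^{\bpi'})$. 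Substituting $\partial_s J - \beta J = -\int_{\mathcal{A}}(H-\theta\log\bpi)\bpi(a|s,\cdot)\,da$ from \eqref{eq:FK}, integrating from $t$ to $T$, taking conditional expectation given $\tilde X_t^{\bpi'}=x$, and recombining with the running-reward definition of $J(t,x;\bpi')$, I expect to obtain the identity
\begin{align*}
J(t,x;\bpi') - J(t,x;\bpi) = \mathbb{E}_{t,x}\!\int_t^T e^{-\beta(s-t)}\bigl[F_{\bpi'}(s,\tilde X_{s-}^{\bpi'}) - F_{\bpi}(s,\tilde X_{s-}^{\bpi'})\bigr]ds,
\end{align*}
where, with $J = J(\cdot,\cdot;\bpi)$ held fixed, $F_{\tilde{\bpi}}(s,x) := \int_{\mathcal{A}}(H(s,x,a,\partial_x J,\partial_x^2 J, J) - \theta\log\tilde{\bpi}(a|s,x))\tilde{\bpi}(a|s,x)\,da$.

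The inequality \eqref{eq:PI} then follows from a pointwise Gibbs-maximization argument. For each $(s,x)$, the exponential tilt $\bpi'(\cdot|s,x) \propto \exp(H/\theta)$ is the unique maximizer of $\tilde{\bpi} \mapsto F_{\tilde{\bpi}}(s,x)$ over densities on $\mathcal{A}$, and a direct computation gives $F_{\bpi'}(s,x) - F_{\bpi}(s,x) = \theta\,\mathrm{KL}\!\left(\bpi(\cdot|s,x)\,\Vert\,\bpi'(\cdot|s,x)\right) \ge 0$, producing a nonnegative integrand and hence the desired monotonicity.

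The hard part will be justifying that the stochastic integrals produced by the It\^o expansion---the Brownian piece involving $\partial_x J\cdot\tilde{\sigma}(s,\tilde X_{s-}^{\bpi'},\bpi'_{s,\cdot})$ and the compensated Poisson piece against $\widetilde N'(ds,dz,du)$ whose integrand is $J(s,\tilde X_{s-}^{\bpi'} + \gamma_k) - J(s,\tilde X_{s-}^{\bpi'})$---are genuine martingales with zero expectation, rather than merely local martingales. I would control the $L^2$-norms of these integrands using the polynomial growth bound on $J$ from Lemma~\ref{lem:PK}, polynomial growth of its spatial derivatives (standard under the assumed $C^{1,2}$-regularity), the linear-growth estimates of Lemma~\ref{lem:expl-diff-coeffs} and Lemma~\ref{lem:expl-jump-LG} applied to $\bpi'$, and the uniform moment bound \eqref{eq:moment-expl} for $\tilde X^{\bpi'}$. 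In the infinite-activity jump regime, the small-jump contribution must additionally be handled by a truncation argument using Assumption~\ref{assump:SDE-1}-(v), paralleling the treatment in the proof of Lemma~\ref{lem:PK}.

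For the fixed-point statement, if $\bpi^* = \mathcal{I}(\bpi^*)$ then $\bpi^*(\cdot|t,x)$ is exactly the Gibbs maximizer of $\tilde{\bpi}\mapsto\int_{\mathcal{A}}(H(t,x,a,\partial_x J^{\bpi^*},\partial_x^2 J^{\bpi^*}, J^{\bpi^*}) - \theta\log\tilde{\bpi})\tilde{\bpi}\,da$ with $J^{\bpi^*}:=J(\cdot,\cdot;\bpi^*)$. Substituting this maximization identity into the Feynman--Kac PIDE \eqref{eq:FK} for $J^{\bpi^*}$ converts it into the exploratory HJB equation \eqref{eq:HJB-explore}, with terminal data $h$ and polynomial growth inherited from \eqref{eq:VF-PolyG}. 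Lemma~\ref{lem:expHJB} will then identify $J^{\bpi^*} = J^*$ and certify $\bpi^*$ as an optimal policy.
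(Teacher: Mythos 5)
Your proposal is correct and follows essentially the route the paper intends: the paper proves Theorem~\ref{thm:PI} by invoking the argument of Theorem 2 in \cite{jia2022q} (It\^o expansion of $e^{-\beta s}J(s,\tilde X_s^{\bpi'};\bpi)$, substitution of the Feynman--Kac PIDE \eqref{eq:FK}, and the pointwise Gibbs/KL maximization) combined with the localization and finite-/infinite-activity truncation arguments from the proof of Lemma~\ref{lem:PK} to kill the Brownian and compensated-Poisson martingale terms, which is precisely your plan. Your treatment of the fixed-point claim via Lemma~\ref{lem:expHJB} also matches the paper's intended argument, so no gaps to report.
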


\subsection{Martingale characterization of the q-function}
\rev{In this section, we derive the martingale characterization of the q-function associated with a policy $\boldsymbol{\pi} \in \boldsymbol{\Pi}$, which will be used for on-policy and off-policy learning.  We cannot work with the exploratory process directly because it is unobservable (i.e. it does not constitute {\it data}). We consider the grid sample state process instead, which is what one obtains from the environment by following a stochastic policy.}

\rev{For any $t \in [0, T)$, in the following we consider a grid on $[t, T]$, which will still be denoted by $\mathbb{S}$ for notational simplicity (though it depends on $t$). That is, $\mathbb{S}= \{t = t_0 < t_1 < \ldots < t_N = T\}$ for some $N$.
The grid sampling process on $[t, T]$ can be defined analogously as in \eqref{eq:grid-sampling}:
\begin{align} \label{eq:grid-sampling-t}
U_s^{\mathbb{S}} = U_1  \cdot 1_{[t_0, t_1]}(s) + \sum_{i=2}^N U_i \cdot 1_{(t_{i-1}, t_i]}(s), \quad s \in [t,T],
\end{align}
where the first uniform random vector $U_1$ is sampled at $t = t_0.$
The grid sample state process $X^{\boldsymbol{\pi, \mathbb{S} }} = \{X^{\boldsymbol{\pi, \mathbb{S}}}_s: t \le s \le T\}$ starts from $t$ and satisfies the SDE \eqref{eq:samplestate}. The action process $a^{\boldsymbol{\pi, \mathbb{S} }} = \{a^{\boldsymbol{\pi, \mathbb{S}}}_s: t \le s \le T\}$ satisfies $a^{\bpi,\mathbb{S}}_{s}=a_{t_n}^{\bpi,\mathbb{S}}$ for $s \in (t_n,t_{n+1}]$, where $a_{t_n}^{\bpi,\mathbb{S}}\coloneqq G^{\bpi}(t_n, X^{\bpi, \mathbb{S}}_{t_n}, U_{n+1})$ is applied over $(t_n,t_{n+1}]$.
With a slight abuse of notations,
we still denote by $\mathcal{G}^{\mathbb{S}}$ the right-continuous, augmented version of the filtration generated by the Brownian motion $W$,  L\'evy processes $L_1,\cdots,L_{\ell}$, and the grid sampling process on $[t, T]$. }

\gao{We need the following result for the grid sample state process $X^{\boldsymbol{\pi, \mathbb{S} }} = \{X^{\boldsymbol{\pi, \mathbb{S}}}_s: t \le s \le T\}$.
\begin{proposition} \label{prop:SDE-sampled}
	Fix any $(t,x)\in[0,T]\times\mathbb{R}^d$. Under Assumption \ref{assump:SDE}, for any admissible policy $\bpi$ and any time grid $\mathbb{S}$ of $[t, T]$, there exists a unique strong solution $\{{X}_s^{\bpi, \mathbb{S}}, t \leq s\leq T\}$ to the grid sample state SDE \eqref{eq:samplestate} with the initial condition ${X}_t^{\bpi, \mathbb{S}}=x$. Furthermore, for any $p\geq 2$, there exists a constant $C_p$ that is independent of the grid $\mathbb{S}$ such that
	\begin{equation}\label{eq:moment-grid}
		\mathbb{E}_{t,x}\Big[\sup_{t \le s\le T}|X_s^{\bpi, \mathbb{S}}|^p\Big] \le C_p(1+|x|^{p}).
	\end{equation}
\end{proposition}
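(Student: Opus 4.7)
The plan is to exploit the fact that on each subinterval $(t_i, t_{i+1}]$ of the grid $\mathbb{S}$, the sampled action $a_{t_i}^{\bpi,\mathbb{S}} = G^{\bpi}(t_i, X_{t_i}^{\bpi,\mathbb{S}}, U_{i+1})$ is $\mathcal{G}^{\mathbb{S}}_{t_i}$-measurable and held constant in time. Conditional on $\mathcal{G}^{\mathbb{S}}_{t_i}$, the SDE \eqref{eq:samplestate} restricted to $(t_i, t_{i+1}]$ is therefore a classical controlled L\'evy SDE driven by $W$ and $\widetilde N$ with a (random but fixed) constant control $a \equiv a_{t_i}^{\bpi,\mathbb{S}} \in \mathcal{A}$. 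Under Assumption \ref{assump:SDE}, for this fixed $a$ the coefficients $b(\cdot, \cdot, a)$, $\sigma(\cdot, \cdot, a)$, and $\gamma(\cdot, \cdot, a, \cdot)$ are locally Lipschitz and have linear growth in $x$, so \cite[Theorem 3.2]{kunita2004stochastic} and \cite[Theorem 119]{rong2006theory} yield a unique strong solution on $(t_i, t_{i+1}]$ with initial value $X_{t_i}^{\bpi,\mathbb{S}}$. Starting from $X_t^{\bpi,\mathbb{S}} = x$ at $t_0 = t$ and sampling $U_1, \ldots, U_N$ in turn, I would piece these local solutions together to obtain a unique strong solution on the whole of $[t, T]$.

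For the moment estimate I would work globally on $[t, T]$ rather than recursively across grid points. Introduce the localizing stopping times $\tau_n := \inf\{s \ge t: |X_s^{\bpi,\mathbb{S}}| > n\} \wedge T$ and the finite function
\begin{equation}
\phi_n(r) := \mathbb{E}_{t,x}\Bigl[\sup_{t \le s \le r} |X_{s \wedge \tau_n}^{\bpi,\mathbb{S}}|^p\Bigr], \quad r \in [t, T].
\end{equation}
Applying the Burkholder--Davis--Gundy and Kunita-type moment inequalities to \eqref{eq:samplestate} stopped at $\tau_n$ together with the linear growth in $(x,a)$ from condition (iii) of Assumption \ref{assump:SDE}, a standard Jensen step yields
\begin{equation}
\phi_n(r) \le C_p\Bigl(1 + |x|^p + \int_t^r \phi_n(s)\, ds + \int_t^r \mathbb{E}_{t,x}\bigl[|a_s^{\bpi,\mathbb{S}}|^p \mathbf{1}_{\{s \le \tau_n\}}\bigr]\, ds\Bigr).
\end{equation}
The closing step is to control the action moments. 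For $s \in (t_i, t_{i+1}]$, the action $a_s^{\bpi,\mathbb{S}}$ has conditional law $\bpi(\cdot\mid t_i, X_{t_i}^{\bpi,\mathbb{S}})$ given $\mathcal{G}^{\mathbb{S}}_{t_i}$, so condition (iii) of Definition \ref{def:policy} gives $\mathbb{E}[|a_s^{\bpi,\mathbb{S}}|^p \mid \mathcal{G}^{\mathbb{S}}_{t_i}] \le C_p(1 + |X_{t_i}^{\bpi,\mathbb{S}}|^p)$; combined with the monotonicity of $\phi_n$ this gives $\mathbb{E}_{t,x}[|a_s^{\bpi,\mathbb{S}}|^p \mathbf{1}_{\{s \le \tau_n\}}] \le C_p(1 + \phi_n(s))$. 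Gronwall's inequality then produces $\phi_n(r) \le C_p'(1 + |x|^p)e^{C_p'(T-t)}$ with $C_p'$ depending only on $p$, $T$, and the constants in Assumption \ref{assump:SDE} and Definition \ref{def:policy}, but not on $n$ or $\mathbb{S}$; Fatou's lemma as $n \to \infty$ then delivers \eqref{eq:moment-grid}.

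The main technical obstacle is precisely achieving the grid-independence of the constant in the moment bound. A naive interval-by-interval recursion of the form $\mathbb{E}[\sup_{[t_i, t_{i+1}]} |X^{\bpi,\mathbb{S}}|^p] \le K_p(1 + \mathbb{E}[|X_{t_i}^{\bpi,\mathbb{S}}|^p])$ would compound a multiplicative factor through all $N$ intervals and blow up as $|\mathbb{S}| \to 0$. The cure is to bound the action moment at time $s$ by the state moment \emph{up to} time $s$ (using that $a_s^{\bpi,\mathbb{S}}$ depends on $X_{t_i}^{\bpi,\mathbb{S}}$ with $t_i \le s$ and that $\phi_n$ is nondecreasing), so that a single global Gronwall inequality in continuous time can be closed; it is the admissibility bound $\int_{\mathcal{A}}|a|^p \bpi(a\mid t,x) da \le C_p(1+|x|^p)$ from Definition \ref{def:policy} that makes this closure possible.
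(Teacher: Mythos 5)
Your proposal is correct and follows essentially the same route as the paper: well-posedness by iterating over the grid subintervals (using the admissibility bound $\int_{\mathcal{A}}|a|^p\bpi(a|t,x)da\le C_p(1+|x|^p)$ to control the sampled action's moments), and a single global Gronwall argument for \eqref{eq:moment-grid} in which the action moment at time $s$ is dominated by the running supremum of the state up to $s$ — precisely the device the paper uses (via the map $\delta(\tau)=t_n$ for $\tau\in(t_n,t_{n+1}]$) to keep the constant independent of $\mathbb{S}$. Your localization by stopping times plus Fatou is a minor technical variant of the paper's reliance on the inductively established finiteness of the moments, and both close the estimate the same way.
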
 }

\rev{We now state in Theorem~\ref{thm:mart1} the martingale characterization of the q-function associated with a policy $\boldsymbol{\pi} \in \boldsymbol{\Pi}$, assuming that its value function has already been learned and known.} Note that for Theorems \ref{thm:mart1}--\ref{thm:opt-mart} below, we impose Assumption \ref{assump:SDE} without explicitly mentioning it in the theorem statements .

\begin{theorem}\label{thm:mart1}
Let a policy $\boldsymbol{\pi} \in \boldsymbol{\Pi},$ its value function $J(\cdot, \cdot; \boldsymbol{\pi}) \in C^{1,2}([0,T)\times\mathbb{R}^d)\cap C([0,T]\times\mathbb{R}^d)$ and a continuous function $\hat q: [0, T] \times \mathbb{R}^d \times \mathcal{A} \rightarrow \mathbb{R}$ be given. Assume that $J(t,x;\bpi)$ and $J_x(t,x;\bpi)$ both have polynomial growth in $x$. Then the following results hold.
\begin{enumerate}
	
\item [(i)] \rev{$\hat q(t,x, a) = q(t,x, a; \boldsymbol{\pi}) $ for all $(t,x,a)$ if and only if for any $(t,x)$ and any grid $\mathbb{S}$ of $[t,T]$, the following process
	\begin{align} \label{eq:mart1}
		e^{-\beta s}  J(s, X_s^{ \boldsymbol{\pi, \mathbb{S}}}; \boldsymbol{\pi}) +    \int_t^s e^{-\beta \tau } \big( r( \tau, X_{\tau}^{\boldsymbol{\pi}, \mathbb{S}}, a_{\tau}^{\boldsymbol{\pi},  \mathbb{S}}) - \hat q ( \tau, X_{\tau}^{\boldsymbol{\pi, \mathbb{S}}}, a_{\tau}^{\boldsymbol{\pi, \mathbb{S}}})\big) d\tau
	\end{align}
	is a $\mathcal{G}^{\mathbb{S}}$-martingale, where  $X^{\boldsymbol{\pi, \mathbb{S} }} = \{X^{\boldsymbol{\pi, \mathbb{S}}}_s: t \le s \le T\}$ is the grid sample state process defined in \eqref{eq:samplestate} with $X^{\boldsymbol{\pi}, \mathbb{S}}_t =x$.}

	\item [(ii)] \rev{If $\hat q(t,x, a) = q(t,x, a; \boldsymbol{\pi}) $ for all $(t,x,a)$, then for any $\boldsymbol{\pi}' \in \boldsymbol{\Pi}$, any $(t,x)$ and any grid $\mathbb{S}$ of $[t,T]$,  the following process
	\begin{align} \label{eq:mart2}
		e^{-\beta s}  J(t, X_s^{ \boldsymbol{\pi}', \mathbb{S}}; \boldsymbol{\pi}) +    \int_t^s e^{-\beta\tau}\big(r( \tau, X_{\tau}^{\boldsymbol{\pi}', \mathbb{S}}, a_{\tau}^{\boldsymbol{\pi}', \mathbb{S}}) - \hat q(\tau, X_{\tau}^{\boldsymbol{\pi}', \mathbb{S}}, a_{\tau}^{\boldsymbol{\pi}', \mathbb{S}})\big) d\tau
	\end{align}
	is a $\mathcal{G}^{\mathbb{S}}$-martingale, where  $ \{X^{\boldsymbol{\pi}', \mathbb{S}}_s: t \le s \le T\}$ is the solution to  \eqref{eq:samplestate} under $\boldsymbol{\pi}'$ with initial condition $X^{\boldsymbol{\pi}', \mathbb{S}}_t =x$.}

	\item [(iii)] \rev{If there exists $\boldsymbol{\pi}' \in \boldsymbol{\Pi}$ such that for all $(t,x)$ and any grid $\mathbb{S}$ of $[t,T]$, the process \eqref{eq:mart2} is a $\mathcal{G}^{\mathbb{S}}$-martingale where $X^{\boldsymbol{\pi}', \mathbb{S}}_t =x$, then we have $\hat q(t,x, a) = q(t,x, a; \boldsymbol{\pi}) $ for all $(t,x,a)$. }
	
\end{enumerate}
Moreover, in any of the three cases above, the q-function satisfies
\begin{align}
	\int_{\mathcal{A}} \big( q(t,x, a; \boldsymbol{\pi}) - \theta \log \boldsymbol{\pi}(a| t,x)\big) \boldsymbol{\pi}(a|t,x) da =0, \quad \text{for all $(t,x) \in [0, T] \times \mathbb{R}^d $}.
\end{align}
\end{theorem}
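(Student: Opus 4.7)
The plan is to prove all three characterizations by computing the drift of the process \eqref{eq:mart1} (or \eqref{eq:mart2}) via It\^o's formula \eqref{eq:Ito-JD} and comparing it to $\hat q$. The key algebraic identity is that, since
\begin{equation}
q(t,x,a;\boldsymbol{\pi}) = \partial_t J(t,x;\boldsymbol{\pi}) + H(t,x,a,\partial_x J, \partial_x^2 J, J) - \beta J(t,x;\boldsymbol{\pi})
\end{equation}
and $H(t,x,a,\partial_x J,\partial_x^2 J, J) = r(t,x,a) + \mathcal{L}^a J(t,x;\boldsymbol{\pi}) - \partial_t J(t,x;\boldsymbol{\pi})$, we have $\mathcal{L}^a J - \beta J = q(\cdot,\cdot,a;\boldsymbol{\pi}) - r(\cdot,\cdot,a)$. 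Consequently, applying It\^o's formula to $e^{-\beta s}J(s,X_s^{\boldsymbol{\pi},\mathbb{S}};\boldsymbol{\pi})$ gives the drift $e^{-\beta \tau}[q(\tau, X_\tau^{\boldsymbol{\pi},\mathbb{S}}, a_\tau^{\boldsymbol{\pi},\mathbb{S}};\boldsymbol{\pi}) - r(\tau, X_\tau^{\boldsymbol{\pi},\mathbb{S}}, a_\tau^{\boldsymbol{\pi},\mathbb{S}})]\,d\tau$, so that the process \eqref{eq:mart1} can be written as $J(t,x;\boldsymbol{\pi}) + \int_t^s e^{-\beta\tau}(q-\hat q)(\tau, X_\tau^{\boldsymbol{\pi},\mathbb{S}}, a_\tau^{\boldsymbol{\pi},\mathbb{S}})\,d\tau + M_s$, where $M_s$ collects the Brownian and compensated-Poisson stochastic integrals.

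First I would carry out the It\^o computation on each sub-interval $(t_{n-1}, t_n]$, where $a_s^{\boldsymbol{\pi},\mathbb{S}}$ is a fixed $\mathcal{G}^{\mathbb{S}}_{t_{n-1}}$-measurable action so that \eqref{eq:Ito-JD} applies directly, then piece the identities together on $[t,s]$. Next I would verify that $M_s$ is a true martingale, not merely a local one: this uses the polynomial-growth hypotheses on $J$ and $\partial_x J$, the moment estimate \eqref{eq:moment-grid} in Proposition~\ref{prop:SDE-sampled}, and the growth bounds in Assumption~\ref{assump:SDE}(iii) to control $\int |\gamma_k|^2 \nu_k(dz)$; in the infinite-activity case one splits the jump integral into $|z|\le 1$ and $|z|>1$ parts and estimates $J(t,x+\gamma_k)-J(t,x)$ via the mean value theorem and the growth of $\partial_x J$, mirroring the argument used in the proof of Lemma~\ref{lem:PK}. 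This immediately yields part~(i)~"only if": when $\hat q \equiv q(\cdot,\cdot,\cdot;\boldsymbol{\pi})$, the bounded-variation drift vanishes, leaving only $J(t,x;\boldsymbol{\pi}) + M_s$, which is a $\mathcal{G}^{\mathbb{S}}$-martingale.

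For part~(i)~"if", the assumed martingale property combined with the decomposition above forces the continuous finite-variation process $\int_t^s e^{-\beta\tau}(q-\hat q)(\tau, X_\tau^{\boldsymbol{\pi},\mathbb{S}}, a_\tau^{\boldsymbol{\pi},\mathbb{S}})\,d\tau$ to be a martingale, hence identically zero. Thus the integrand vanishes for Lebesgue-a.e.\ $\tau$ almost surely. To upgrade this to pointwise equality of $\hat q$ and $q$ at a fixed $(t_0, x_0, a_0)$, I would choose a grid whose first subinterval $(t_0, t_1)$ is arbitrarily short; on that interval $a_\tau^{\boldsymbol{\pi},\mathbb{S}} = G^{\boldsymbol{\pi}}(t_0, x_0, U_1)$ is distributed according to $\boldsymbol{\pi}(\cdot|t_0,x_0)$, which has full support $\mathcal{A}$ by admissibility (Definition~\ref{def:policy}(i)), while $X_\tau^{\boldsymbol{\pi},\mathbb{S}}\to x_0$ in probability as $\tau\downarrow t_0$. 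The continuity of $\hat q$ and $q$ then yields $\hat q(t_0, x_0, a) = q(t_0, x_0, a;\boldsymbol{\pi})$ for all $a\in\mathcal{A}$. Parts~(ii) and~(iii) follow by the exact same drift identification applied to $e^{-\beta s}J(s, X_s^{\boldsymbol{\pi}',\mathbb{S}};\boldsymbol{\pi})$ under $\boldsymbol{\pi}'$ (the It\^o formula only uses $\boldsymbol{\pi}$ through $J$, not through the dynamics of $X^{\boldsymbol{\pi}',\mathbb{S}}$); part~(ii) is immediate, while part~(iii) reuses the full-support and continuity argument with $\boldsymbol{\pi}'$ in place of $\boldsymbol{\pi}$. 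Finally, the closing identity $\int_{\mathcal{A}}(q-\theta\log\boldsymbol{\pi})\boldsymbol{\pi}\,da = 0$ is precisely \eqref{eq:q-eq}, which is a direct consequence of Lemma~\ref{lem:PK} applied to the Feynman--Kac PIDE \eqref{eq:FK} satisfied by $J(\cdot,\cdot;\boldsymbol{\pi})$.

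The main obstacle I expect is the true-martingale verification for $M_s$ in the infinite-activity case, since the jump integrand $J(\tau, X_{\tau-}+\gamma_k)-J(\tau, X_{\tau-})$ must be square-integrable against $\widetilde N_k$, and this requires combining Assumption~\ref{assump:SDE-1}(v) with careful polynomial-growth bookkeeping on both $\partial_x J$ and $J$. A secondary technical issue is the rigorous passage from "$q=\hat q$ along almost every path" to pointwise equality, which relies crucially on the full-support property of admissible policies and the joint continuity of $q$ and $\hat q$; once these are in hand, the rest of the argument is a clean book-keeping exercise paralleling \cite{jia2022q,jia2025erratum}.
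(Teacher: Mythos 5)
Your proposal is correct and follows essentially the same route as the paper's proof: the same It\^o decomposition isolating $\int_t^s e^{-\beta\tau}(q-\hat q)\,d\tau$ plus stochastic integrals, the same true-martingale verification via polynomial growth of $J$ and $\partial_x J$, the mean value theorem, Assumption~\ref{assump:SDE}(iii) and the moment estimate \eqref{eq:moment-grid}, and the same localization-plus-full-support argument (which the paper phrases as a contradiction with a stopping time $T_{\delta,\mathbb{S}}$ and stochastic continuity of the driving L\'evy processes) to upgrade the a.e.\ vanishing of $q-\hat q$ along paths to pointwise equality. The only cosmetic difference is that you invoke the finite/infinite-activity split and Assumption~\ref{assump:SDE-1}(v), which the paper reserves for Lemma~\ref{lem:PK}; in Theorem~\ref{thm:mart1} the assumed polynomial growth of $J_x$ makes the direct mean-value-theorem bound suffice without that split.
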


\begin{remark}
Similar to \cite{jia2022q,jia2025erratum}, Theorem~\ref{thm:mart1}-(i) facilitates on-policy learning, where learning the q-function of the given target policy $\boldsymbol{\pi}$ is based on data $\{(s, X_s^{ \boldsymbol{\pi}, \mathbb{S}}, a_s^{ \boldsymbol{\pi}, \mathbb{S}}), t \le s \le T \}$ generated by $\boldsymbol{\pi}$. On the other hand, Theorem~\ref{thm:mart1}-(ii) and -(iii) are for off-policy learning, where learning the q-function of $\boldsymbol{\pi}$ is based on data generated by a different, called behavior policy, $\boldsymbol{\pi}'$.
\end{remark}

Next, we extend Theorem 7 in \cite{jia2022q,jia2025erratum} and obtain a martingale characterization of the value function and the q-function simultaneously.

\begin{theorem}\label{thm:mart2}
Let a policy $\boldsymbol{\pi} \in \boldsymbol{\Pi},$ a function $\hat J \in C^{1,2}([0,T)\times\mathbb{R}^d)\cap C([0,T]\times\mathbb{R}^d)$ and a continuous function $\hat q: [0, T] \times \mathbb{R}^d \times \mathcal{A} \rightarrow \mathbb{R}$ be given satisfying
\begin{align}\label{eq:constraints}
	\hat J(T, x) = h (x), \quad  \int_{\mathcal{A}} \big(\hat q(t,x, a) - \theta \log \boldsymbol{\pi}(a| t,x)\big) \boldsymbol{\pi}(a|t,x) da =0, \quad \forall(t,x) \in [0, T] \times \mathbb{R}^d.
\end{align}
\rev{Assume that $\hat J$ and $\hat J_x$ both have polynomial growth in $x$.}
Then
\begin{enumerate}
	\item [(i)] \rev{$\hat J$ and $\hat q$ are respectively the value function and the q-function associated with $\boldsymbol{\pi}$ if and only if for any
	$(t,x) \in [0, T] \times \mathbb{R}^d $ and any grid $\mathbb{S}$ of $[t,T]$, the following process
	\begin{align}
		e^{-\beta s} \hat J(s, X_s^{ \boldsymbol{\pi} , \mathbb{S}}) +      \int_t^s e^{-\beta \tau } \big(r( \tau, X_{\tau}^{\boldsymbol{\pi} , \mathbb{S}}, a_{\tau}^{\boldsymbol{\pi},  \mathbb{S}}) - \hat q ( \tau, X_{\tau}^{\boldsymbol{\pi}, \mathbb{S}}, a_{\tau}^{\boldsymbol{\pi, \mathbb{S}}})\big) d\tau
	\end{align}
	is a $\mathcal{G}^{\mathbb{S}}$-martingale, where $X^{\boldsymbol{\pi}, \mathbb{S}} = \{X^{\boldsymbol{\pi}, \mathbb{S}}_s: t \le s \le T\}$ satisfies \eqref{eq:samplestate}  with $X^{\boldsymbol{\pi}, \mathbb{S}}_t =x$.}
	
	\item [(ii)] \rev{ If $\hat J$ and $\hat q$ are respectively the value function and the q-function associated with $\boldsymbol{\pi}$, then for any $\boldsymbol{\pi}' \in \boldsymbol{\Pi}$, any
	$(t,x) \in [0, T] \times \mathbb{R}^d $ and any grid $\mathbb{S}$ of $[t,T]$, the following process
	\begin{align} \label{eq:mart3}
		e^{-\beta s} \hat J(s, X_s^{ \boldsymbol{\pi}', \mathbb{S}}) +    \int_t^s e^{-\beta \tau} \big( r(\tau, X_{\tau}^{\boldsymbol{\pi}', \mathbb{S}}, a_{\tau}^{\boldsymbol{\pi}', \mathbb{S}}) - \hat q (\tau, X_{\tau}^{\boldsymbol{\pi}', \mathbb{S}}, a_{\tau}^{\boldsymbol{\pi}', \mathbb{S}})\big) d\tau
	\end{align}
	is a $\mathcal{G}^{\mathbb{S}}$-martingale, where  $ \{X^{\boldsymbol{\pi}', \mathbb{S}}_s: t \le s \le T\}$ satisfies  \eqref{eq:samplestate}  with $X^{\boldsymbol{\pi}', \mathbb{S}}_t =x$. }

	\item [(iii)]  \rev{ If there exists $\boldsymbol{\pi}' \in \boldsymbol{\Pi}$ such that for all $(t,x)$ and any grid $\mathbb{S}$ of $[t,T]$, the process \eqref{eq:mart3} is a $\mathcal{G}^{\mathbb{S}}$-martingale where $X^{\boldsymbol{\pi}', \mathbb{S}}_t =x$, then we have $\hat J(t,x) = J(t,x; \boldsymbol{\pi}) $ and $\hat q(t,x, a) = q(t,x, a; \boldsymbol{\pi}) $ for all $(t,x,a)$. }
\end{enumerate}

In any of the three cases above, if it holds that $\boldsymbol{\pi}(a|t,x) = \frac{\exp(\frac{1}{\theta} \hat q(t,x,a))} {\int_{\mathcal{A} } {\exp(\frac{1}{\theta} \hat q(t,x,a)) da}}$, then $\boldsymbol{\pi}$ is the optimal policy and $\hat J$ is the optimal value function.
\end{theorem}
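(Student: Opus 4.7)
The plan is to mirror the structure of Theorem~\ref{thm:mart1} but now simultaneously identify both $\hat J$ and $\hat q$. The central computation is to apply the jump-diffusion It\^o formula \eqref{eq:Ito-JD} to $e^{-\beta s}\hat J(s, X_s^{\boldsymbol{\pi},\mathbb{S}})$ on each subinterval $(t_i, t_{i+1}]$ of the grid, where the piecewise-constant action equals $a_{t_i}^{\boldsymbol{\pi},\mathbb{S}}$. Collecting the $ds$-part of It\^o's expansion and using the definition of the Hamiltonian $H$ in \eqref{eq:Hal-1}, the drift of the candidate martingale \eqref{eq:mart3} equals
\begin{equation}
e^{-\beta s}\bigl[\partial_t\hat J(s, X_s^{\boldsymbol{\pi},\mathbb{S}}) + H(s, X_s^{\boldsymbol{\pi},\mathbb{S}}, a_s^{\boldsymbol{\pi},\mathbb{S}}, \partial_x\hat J, \partial_x^2\hat J, \hat J) - \beta\hat J(s, X_s^{\boldsymbol{\pi},\mathbb{S}}) - \hat q(s, X_s^{\boldsymbol{\pi},\mathbb{S}}, a_s^{\boldsymbol{\pi},\mathbb{S}})\bigr],
\end{equation}
while the stochastic part consists of the Brownian integral and the compensated Poisson integral, which are true martingales after localization thanks to the polynomial growth of $\hat J, \hat J_x$ and the moment estimate \eqref{eq:moment-grid} from Proposition~\ref{prop:SDE-sampled} together with condition (iii) of Assumption~\ref{assump:SDE}.

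For (i), the ``only if'' direction follows immediately: when $\hat J = J(\cdot,\cdot;\boldsymbol{\pi})$ and $\hat q = q(\cdot,\cdot,\cdot;\boldsymbol{\pi})$, Definition~\ref{eq：q-def} gives $\partial_t\hat J + H - \beta\hat J - \hat q \equiv 0$, so the drift vanishes identically. For the ``if'' direction, the martingale property forces the above drift to be zero along the trajectory; by shrinking grids and using continuity of all quantities in $(t,x,a)$ together with the continuity of the map $(t,x)\mapsto \boldsymbol{\pi}_{t,x}$ (which has full support on $\mathcal A$), one obtains the pointwise identity
\begin{equation}
\partial_t\hat J(t,x) + H(t,x,a,\partial_x\hat J, \partial_x^2\hat J, \hat J) - \beta \hat J(t,x) = \hat q(t,x,a), \quad \forall(t,x,a).
\end{equation}
Integrating this identity against $\boldsymbol{\pi}(a|t,x)da$ and using the normalization constraint \eqref{eq:constraints} reproduces exactly the PIDE \eqref{eq:FK}; together with the terminal condition $\hat J(T,x)=h(x)$ and the polynomial growth of $\hat J$, Lemma~\ref{lem:PK} gives $\hat J = J(\cdot,\cdot;\boldsymbol{\pi})$, whence by definition $\hat q = q(\cdot,\cdot,\cdot;\boldsymbol{\pi})$.

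For (ii), once the pointwise identity $\partial_t\hat J + H - \beta\hat J = \hat q$ is available from part (i), the same It\^o computation applied to $X^{\boldsymbol{\pi}',\mathbb{S}}$ shows that the drift is identically zero regardless of which admissible $\boldsymbol{\pi}'$ generates the state trajectory, so the process \eqref{eq:mart3} is a martingale. Part (iii) is the converse: given the martingale property under $\boldsymbol{\pi}'$, the drift computation yields the same pointwise PDE but now with $a$ ranging over the support of $\boldsymbol{\pi}'$, which by admissibility is all of $\mathcal A$; the remainder of the argument is identical to (i), and the concluding statement then follows because $\hat q=q(\cdot,\cdot,\cdot;\boldsymbol{\pi})$ by definition. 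The main obstacle here is justifying the extraction of the pointwise relation from a trajectory-wise martingale statement under a different behavior policy; this is handled by first varying the initial pair $(t,x)$ (since the martingale property is assumed for all starting points) and then using the full-support property and continuity in $a$ to sweep out all actions, mirroring the technique in the proof of Theorem~\ref{thm:mart1}-(iii).

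Finally, for the optimality claim, suppose $\boldsymbol{\pi}(a|t,x) = \exp(\hat q/\theta)/\int_{\mathcal A}\exp(\hat q/\theta)da$. Substituting this form into the constraint \eqref{eq:constraints} yields $\theta\log Z(t,x)=0$ where $Z(t,x)=\int_{\mathcal A}\exp(\hat q(t,x,a)/\theta)da$, hence $Z\equiv 1$. Using $\hat q = H + \beta \hat J - \partial_t \hat J$ (valid from the pointwise PDE) and the fact that $\beta\hat J - \partial_t\hat J$ does not depend on $a$, one sees that $\boldsymbol{\pi} \propto \exp(H/\theta)$, i.e., $\boldsymbol{\pi}$ is the Gibbs measure associated with the Hamiltonian $H$ evaluated at $(\partial_x\hat J, \partial_x^2\hat J, \hat J)$. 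Plugging $Z=1$ into the HJB-type identity obtained by substituting this Gibbs form into $\partial_t\hat J + \int(H-\theta\log\boldsymbol{\pi})\boldsymbol{\pi}\,da - \beta\hat J=0$ shows $\hat J$ solves the exploratory HJB equation \eqref{eq:HJB-explore}, so by Lemma~\ref{lem:expHJB} it coincides with $J^*$ and $\boldsymbol{\pi}$ is optimal; equivalently, $\boldsymbol{\pi}$ is a fixed point of the policy improvement operator $\mathcal I$ from Theorem~\ref{thm:PI}.
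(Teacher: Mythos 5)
Your proposal is correct and follows essentially the same route as the paper: apply the jump-diffusion It\^o formula to $e^{-\beta s}\hat J(s,X_s^{\boldsymbol{\pi},\mathbb{S}})$, identify the drift with $\partial_t\hat J + H - \beta\hat J - \hat q$ (the paper writes this as $r-\hat q+g$ with $g=\mathcal{L}^a\hat J-\beta\hat J$, which is the same quantity), extract the pointwise identity from the vanishing of the continuous finite-variation martingale via the full-support/stopping-time argument of Theorem~\ref{thm:mart1}, and then invoke the constraint \eqref{eq:constraints} together with Lemma~\ref{lem:PK} to identify $\hat J$ with $J(\cdot,\cdot;\boldsymbol{\pi})$. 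Your closing argument for the optimality claim is a slightly more explicit unwinding (via $Z\equiv 1$ and Lemma~\ref{lem:expHJB}) of the paper's one-line appeal to Theorem~\ref{thm:PI}, but it is substantively the same.
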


\subsection{Optimal q-function}
We consider in this section the optimal q-function, i.e., the q-function associated with the optimal policy $\boldsymbol{\pi}^*$ in \eqref{eq:opt-pol}. Based on Definition~\ref{eq：q-def}, we can write it as
\begin{align}\label{eq：q-def-opt}
q^*(t,x, a) & = \partial_t J^*(t, x) +   H(t,x, a, \partial_xJ^*,  \partial_x^2J^*, J^*) - \beta J^*(t, x),
\end{align}
where $J^*$ is the optimal value function that solves the exploratory HJB equation in \eqref{eq:HJB-explore}.

The following is  the martingale condition that characterizes the optimal value function $J^*$ and the optimal q-function.
\begin{theorem}\label{thm:opt-mart}
Let a function $\hat J^* \in C^{1,2}([0,T)\times\mathbb{R}^d)\cap C([0,T]\times\mathbb{R}^d)$ and a continuous function $\hat q^*: [0, T] \times \mathbb{R}^d \times \mathcal{A} \rightarrow \mathbb{R}$ be given satisfying
\begin{align} \label{eq:boundary}
	\hat J^*(T, x) = h (x), \quad \int_{\mathcal{A} } {\exp \Big( \frac{1}{\theta} \hat q^*(t,x,a) \Big) da} = 1, \quad \text{for all $(t,x) \in [0, T] \times \mathbb{R}^d $}.
\end{align}
Assume that $\hat J^*(t,x)$ and $\hat J^*_x(t,x)$ both have polynomial growth in $x$.
Then
\begin{enumerate}
	\item [(i)] \rev{If $\hat J^*$ and $\hat q^*$ are respectively the optimal value function and the optimal q-function,  then for any $\boldsymbol{\pi} \in \boldsymbol{\Pi}$ , for all
	$(t,x) \in [0, T] \times \mathbb{R}^d $ and any grid $\mathbb{S}$ of $[t,T]$, the following process
	\begin{align} \label{eq:mart-opt}
		e^{-\beta s} \hat J^*(s, X_s^{ \boldsymbol{\pi}, \mathbb{S}}) +    \int_t^s e^{-\beta \tau } \big( r(\tau, X_{\tau}^{\boldsymbol{\pi}, \mathbb{S}}, a_{\tau}^{\boldsymbol{\pi}, \mathbb{S}}) - \hat q^* (\tau, X_{\tau}^{\boldsymbol{\pi} , \mathbb{S}}, a_{\tau}^{\boldsymbol{\pi}, \mathbb{S}})\big) d\tau
	\end{align}
	is a $\mathcal{G}^{\mathbb{S}}$-martingale, where $X^{\boldsymbol{\pi}, \mathbb{S}} = \{X^{\boldsymbol{\pi}, \mathbb{S}}_s: t \le s \le T\}$ satisfies \eqref{eq:samplestate}  with $X^{\boldsymbol{\pi}, \mathbb{S}}_t =x$. Moreover, in this case, $\boldsymbol{\hat \pi}^*(a|t,x) =  \exp(\frac{1}{\theta} \hat q^*(t,x,a))$ is the optimal stochastic policy.}

	\item [(ii)]  \rev{ If there exists $\boldsymbol{\pi} \in \boldsymbol{\Pi}$ such that for all $(t,x)$ and any grid $\mathbb{S}$ of $[t,T]$, the process \eqref{eq:mart-opt} is a $\mathcal{G}^{\mathbb{S}}$-martingale where $X^{\boldsymbol{\pi}, \mathbb{S}}_t=x$, then $\hat J^*$ and $\hat q^*$ are respectively the optimal value function and the optimal q-function.}
\end{enumerate}
\end{theorem}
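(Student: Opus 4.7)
The plan is to parallel the proofs of Theorems \ref{thm:mart1} and \ref{thm:mart2} for the optimal case, using the Gibbs-normalization condition in \eqref{eq:boundary} in place of the more general constraint in \eqref{eq:constraints}. I apply It\^o's formula \eqref{eq:Ito-JD} to $e^{-\beta s}\hat J^*(s, X_s^{\bpi,\mathbb{S}})$ along the grid sample state process. Since $a_\tau^{\bpi,\mathbb{S}}$ is piecewise constant between grid points, the standard jump-diffusion calculus applies cell by cell. Combining the resulting drift with the integrand $e^{-\beta\tau}(r - \hat q^*)$ in \eqref{eq:mart-opt}, the finite-variation part collapses to
\begin{equation*}
	\int_t^s e^{-\beta\tau}\, F(\tau, X_\tau^{\bpi,\mathbb{S}}, a_\tau^{\bpi,\mathbb{S}})\, d\tau,
\end{equation*}
where $F(t,x,a) := \partial_t \hat J^*(t,x) + H(t,x,a,\partial_x\hat J^*, \partial_x^2\hat J^*, \hat J^*) - \beta\hat J^*(t,x) - \hat q^*(t,x,a)$. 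Thus, once the remaining $dW$ and compensated-Poisson pieces are verified to be true martingales, the martingale property of \eqref{eq:mart-opt} is equivalent to $F \equiv 0$.

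For part (i), assuming $\hat J^* = J^*$ and $\hat q^* = q^*$, the definition of the optimal q-function forces $F \equiv 0$ so the drift vanishes identically. The true-martingale property of the stochastic integrals then follows from polynomial growth of $\hat J^*$ and $\partial_x\hat J^*$, the growth assumptions on $\sigma$ and $\gamma_k$ in Assumption \ref{assump:SDE}-(iii), the moment control on the admissible action in Definition \ref{def:policy}-(iii), and the uniform moment bound \eqref{eq:moment-grid}; for the jump integrals a second-order Taylor expansion as in \eqref{eq:bound-for-Fubini}, combined with the L\'evy integrability \eqref{eq:levy-integ-cond}, handles both finite- and infinite-activity cases. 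Finally, since $\hat q^* = \partial_t J^* + H - \beta J^*$ and $\int_\mathcal{A} \exp(\tfrac{1}{\theta}\hat q^*)\,da = 1$, the formula $\hat\bpi^*(a|t,x) = \exp(\tfrac{1}{\theta}\hat q^*(t,x,a))$ reduces to the Gibbs policy in \eqref{eq:opt-pol}, which is optimal by Lemma \ref{lem:expHJB}.

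For part (ii), the martingale hypothesis forces the finite-variation process $\int_t^s e^{-\beta\tau} F(\tau, X_\tau^{\bpi,\mathbb{S}}, a_\tau^{\bpi,\mathbb{S}})\, d\tau$ to be a continuous $\mathcal{G}^\mathbb{S}$-martingale of bounded variation, hence identically zero. Fix $(t,x)$ and choose any grid whose first cell is $(t, t_1]$; right-continuity of $\mathcal{G}^\mathbb{S}$ makes $a_t^{\bpi,\mathbb{S}} = G^\bpi(t,x,U_1)$ measurable with respect to $\mathcal{G}^\mathbb{S}_t$, so dividing by $s - t$ and sending $s \downarrow t$ yields $F(t,x, G^\bpi(t,x,U_1)) = 0$ $\mathbb{P}$-a.s. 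The full-support property $\mathrm{supp}\,\bpi(\cdot|t,x) = \mathcal{A}$ from Definition \ref{def:policy}-(i) together with joint continuity of $F$ in $a$ upgrades this to $F(t,x,a) \equiv 0$ on $[0,T]\times\mathbb{R}^d\times\mathcal{A}$. Rearranging gives $\hat q^*(t,x,a) = \partial_t \hat J^* + H - \beta \hat J^*$, and substituting into $\int_\mathcal{A}\exp(\tfrac{1}{\theta}\hat q^*(t,x,a))\,da = 1$ from \eqref{eq:boundary} produces
\begin{equation*}
	\partial_t \hat J^*(t,x) + \theta\log\int_\mathcal{A}\exp\Big(\tfrac{1}{\theta}H(t,x,a,\partial_x\hat J^*, \partial_x^2\hat J^*, \hat J^*)\Big) da - \beta\hat J^*(t,x) = 0,
\end{equation*}
which is the exploratory HJB \eqref{eq:HJB-explore2} with terminal condition $\hat J^*(T,x) = h(x)$. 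Invoking Lemma \ref{lem:expHJB} to identify the unique polynomial-growth classical solution then yields $\hat J^* = J^*$ and $\hat q^* = q^*$.

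The main obstacle is the step in part (ii) that passes from an integral martingale condition to the pointwise identity $F \equiv 0$: it depends critically on the full-support and continuity requirements on admissible $\bpi$ (so that sampled actions exhaust $\mathcal{A}$ and an a.s.\ statement on samples propagates to a pointwise one on $\mathcal{A}$), and on the uniqueness statement implicit in Lemma \ref{lem:expHJB}. A secondary technical burden is verifying that the compensated-Poisson integrals in part (i) are genuine martingales when $\hat J^*(\tau, X_{\tau-}+\gamma_k) - \hat J^*(\tau, X_{\tau-})$ has only $L^p(\nu_k(dz)\,d\tau)$ control, which requires combining the $p$-moment bounds in Assumption \ref{assump:SDE}-(iii) with \eqref{eq:moment-grid}.
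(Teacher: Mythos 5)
Your proof is correct and follows essentially the same route as the paper: the paper disposes of part (i) by citing Theorem~\ref{thm:mart2}-(ii) and of part (ii) by referring to the corresponding argument in \cite{jia2025erratum}, and both of those rest on exactly the machinery you reconstruct — It\^o's formula along the grid sample process, equivalence of the martingale property with the vanishing of the drift $F$, the full-support/continuity argument to upgrade the a.s.\ statement on sampled actions to a pointwise identity on $\mathcal{A}$, and identification of $\hat J^*$ with $J^*$ via the exploratory HJB \eqref{eq:HJB-explore2} and Lemma~\ref{lem:expHJB}. The only cosmetic quibble is that for the square-integrability of the compensated-Poisson integrals a first-order mean-value bound on $\hat J^*(\tau,X_{\tau-}+\gamma_k)-\hat J^*(\tau,X_{\tau-})$ (as in the proof of Theorem~\ref{thm:mart1}) suffices, rather than the second-order expansion of \eqref{eq:bound-for-Fubini}.
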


\section{q-Learning Algorithms}\label{sec:q-algo}

In this section we present learning algorithms based on the martingale characterization of the q-function discussed in the previous section. We need to distinguish  two cases, depending on whether or not the density function of the Gibbs measure generated from the q-function can be computed and integrated explicitly.

We first discuss the case when the normalizing constant in the Gibbs measure can be computed explicitly.
We denote by $J^{\psi}$ and ${q}^{\phi}$ the parameterized function approximators for the optimal value function and optimal q-function, respectively. In view of Theorem~\ref{thm:opt-mart}, these approximators are chosen to satisfy
\begin{equation}\label{ex:constraints qv optimal}
{J}^{\psi}(T,x) = h(x),\ \int_{\mathcal{A}} \exp\Big(  \frac{1}{\theta}q^{\phi}(t,x,a)\Big) d a = 1.
\end{equation}
We can then update $(\psi,\phi)$ by enforcing the martingale condition discussed in Theorem \ref{thm:opt-mart} and applying the techniques developed  in \cite{jia2022policy}. This procedure has been discussed in details in Section 4.1 of  \cite{jia2022q}, and hence we omit the details.

For reader's convenience, we present Algorithms~\ref{algo:offline episodic} and \ref{algo:online incremental}, which  summarize the offline and online q-learning algorithms respectively. \rev{In the algorithmic implementation, we execute stochastic policies on a uniform time grid with step size $\Delta t$. That is, the grid $\mathbb{S}= \{0= t_0 < t_1 < \ldots < t_K= T\}$, where $t_{i+1} - t_i = \Delta t$ for all $i$. }
Algorithms~\ref{algo:offline episodic} and \ref{algo:online incremental} are based on the so-called martingale orthogonality condition in \cite{jia2022policy}, where the typical choices of test functions in these algorithms are $\xi_t = \partial_{\psi} J^{\psi}(t,X_t^{\bm\pi^{\phi}})$, and $\zeta_t = \partial_{\phi} q^{\phi}(t,X_t^{\bm\pi^{\phi}},a_t^{\bm\pi^{\phi}})$. \rev{Here, $\bpi^{\phi}$ is the policy generated by $q^\phi$, and we use $X_t^{\bm\pi^{\phi}}$ to denote the grid sample state process $X_t^{\bm\pi^{\phi}, \mathbb{S}}$ for notational simplicity.}
Note that these two algorithms are {\it identical} to Algorithms~2 and 3 in \cite{jia2022q}.

\begin{algorithm}[hbtp]
\caption{Offline--Episodic q-Learning Algorithm}
\textbf{Inputs}: initial state $x_0$,  horizon $T$, time step $\Delta t$, number of episodes $N$, number of mesh grids $K$, initial learning rates $\alpha_{\psi},\alpha_{\phi}$ and a learning rate schedule function $l(\cdot)$ (a function of the number of episodes), functional forms  of parameterized  value function $J^{\psi}(\cdot,\cdot)$ and  q-function $q^{\phi}(\cdot,\cdot,\cdot)$ satisfying \eqref{ex:constraints qv optimal}, functional forms of test functions $\bm{\xi}(t,x_{\cdot \wedge t},a_{\cdot \wedge t})$ and $\bm{\zeta}(t,x_{\cdot \wedge t},a_{\cdot \wedge t})$, and temperature parameter $\theta$.

\textbf{Required program (on-policy)}: environment simulator $(x',r) = \textit{Environment}_{\Delta t}(t,x,a)$ that takes current time--state pair $(t,x)$ and action $a$ as inputs and generates state $x'$ at time $t+\Delta t$ and  instantaneous reward $r$ at time $t$ as outputs. Policy $\bm\pi^{\phi}(a|t,x) = \exp(\frac{1}{\gamma}q^{\phi}(t,x,a))$.

\textbf{Required program (off-policy)}: observations $ \{a_{t_k}, r_{t_{k}}, x_{t_{k+1}}\}_{k = 0,\cdots, K-1}\cup \{ x_{t_K}, h(x_{t_K})\} = \textit{Observation}(\Delta t)$ including the observed actions, rewards, and state trajectories under the given behavior policy  at the sampling time grid with step size $\Delta t$.

\textbf{Learning procedure}:
\begin{algorithmic}
\STATE Initialize $\psi,\phi$.
\FOR{episode $j=1$ \TO $N$} \STATE{Initialize $k = 0$. Observe  initial state $x_0$ and store $x_{t_k} \leftarrow  x_0$.

\COMMENT{\textbf{On-policy case}

\WHILE{$k < K$} \STATE{
Generate action $a_{t_k}\sim \bm{\pi}^{\phi}(\cdot|t_k,x_{t_k})$.

Apply $a_{t_k}$ to environment simulator $(x,r) = Environment_{\Delta t}(t_k, x_{t_k}, a_{t_k})$, and observe new state $x$ and reward $r$ as outputs. Store $x_{t_{k+1}} \leftarrow x$ and $r_{t_k} \leftarrow r$.

Update $k \leftarrow k + 1$.
}
\ENDWHILE	

}

\COMMENT{\textbf{Off-policy case}
Obtain one observation $\{a_{t_k}, r_{t_{k}}, x_{t_{k+1}}\}_{k = 0,\cdots, K-1}\cup \{ x_{t_K}, h(x_{t_K})\} = \textit{Observation}(\Delta t)$.

}

For every $k = 0,1,\cdots,K-1$, compute and store test functions $\xi_{t_k} = \bm{\xi}(t_k, x_{t_0},\cdots, x_{t_k},a_{t_0},\cdots, a_{t_k})$, $\zeta_{t_k} = \bm{\zeta}(t_k, x_{t_0},\cdots, x_{t_k},a_{t_0},\cdots, a_{t_k})$.	

Compute
\[ \Delta\psi = \sum_{i=0}^{K-1} \xi_{t_i} \big(J^{\psi}(t_{i+1},x_{t_{i+1}}) - J^{\psi}(t_{i},x_{t_{i}}) + r_{t_i}\Delta t -q^{\phi}(t_{i},x_{t_{i}},a_{t_i})\Delta t - \beta J^{\psi}(t_{i},x_{t_{i}}) \Delta t  \big), \]
\[
\Delta\phi =   \sum_{i=0}^{K-1}\zeta_{t_i}\big( J^{\psi}(t_{i+1},x_{t_{i+1}}) - J^{\psi}(t_{i},x_{t_{i}}) + r_{t_i}\Delta t - q^{\phi}(t_{i},x_{t_{i}},a_{t_i})\Delta t - \beta J^{\psi}(t_{i},x_{t_{i}}) \Delta t\big) .
\]

Update $\psi$ and $\phi$ by
\[ \psi \leftarrow \psi + l(j)\alpha_{\psi} \Delta\psi .\]
\[ \phi \leftarrow \phi + l(j)\alpha_{\phi} \Delta\phi .  \]

}
\ENDFOR
\end{algorithmic}
\label{algo:offline episodic}
\end{algorithm}

\begin{algorithm}[hbtp]
\caption{Online-Incremental q-Learning Algorithm}
\textbf{Inputs}: initial state $x_0$, horizon $T$, time step $\Delta t$, number of mesh grids $K$, initial learning rates $\alpha_{\psi},\alpha_{\phi}$ and learning rate schedule function $l(\cdot)$ (a function of the number of episodes), functional forms  of parameterized  value function $J^{\psi}(\cdot,\cdot)$ and  q-function $q^{\phi}(\cdot,\cdot,\cdot)$ satisfying \eqref{ex:constraints qv optimal}, functional forms of test functions $\bm{\xi}(t,x_{\cdot \wedge t},a_{\cdot \wedge t})$ and $\bm{\zeta}(t,x_{\cdot \wedge t},a_{\cdot \wedge t})$, and temperature parameter $\theta$.

\textbf{Required program (on-policy)}: environment simulator $(x',r) = \textit{Environment}_{\Delta t}(t,x,a)$ that takes current time--state pair $(t,x)$ and action $a$ as inputs and generates state $x'$ at time $t+\Delta t$ and  instantaneous reward $r$ at time $t$ as outputs. Policy $\bm\pi^{\phi}(a|t,x) = \exp(\frac{1}{\theta}q^{\phi}(t,x,a))$.

\textbf{Required program (off-policy)}: observations $ \{a, r, x'\} = \textit{Observation}(t, x;\Delta t)$ including the observed actions, rewards, and state when the current time-state pair is $(t,x)$ under the given behavior policy at the sampling time grid with step size $\Delta t$.

\textbf{Learning procedure}:
\begin{algorithmic}
\STATE Initialize $\psi,\phi$.
\FOR{episode $j=1$ \TO $\infty$} \STATE{Initialize $k = 0$. Observe  initial state $x_0$ and store $x_{t_k} \leftarrow  x_0$.
\WHILE{$k < K$} \STATE{
\COMMENT{\textbf{On-policy case}

Generate action $a_{t_k}\sim \bm{\pi}^{\phi}(\cdot|t_k,x_{t_k})$.

Apply $a_{t_k}$ to  environment simulator $(x,r) = Environment_{\Delta t}(t_k, x_{t_k}, a_{t_k})$, and observe new state $x$ and reward $r$ as outputs. Store $x_{t_{k+1}} \leftarrow x$ and $r_{t_k} \leftarrow r$.

}

\COMMENT{\textbf{Off-policy case}

Obtain one observation $a_{t_k}, r_{t_k}, x_{t_{k+1}} = \textit{Observation}(t_k, x_{t_k};\Delta t)$.

}

Compute test functions $\xi_{t_k} = \bm{\xi}(t_k, x_{t_0},\cdots, x_{t_k},a_{t_0},\cdots, a_{t_k})$, $\zeta_{t_k} = \bm{\zeta}(t_k, x_{t_0},\cdots, x_{t_k},a_{t_0},\cdots, a_{t_k})$.	

Compute
\[ \begin{aligned}
& \delta = J^{\psi}(t_{k+1},x_{t_{k+1}}) - J^{\psi}(t_{k},x_{t_{k}}) + r_{t_k}\Delta t -q^{\phi}(t_{k},x_{t_{k}},a_{t_k})\Delta t - \beta J^{\psi}(t_{k},x_{t_{k}}) \Delta t ,\\
& \Delta\psi = \xi_{t_k} \delta, \\
& \Delta\phi = \zeta_{t_k} \delta.
\end{aligned} \]

Update $\psi$ and $\phi$ by
\[ \psi \leftarrow \psi + l(j)\alpha_{\psi} \Delta\psi.\]
\[ \phi \leftarrow \phi + l(j)\alpha_{\phi} \Delta\phi .  \]

Update $k \leftarrow k + 1$
}
\ENDWHILE	

}
\ENDFOR
\end{algorithmic}
\label{algo:online incremental}
\end{algorithm}

When the normalizing constant in the Gibbs measure is not available, we take the same approach as in \cite{jia2022q} to develop learning algorithms. Specifically, we consider $\{\boldsymbol{\pi}^{\phi}(\cdot|t,x)\}_{\phi \in \Phi}$, which is a family of density functions of some tractable distributions, e.g. multivariate normal distributions. Starting from a stochastic policy $\boldsymbol{\pi}^{\phi}$ in this family, we update the policy by considering the optimization problem
\begin{align*}
\min_{\phi' \in \Phi} \text{KL} \bigg( \boldsymbol{\pi}^{\phi'}(\cdot|t,x)  \Big| \exp \Big(\frac{1}{\theta} q(t,x, \cdot; \boldsymbol{\pi}^{\phi}) \Big) \bigg).
\end{align*}
Specifically, using gradient descent, we can update $\phi$ as in  \cite{jia2022q}, by
\begin{align}\label{eq:update-un}
\phi \leftarrow \phi - \theta \alpha_{\phi} dt \Big(\log \boldsymbol{\pi}^{\phi}( a_t^{ \boldsymbol{\pi}^{\phi} }|t, X_{t-}^{ \boldsymbol{\pi}^{\phi} } ) -   \frac{1}{\theta}  q(t,X_t^{ \boldsymbol{\pi}^{\phi} }, a_t^{ \boldsymbol{\pi}^{\phi} }; \boldsymbol{\pi}^{\phi}) \Big) \partial_\phi\log \boldsymbol{\pi}^{\phi}( a_t^{ \boldsymbol{\pi}^{\phi} }|t, X_{t-}^{ \boldsymbol{\pi}^{\phi} }).
\end{align}
In the above updating rule, we need only the {\it values} of the q-function along the trajectory -- the ``data" -- $\{(t,X_t^{ \boldsymbol{\pi}^{\phi} }, a_t^{ \boldsymbol{\pi}^{\phi} }); 0 \le t \le T\}$, instead of its full functional form. These values  can be learned through the ``temporal difference" of the value function along the data. To see this, applying It\^{o}'s formula \eqref{eq:Ito-JD} to $J(\cdot, \cdot; \boldsymbol{\pi}^{\phi} )$, we have
\begin{align}
q(t, X_t^{ \boldsymbol{\pi}^{\phi} }, a_t^{ \boldsymbol{\pi}^{\phi} }; \boldsymbol{\pi}^{\phi}) dt & = dJ(t, X_t^{ \boldsymbol{\pi}^{\phi} }; \boldsymbol{\pi}^{\phi}) + \big(r(t, X_t^{ \boldsymbol{\pi}^{\phi} }, a_t^{ \boldsymbol{\pi}^{\phi} }) -\beta  J(t, X_t^{ \boldsymbol{\pi}^{\phi} }; \boldsymbol{\pi}^{\phi}) \big)dt  \\
	& \quad +     \partial_xJ(t, X_{t-}^{ \boldsymbol{\pi}^{\phi} }; \boldsymbol{\pi}^{\phi})\circ{\sigma} (t, X_{t-}^{ \boldsymbol{\pi}^{\phi} }, a_t^{ \boldsymbol{\pi}^{\phi} })dW_t \nonumber  \\
	& \quad +   \int_{\mathbb{R}^d} \big( J(t,   X^{\boldsymbol{\pi}^{\phi}}_{t-} + {\gamma}( t,X^{\boldsymbol{\pi}^{\phi}}_{t-} , a_t^{\boldsymbol{\pi}^{\phi}},z)) -  J(t,X^{\boldsymbol{\pi}^{\phi} }_{t-}; \boldsymbol{\pi}^{\phi}) \big)  \widetilde{ N}(dt, dz) ,
\end{align}
\rev{where we also use $a_t^{ \boldsymbol{\pi}^{\phi}}$ to denote the action process $a_t^{ \boldsymbol{\pi}^{\phi}, \mathbb{S}}$ for notational simplicity.}
We may ignore the $dW_t$ and $\widetilde{ N}(dt, dz)$ terms which are martingale differences with mean zero, and then the updating rule in \eqref{eq:update-un} becomes
\begin{align}
&\phi \leftarrow \phi  +  \alpha_{\phi} \Big(dJ(t, X_t^{ \boldsymbol{\pi}^{\phi} }; \boldsymbol{\pi}^{\phi}) + \big(r(t, X_t^{ \boldsymbol{\pi}^{\phi} }, a_t^{ \boldsymbol{\pi}^{\phi} }) -\beta  J(t, X_t^{ \boldsymbol{\pi}^{\phi} }; \boldsymbol{\pi}^{\phi})\big)dt -\theta \log \boldsymbol{\pi}^{\phi}( a_t^{ \boldsymbol{\pi}^{\phi}}|t, X_{t-}^{ \boldsymbol{\pi}^{\phi} })dt \Big)\\
& \qquad \qquad \qquad \cdot \partial_{\phi}\log \boldsymbol{\pi}^{\phi}( a_t^{ \boldsymbol{\pi}^{\phi} }|t, X_{t-}^{ \boldsymbol{\pi}^{\phi} }).
\end{align}
Using $J^{\psi}(\cdot, \cdot)$ as the parameterized function approximator for $J(\cdot, \cdot ; \boldsymbol{\pi}^{\phi})$, we arrive at the updating rule for the policy parameter $\phi$:
\begin{align}
&\phi \leftarrow \phi  +  \alpha_{\phi}\Big(dJ^{\psi}(t, X_t^{ \boldsymbol{\pi}^{\phi} }) + \big(r(t, X_t^{ \boldsymbol{\pi}^{\phi} }, a_t^{ \boldsymbol{\pi}^{\phi} }) -\beta  J^{\psi}(t, X_t^{ \boldsymbol{\pi}^{\phi} }) \big)dt-\theta\log\boldsymbol{\pi}^{\phi}( a_t^{\boldsymbol{\pi}^{\phi} }|t, X_{t-}^{ \boldsymbol{\pi}^{\phi} })dt\Big) \\
& \qquad \qquad \qquad \cdot \partial_\phi\log \boldsymbol{\pi}^{\phi}( a_t^{ \boldsymbol{\pi}^{\phi} }|t, X_{t-}^{ \boldsymbol{\pi}^{\phi} }). \label{eq:update-un2}
\end{align}
Therefore, we can update $\psi$ using the PE methods in \cite{jia2022policy}, and update $\phi$ using the above rule, leading to actor--critic type of algorithms.

To conclude, we are able to use the {\it same} RL algorithms to learn the optimal
policy and optimal value function, {\it without having to know a priori whether the unknown environment entails a pure diffusion or a jump-diffusion.} This important conclusion is based on the theoretical analysis carried out in the previous sections.

\section{Application: Mean--Variance Portfolio Selection}\label{sec:mean-var}
We now present an applied example of the general theory and algorithms derived. Consider investing in a market where there are a risk-free asset and a risky asset (e.g. a stock or an index). The risk-free rate is given by a constant $r_f$ and the risky asset price process follows
\begin{equation}
	dS_t=S_{t-}\bigg(\mu dt+ \sigma dW_t + \int_{\mathbb{R}} (\exp(z)-1) \widetilde{N}(dt, dz)\bigg).\label{eq:stock-SDE}
\end{equation}
Let $X_t$ be the discounted wealth value at time $t$, and $a_t$ is the discounted dollar value of the investment in the risky asset. The self-financing discounted wealth process follows
\begin{equation}\label{eq:wealth-det}
	dX^{a}_t = a_t\sigma\rho dt + a_t\sigma dW_t + a_t \int_{\mathbb{R}} (\exp(z)-1)\widetilde{N}(dt, dz),
\end{equation}
where $\rho$ is the Sharpe ratio of the risky asset and given by
\begin{equation}\label{eq:rho}
\rho=\frac{\mu-r_f}{\sigma}.
\end{equation}
We assume
\begin{align}
	\int_{|z|>1}\exp(2z)\nu(dz)<\infty. \label{eq:exp-nu}
\end{align}
Condition \eqref{eq:exp-nu} implies that $\mathbb{E}[S_t]$ and $\mathbb{E}[S_t^2]$ are finite for every $t\geq 0$; see \cite[Proposition 3.14]{tankov2004financial}. We set
\begin{equation}
\sigma_J^2\coloneqq\int_{\mathbb{R}} (\exp(z)-1)^2\nu(dz),\label{eq:sigma-J}	
\end{equation}
which is finite by conditions \eqref{eq:levy-integ-cond} and \eqref{eq:exp-nu}.

Fix the investment horizon as $[0,T]$. The MV portfolio selection problem considers
\begin{equation}
	\min_{a}\operatorname{Var}\left[X_{T}^{a}\right]\quad \text { subject to } \mathbb{E}\left[X_{T}^{a}\right]=z.
\end{equation}
We seek the optimal pre-committed strategy for the MV problem as in \cite{zhou2000continuous}. We can transform the above constrained problem into an unconstrained one by introducing a Lagrange multiplier, which yields
\begin{equation}
	\min_{a} \mathbb{E}[(X_{T}^{a})^{2}]-z^{2}-2 \omega(\mathbb{E}[X_{T}^{a}]-z)=\min_{a} \mathbb{E}[(X_{T}^{a}-\omega)^{2}]-(\omega-z)^{2}.
	\label{eq:unconstrained}
\end{equation}
Note that the optimal solution to the unconstrained minimization problem depends on $\omega$, and we can obtain the optimal multiplier $\omega^*$ by solving $\mathbb{E}\left[X_{T}^{a^{*}}(\omega)\right]=z$, \rev{which is given by
\begin{equation}
	\omega^* = \bigg(z\exp\bigg(\frac{\rho^2\sigma^2}{\sigma^2+\sigma_J^2}T\bigg) - x_0\bigg)\bigg/\bigg(\exp\bigg(\frac{\rho^2\sigma^2}{\sigma^2+\sigma_J^2}T\bigg) -1\bigg), \label{eq:opt-Lagrange}
\end{equation}
where $x_0$ is the initial wealth. The optimal deterministic control policy is given by
\begin{equation}\label{eq:MV-port-opt-det-policy}
	a^*(t,x)=-\frac{\sigma\rho}{\sigma^2+\sigma_J^2}(x-\omega^*).
\end{equation}
}

The exploratory formulation of the control problem is
\begin{align} \label{eq:MV-expl}
	\min_{\bpi}\mathbb{E}\bigg[(\tilde{X}_{T}^{\bpi}-\omega)^{2} + \theta \int_0^T \int_{\mathcal{A}}\log \bpi(a|t, \tilde{X}_{t-}^{\bpi}) \bpi(a|t, \tilde{X}_{t-}^{\bpi})dadt \bigg]-(\omega-z)^{2},
\end{align}
where the exploratory discounted wealth under a stochastic policy $\bpi$ follows \rev{
\begin{align}
	d\tilde{X}^{\bpi}_t &= \sigma\rho \int_{\mathcal{A}}a\bpi(a|t, \tilde{X}^{\bpi}_{t-})da dt + \sigma\sqrt{\int_{\mathcal{A}}a^2\bpi(a|t, \tilde{X}^{\bpi}_{t-})da} dW_t\\
	& \phantom{=} + \int_{\mathbb{R}\times[0,1]^n} G^{\bpi}(t,\tilde{X}^{\bpi}_{t-},u)(\exp(z)-1)\widetilde N'(dt, dz, du),\quad t\in[0,T].\label{eq:wealth-expl}
\end{align}
As will be explained shortly, we only need to consider stochastic feedback policies of the form $\mathcal{N}(\cdot|\alpha(t)(x-\beta(t)),\gamma(t))$, where $\alpha(t), \beta(t)$ and $\gamma(t)$ are continuous functions of $t$. Such a policy $\bpi$ clearly fulfills the requirements in Definition \ref{def:policy} and hence is admissible. It also satisfies condition (ii) in Assumption \ref{assump:jump}. }

\rev{
We next discuss the well-posedness of the exploratory SDE \eqref{eq:wealth-expl}. While Assumptions \ref{assump:SDE} and \ref{assump:jump} are imposed to ensure the moment estimate \eqref{eq:moment-expl} holds for any $p\geq 2$ in the general setting, we only need such an estimate for the second moment of the exploratory state in the particular MV application. To have such a property, we only need to require the conditions involving $p$ in Assumptions \ref{assump:SDE} and \ref{assump:jump} to hold up to $p=2$, which are clearly satisfied in the problem under \eqref{eq:exp-nu}. The other requirements in Assumption \ref{assump:SDE} are also fulfilled.  Thus, the exploratory SDE \eqref{eq:wealth-expl} is well-posed for the $\bpi$ mentioned above and Eq. \eqref{eq:moment-expl} holds for $p=2$. }

\subsection{Solution of the exploratory control problem}
We consider the HJB equation for problem \eqref{eq:MV-expl}:
\begin{align}
	\partial_t V(t,x) +   \inf_{ \boldsymbol{\pi} \in \mathcal{P} (\mathbb{R})}\int_{\mathbb{R}} \big( H(t,x, a, \partial_xV, \partial_{x}^2V, V) + \theta \log \bpi(a | t, x)\big) \bpi(a | t, x) da = 0 \label{eq:HJB-MV},
\end{align}
with the terminal condition $V(T, x) = (x-\omega)^2 - (\omega-z)^2$. Note that  supremum becomes infimum and the sign before $\theta\log\bpi(a | t, x)$ flips compared with \eqref{eq:HJB-explore} because we consider minimization here. The Hamiltonian of the problem is given by
\begin{align}
	H(t,x, a, \partial_xV,  \partial_{x}^2V, V)&= a\sigma\rho \partial_xV(t,x) + \frac{1}{2} a^2\sigma^2 \partial_x^2V(t,x)\\
	&\phantom{=~}+\int_{\mathbb{R}} \big( V(t, x+  \gamma(a, z)) - V(t,x) - \gamma(a,z)\partial_xV(t,x) \big) \nu(dz),\label{eq:Hal-MV}
\end{align}
where $\gamma(a, z)= a(e^z-1).$
We take the following ansatz for the solution of the HJB equation \eqref{eq:HJB-MV}:
\begin{equation}\label{eq:ansatz-MV}
	V(t,x)=(x-\omega)^2 f(t) + g(t) - (\omega-z)^2.
\end{equation}
As $V(t,x)$ is quadratic in $x$, we can easily calculate the integral term in the Hamiltonian and obtain
\begin{equation}
	H(t,x, a, \partial_xV,  \partial_x^2V, V)= a\sigma\rho \partial_xV(t,x) + \frac{1}{2} a^2(\sigma^2 + \sigma_J^2) \partial_x^2V(t,x).\label{eq:Hal-MV-2}
\end{equation}
The probability density function that minimizes the integral in \eqref{eq:HJB-MV} is given by
\begin{equation}
	\bpi_c(\cdot|t,x)\propto \exp \Big(-\frac{1}{\theta} H(t,x, a, \partial_xV,  \partial_{x}^2V, V)  \Big),
\end{equation}
which is a candidate for the optimal stochastic policy.
From \eqref{eq:Hal-MV-2}, we obtain
\begin{equation}
\bpi_c(\cdot|t,x)\sim\mathcal{N}\Big(\cdot \Big| -\frac{\sigma\rho \partial_xV}{(\sigma^2+\sigma_J^2)\partial_x^2V}, \frac{\theta}{(\sigma^2+\sigma_J^2)\partial_x^2V}\Big).
\end{equation}
Substituting it back to the HJB equation \eqref{eq:HJB-MV}, we obtain a nonlinear PDE as
\begin{align}
	&\partial_tV - \frac{\rho^2\sigma^2}{2(\sigma^2+\sigma_J^2)}\frac{(\partial_xV)^2}{\partial_x^2V}-\frac{\theta}{2}\ln\frac{2\pi\theta}{(\sigma^2+\sigma_J^2)\partial_x^2V}=0,\quad (t,x)\in[0,T)\times\mathbb{R},\\
	&V(T,x)=(x-\omega)^2-(\omega-z)^2.
\end{align}
We plug in the ansatz \eqref{eq:ansatz-MV} to the above PDE and obtain that $f(t)$ satisfies
\begin{align}
f'(t)-\frac{\rho^2\sigma^2}{\sigma^2+\sigma_J^2}f(t)=0,\ f(T)=1,
\end{align}
and $g(t)$ satisfies
\begin{align}
	g'(t)-\frac{\theta}{2}\ln\frac{\pi\theta}{(\sigma^2+\sigma_J^2)f(t)}=0,\ g(T)=0.
\end{align}
These two ordinary differential equations can be solved analytically, and we obtain
\begin{align}
	V(t,x)=&(x-\omega)^2\exp\bigg(-\frac{\rho^2\sigma^2}{\sigma^2+\sigma_J^2}(T-t)\bigg) + \frac{\theta\rho^2\sigma^2}{4(\sigma^2+\sigma_J^2)}(T^2-t^2)\\
	&-\frac{\theta}{2}\bigg(\frac{\rho^2\sigma^2}{\sigma^2+\sigma_J^2}T+\ln\frac{\pi\theta}{\sigma^2+\sigma_J^2}\bigg)(T-t) - (\omega-z)^2.\label{eq:J-optimal-MV}
\end{align}
It follows that
\begin{equation}\label{eq:policy-optimal-MV}
	\bpi_c(\cdot|t,x)\sim\mathcal{N}\bigg(\cdot\ \Big| -\frac{\sigma\rho}{\sigma^2+\sigma_J^2}(x-\omega), \frac{\theta}{2(\sigma^2+\sigma_J^2)}\exp\bigg(\frac{\rho^2\sigma^2}{\sigma^2+\sigma_J^2}(T-t)\bigg)\bigg),
\end{equation}
and it is admissible. Furthermore, $V$ solves the HJB equation \eqref{eq:HJB-MV} and shows quadratic growth. Therefore, by Lemma \ref{lem:expHJB}, we have the following conclusion.
\begin{proposition}\label{prop:MV-opt}
	For the unconstrained MV problem \eqref{eq:unconstrained}, the optimal value function $J^*(t,x)=V(t,x)$ and the optimal stochastic policy $\bpi^*=\bpi_c$.
\end{proposition}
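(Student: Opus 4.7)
The plan is to invoke Lemma~\ref{lem:expHJB} as a verification theorem, after suitably accommodating the fact that the MV problem is a minimization rather than a maximization (this amounts to replacing $(r,h)$ by $(-r,-h)$ in the general framework, which flips the sign in the Gibbs exponent and turns $\sup$ into $\inf$; the proof of Lemma~\ref{lem:expHJB} carries over verbatim under this reflection). All remaining work is to check the hypotheses of that lemma for the candidate pair $(V,\bpi_c)$ constructed in \eqref{eq:J-optimal-MV}--\eqref{eq:policy-optimal-MV}.

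First I would verify directly that $V$ solves the exploratory HJB equation \eqref{eq:HJB-MV}. The key observation, already exploited in deriving the ansatz, is that $V(t,x)=(x-\omega)^2 f(t)+g(t)-(\omega-z)^2$ is quadratic in $x$, so $\partial_x V=2(x-\omega)f(t)$ and $\partial_x^2 V=2f(t)$. A direct expansion of the jump term in \eqref{eq:Hal-MV} then collapses to $\tfrac12 a^2\sigma_J^2\,\partial_x^2V$, giving the simplified Hamiltonian \eqref{eq:Hal-MV-2}. Performing the (sign-flipped) pointwise minimization in $a$ over a Gaussian density yields exactly $\bpi_c$ in \eqref{eq:policy-optimal-MV}, and substituting back reduces \eqref{eq:HJB-MV} to the pair of scalar ODEs for $f$ and $g$ with terminal data $f(T)=1$, $g(T)=0$. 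The closed-form solutions of these ODEs are precisely those encoded in \eqref{eq:J-optimal-MV}, so $V\in C^{1,2}([0,T)\times\mathbb{R})\cap C([0,T]\times\mathbb{R})$ solves the PIDE and matches the terminal condition.

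Next I would verify admissibility of $\bpi_c$ in the sense of Definition~\ref{def:policy} and the integrability condition of Lemma~\ref{lem:expHJB}. Since $\bpi_c(\cdot|t,x)$ is Gaussian with mean linear in $x$ and variance depending only on $t$ and bounded away from $0$ on $[0,T]$, properties (i) and (ii) are immediate (joint continuity in $(t,x,a)$, full support, and local Lipschitz continuity in $x$ uniformly in $t$ follow from straightforward Gaussian density estimates), while property (iii) reduces to the standard fact that the differential entropy of a Gaussian is $\tfrac12\log(2\pi e\cdot\text{Var})$ and that $\int a^p\bpi_c(a|t,x)da$ has polynomial growth in $x$ because the mean is linear in $x$ and the variance is bounded in $t$. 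The finiteness of $\int_{\mathcal{A}}\exp(\pm\tfrac{1}{\theta}H)da$ is automatic since $H$ is a strictly concave (resp.\ convex after sign flip) quadratic in $a$ with leading coefficient $(\sigma^2+\sigma_J^2)f(t)>0$ bounded away from zero on $[0,T]$. Assumption~\ref{assump:jump}(ii) also holds since $G^{\bpi_c}(t,x,u)$ is linear in $x$.

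Finally, quadratic growth of $V$ in $x$ gives \eqref{eq:OVF-PolyG} with $p=2$, which is the moment regime for which the well-posedness and second-moment estimate of the exploratory SDE \eqref{eq:wealth-expl} were already argued in the paragraph preceding this proposition. The (minimization version of) Lemma~\ref{lem:expHJB} then yields $J^*=V$ and $\bpi^*=\bpi_c$. I do not anticipate any genuine obstacle: the main point is the quadratic-in-$x$ structure, which makes the jump integral reduce to a $\sigma_J^2$-term and preserves the LQ--Gaussian character of the problem; the only care required is bookkeeping the sign reversal needed to fit the minimization setting into the maximization-oriented Lemma~\ref{lem:expHJB}.
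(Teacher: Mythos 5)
Your proposal is correct and follows essentially the same route as the paper: derive the simplified Hamiltonian \eqref{eq:Hal-MV-2} from the quadratic ansatz, identify the Gaussian Gibbs minimizer, reduce the HJB equation to the two scalar ODEs for $f$ and $g$, and then invoke (the minimization version of) Lemma~\ref{lem:expHJB} after checking admissibility of $\bpi_c$ and quadratic growth of $V$. Your explicit bookkeeping of the sign reversal and of the second-moment well-posedness regime is slightly more detailed than the paper's, but the substance is identical.
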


When there is no jump, we have $\sigma_J^2=0$ and thus recover the expressions of the optimal value function and optimal policy  derived in \cite{wang2020continuous} for the unconstrained MV problem in the pure diffusion setting.

\subsection{Parametrizations for q-learning}
It is important to observe that the optimal value function, optimal policy and the Hamiltonian \eqref{eq:Hal-MV-2} take the same {\it structural} forms regardless of the presence of jumps, while the only differences are the constant coefficients in those functions. However, those coefficients are unknown anyway and will be parameterized in the implementation of our RL algorithms.  Consequently,  we can use the same parameterizations for the optimal value function and optimal q-function for learning as in the diffusion setting of \cite{jia2022q}. This important insight, concluded only {\it after} a rigorous theoretical analysis, shows that the continuous-time RL algorithms are robust to the presence of jumps and essentially model-free, at least for the MV portfolio selection problem.

Following \cite{jia2022q}, we parametrize the value function as
\begin{equation}
	J^\psi(t, x ; \omega)=(x-\omega)^2 e^{-\psi_3(T-t)}+\psi_2(t^2-T^2)+\psi_1(t-T)-(\omega-z)^2,\label{eq:MV-vf}
\end{equation}
and the q-function as
\begin{equation}
	q^\phi(t, x, a ;\omega)=-\frac{e^{-\phi_2-\phi_3(T-t)}}{2}\big(a+\phi_1(x-\omega)\big)^2-\frac{\theta}{2}\big(\log(2 \pi \theta)+\phi_2+\phi_3(T-t)\big).\label{eq:MV-qf}
\end{equation}
Let $\psi=\left(\psi_1, \psi_2, \psi_3\right)^{\top}$ and $\phi=\left(\phi_1, \phi_2, \phi_3\right)^{\top}$.
The policy associated with the parametric q-function is ${\bpi}^\phi(\cdot \mid t, x ; \omega)=\mathcal{N}(-\phi_1(x-\omega), \theta e^{\phi_2+\phi_3(T-t)})$. In addition to $\psi$ and $\phi$, we learn the Lagrange multiplier $\omega$ in the same way as in \cite{jia2022q} by the stochastic approximation algorithm that updates $\omega$ with a learning rate after a fixed number of iterations.

\subsection{Simulation study}
We assess the effect of jumps on the convergence behavior of our algorithms via a simulation study. We assume the risk-free rate is zero and use the same basic setting as in \cite{jia2022q}: $x_0 = 1$, $z = 1.4$, $T = 1$ year, $\Delta t = 1/252$ years (corresponding to one trading day), and a chosen temperature parameter $\theta = 0.1$. We consider two market simulators: one is given by the Black--Scholes (BS) model and the other is Merton's jump-diffusion (MJD) model in which the L\'evy density is a scaled Gaussian density, i.e.,
\begin{equation}\label{eq:Levy-den-Gauss}
	\nu(z)=\lambda\frac{1}{\sqrt{2\pi\delta^2}}\exp\bigg(-\frac{(z-m)^2}{2\delta^2}\bigg),\ \lambda>0, \delta>0, m\in\mathbb{R},
\end{equation}
where $\lambda$ is the arrival rate of the Poisson jumps. The Gaussian assumption is a common choice in the finance literature for modeling the jump-size distribution (see e.g. \citealt{merton1976option}, \citealt{bates1991crash}, \citealt{das2002surprise}), partly due to its tractability for statistical estimation and partly because heavy-tailed distributions may not be easily identified from real data when the number of jumps is limited (see \citealt{heyde2004controversy}).

Under the latter model, we have
\begin{equation}
	\sigma_J^2=\lambda\bigg(\exp(2m+2\delta^2)-2\exp\bigg(m+\frac{1}{2}\delta^2\bigg)+1\bigg).
\end{equation}
To mimic the real market, we set the parameters of these two simulators by estimating them from the daily data of the S\&P 500 index using maximum likelihood estimation (MLE). Our estimation data cover a long period from the beginning of 2000 to the end of 2023. In Table \ref{tab:MVPS-params}, we summarize the estimated parameter values used for the simulators.

\begin{table}[h]
	\centering
	\begin{tabular}{ccc}
		\hline
		\textbf{Simulator} & \textbf{Parameters} \\ \hline
		BS & $\mu = 0.0690, \sigma = 0.1965$  \\ \hline
		MJD & $\mu = 0.0636, \sigma = 0.1347, \lambda = 28.4910, m = -0.0039, \delta = 0.0275$ \\\hline
	\end{tabular}
	\caption{Parameters used in the two simulators}\label{tab:MVPS-params}
\end{table}

For offline learning (Algorithm 1), the Lagrange multiplier $\omega$ is updated after every 10 iterations, and the parameter vectors $\psi$ and $\phi$ are initialized as zero vectors. \rev{For the learning rates, we set $\alpha_\omega=0.05$, $\alpha_\psi=0.001$, $\alpha_\phi=(0.12,1.5,0.15)$ for the BS simulator, and $\alpha_\omega=0.06$, $\alpha_\psi=0.001$, $\alpha_\phi=(0.06,1.3,0.11)$ for the MJD one}. We apply a decay rate $l(j)=j^{-0.51}$ to the learning rates, where $j$ is the iteration index. In each iteration, we generate $32$ independent $T$-year trajectories to update the parameters. We train the model for $N=20000$ iterations.

We also consider online learning (Algorithm 2) with $\Delta t$ equal to one trading day. We select a batch size of $128$ trading days and update the parameters after having obtained this number of observations. We update $\omega$ after collecting 252 trading days of observations and initialize $\psi$ and $\phi$ as zero vectors. \rev{For the learning rates, we set $\alpha_\omega=0.005$, $\alpha_\psi=0.001$, $\alpha_\phi=(0.12,2.9,0.16)$ for the BS simulator, and $\alpha_\omega=0.01$, $\alpha_\psi=0.001$, $\alpha_\phi=(0.027,1.05,0.45)$ for the MJD one.} We apply the same decay scheme as the offline case. The model is again trained for $N=20000$ iterations.

\rev{We display in Table \ref{tab:phi_convergence} the learned results for the parameters $\phi_1,\phi_2$ and $\phi_3$ as well as the true values of these parameters calculated by plugging the BS/MJD model parameters into \eqref{eq:policy-optimal-MV}. For both offline and online learning, the learned results of the four parameters are close to their true values. In practice, learning $\phi_1$ and $\omega$ (parameters in the policy's mean) is more important than $\phi_2$ and $\phi_3$ (parameters in the policy's variance). Although we use a stochastic policy to interact with the environment during training to update the policy parameters, for actual execution of portfolio selection we apply a deterministic policy which is the \emph{mean} part of the optimal stochastic policy after it has been learned. This renders an  {\it off-policy} learning. An advantage of doing so among others is to reduce the variance of the final wealth; see \cite{huang2022achieving} for a discussion. Furthermore, the deterministic policy given by \eqref{eq:MV-port-opt-det-policy} is the one that optimizes the original MV objective (without entropy regularization). Hence, it makes sense to apply the mean part of the learned stochastic policy for execution, which theoretically is identical to the optimal deterministic policy (compare \eqref{eq:MV-port-opt-det-policy} and \eqref{eq:policy-optimal-MV}). }

\begin{table}[htbp!]
	\centering
	\begin{tabular}{lcccc}
	\toprule
	& $\phi_1$ & $\phi_2$ & $\phi_3$ & $\omega$ \\
	\midrule
	BS simulator & & & & \\
	\midrule
	True & 1.7869 & 2.5610 & 0.1233 & 4.4481 \\
	RL-Offline & 1.7613 & 2.5651 & 0.1241 & 4.4886 \\
	RL-Online & 1.7693 & 2.6158 & 0.1217 & 4.3864 \\
	\midrule
	MJD simulator & & & & \\
	\midrule
	True & 1.5940 & 2.5282 & 0.1013 & 5.1489 \\
	RL-Offline & 1.5871 & 2.5451 & 0.1043 & 5.1776 \\
	RL-Online & 1.5873 & 2.5050 & 0.0998 & 5.1840 \\
	\bottomrule
	\end{tabular}
	\caption{Learned parameters after 20000 iterations from the offline and online q-learning algorithms under two market simulators}
	\label{tab:phi_convergence}
\end{table}

\rev{Figure \ref{fig:qlearning_updated} plots the learning curves of $\phi_1$ for both  offline and online learning and under both simulators (one with jumps and one without). All the curves eventually converge, whether jumps are present or not. This demonstrates that convergence of the offline and online q-learning algorithms proposed in \cite{jia2022q} under the diffusion setting is robust to the presence of jumps for mean--variance portfolio selection.} 

\begin{figure}[h]
	\centering
	\includegraphics[width=1\textwidth]{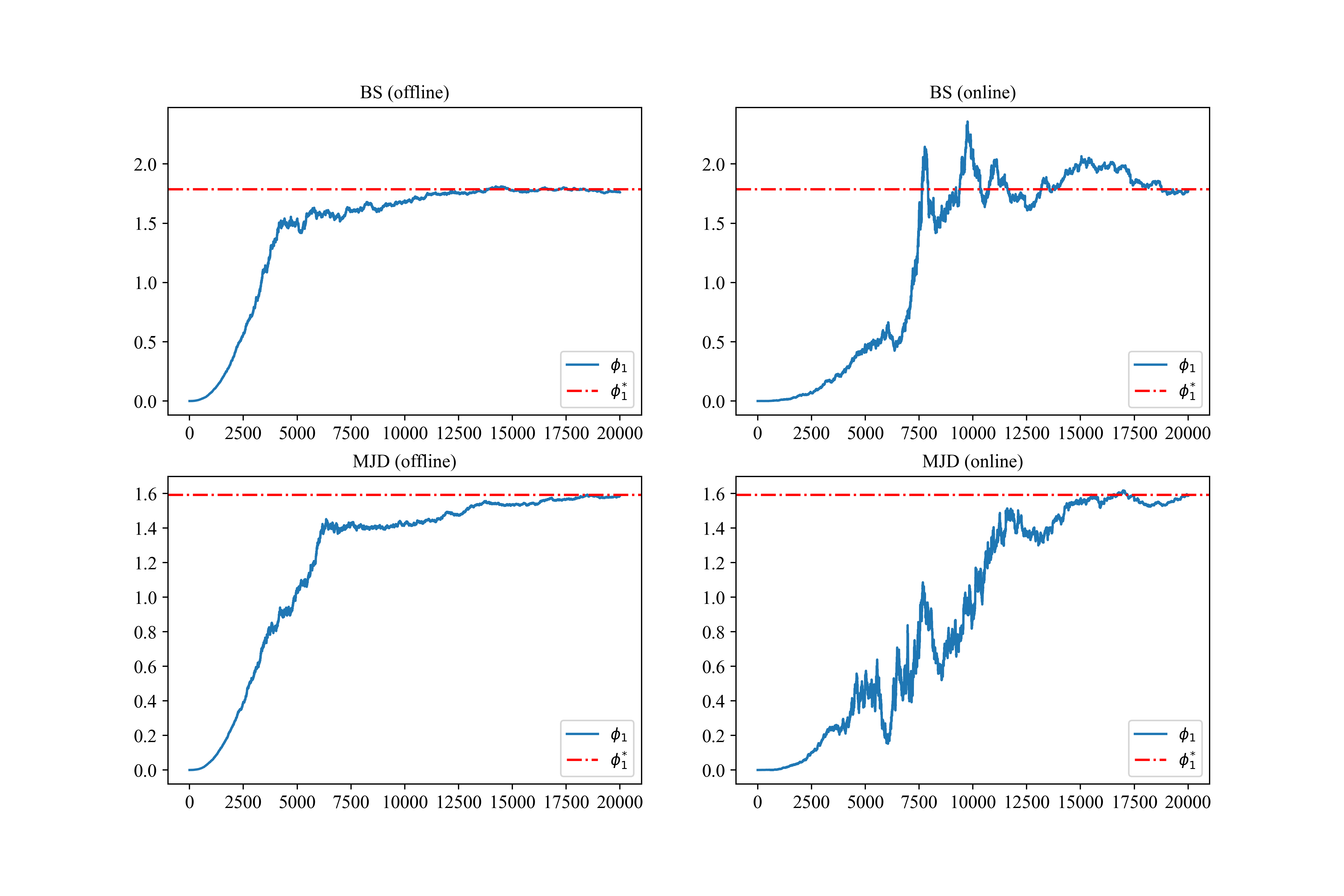}
	\caption{Convergence of the offline and online q-learning algorithms under two market simulators for the policy parameter $\phi_1$ (iteration index on the $x$-axis)}
	\label{fig:qlearning_updated}
\end{figure}

\subsection{Empirical Study}\label{sec:MV-Port-Emp}
In this section we report an empirical study on investing in the  S\&P 500 index for one year.  We obtain daily observations of the index value from the beginning of 2000 to the end of 2023, and split them into two periods: the training period (the first 20 years) and the test period (the last 4 years). Recall that the test period includes the U.S. stock market meltdown at the onset of the
pandemic in 2020, the 2022 bear period as well as various bull subperiods.

We assume the risk-free rate is zero and perform daily rebalancing. We compare our offline and online RL algorithms with the plug-in approach based on MLE estimates. In the latter, we consider both BS and MJD models and fit them to daily returns of the index in the training period. The MLE estimates are plugged into \eqref{eq:opt-Lagrange} to obtain the respective optimal Lagrange multipliers and into \eqref{eq:MV-port-opt-det-policy} to obtain the corresponding optimal deterministic policies, which are then applied on the test data. To generate episodes for training the two RL algorithms, we bootstrap index returns in the training period to create one-year episodes of index prices. We initialize the policy parameter $\phi_1$ by using the MLE estimates of the MJD model parameters in the formula $\sigma\rho/(\sigma^2+\sigma_J^2)$. The implementation of RL follows the same practice in our simulation study and the mean part of the learned stochastic policy is applied on the test data. To create the test data, we generate a set of 5000 one-year episodes of index prices by bootstrapping index returns in the test period, which is used for testing the RL algorithms and the plug-in method.

Table \ref{tab:testSPY} compares the out-of-sample test performances of the four methods with different levels of the target wealth. As a natural benchmark, we also report the annualized mean return, volatility and Sharpe ratio of the index in the 4-year test period, which corresponds to a buy-and-hold strategy. It is clear
that in terms of the Sharpe ratio, all the methods beat the index yet the two  RL algorithms outperform the MLE method significantly. 
A closer examination shows that this outperformance arises from RL's decisive lower volatilities in all the cases. Indeed, in the case when
$z=1.1$ representing a (reasonably targeted) 10\% expected annual return, both RL strategies' volatilities are smaller than that of the index, and the online one is less than half of those of the MLE!

\color{black}

\begin{table}[htbp!]
	\centering
	\begin{tabular}{lccccccc}
		\toprule
		& $\phi_1$  & $\omega$& MR & Vol & SR \\
		\midrule
		$z=1.4$ & & & &  \\
		\cmidrule(lr){1-1}
		RL-Offline &  3.0507  & 2.8560  & 0.6823  & 0.7765  & 0.8788  \\
		RL-Online   &2.7079  & 2.5421  & 0.5062  & 0.6011  & 0.8422 \\
		MLE-BS  & 1.6203  & 5.4748  & 0.8938  & 1.2038  & 0.7425 \\
		MLE-MJD & 1.3877  & 6.7992  & 0.9954  & 1.3752  & 0.7238\\
		\midrule
		$z=1.3$ & & & &  \\
		\cmidrule(lr){1-1}
		RL-Offline & 2.8040  & 2.5218  & 0.5165  & 0.6061  & 0.8522\\
		RL-Online  &2.6593  & 2.1760  & 0.3794  & 0.4532  & 0.8373  \\
		MLE-BS & 1.6203  & 4.3561  & 0.6704  & 0.9029  & 0.7425\\
		MLE-MJD &1.3877  & 5.3494  & 0.7466  & 1.0314  & 0.7238 \\
		\midrule
		$z=1.2$ & & & &  \\
		\cmidrule(lr){1-1}
		RL-Offline& 3.0599  & 1.9130  & 0.3366  & 0.3826  & 0.8798   \\
		RL-Online  & 2.4275  & 1.8672  & 0.2564  & 0.3148  & 0.8143\\
		MLE-BS & 1.6203  & 3.2374  & 0.4469  & 0.6019  & 0.7425\\
		MLE-MJD  & 1.3877  & 3.8996  & 0.4977  & 0.6876  & 0.7238 \\
		\midrule
		$z=1.1$ & & & &  \\
		\cmidrule(lr){1-1}
		RL-Offline& 2.9130  & 1.4729  & 0.1664  & 0.1926  & 0.8637\\
		RL-Online & 2.5111  & 1.4031  & 0.1231  & 0.1497  & 0.8224  \\
		MLE-BS &1.6203  & 2.1187  & 0.2234  & 0.3010  & 0.7425\\
		MLE-MJD & 1.3877  & 2.4498  & 0.2489  & 0.3438  & 0.7238 \\
		& & & & & \\
		Buy-and-hold & N.A. & N.A. & 0.0984 & 0.2312 & 0.4256 \\
		\bottomrule
	\end{tabular}
	\caption{Mean return (MR), volatility (Vol), and Sharpe ratio (SR) of investment portfolios obtained by four methods on $5000$ test episodes. We show the learned $\phi_1$ and $\omega$ under each approach. For MLE-BS and MLE-MJD, $\phi_1$ and $\omega$ are calculated by plugging the MLE estimates to $\phi_1=\sigma\rho/(\sigma^2+\sigma_J^2)$ and ~\protect\eqref{eq:opt-Lagrange}, respectively. } \label{tab:testSPY}
\end{table}

\subsection{Effects of jumps} \label{sec:jump-effect}
The theoretical analysis so far in this section shows that, {\it for the MV portfolio selection problem}, one does not need to know in advance whether or not the stock prices have jumps in order to carry out the RL task, because the optimal stochastic policy is Gaussian and the corresponding value function and q-function have the same structures for parameterization irrespective of the presence of jumps. However, we stress that this is rather an exception than a rule. Here we give a counterexample.

Consider a modification of the MV portfolio selection problem where the controlled state follows
\begin{align} \label{eq:exdy2}
	dX_t^a = a_t \sigma\rho dt + a_t\sigma dB_t + \int_{\mathbb{R}} \gamma(a_t,z)\widetilde{N}(dt, dz),
\end{align}
with
\begin{align} \label{eq:gamma-a2}
	\gamma(a,z) = a^2.
\end{align}
The exploratory value function is given by
\begin{align} \label{eq:exploratory-MV2}
	J^*(t, x; w) =  \min_{\bpi}\mathbb{E}_{t,x}\bigg[ (\tilde{X}_{T}^{\bpi} -w)^2  + \theta  \int_t^T\int_{\mathcal{A}}\log\bpi(a|s, \tilde{X}_{s-}^{\bpi})\bpi(a|s, \tilde{X}_{s-}^{\bpi})dads\bigg] - (\omega-z)^{2}.
\end{align}
Note that this is {\it not} an MV portfolio selection problem because \eqref{eq:exdy2} does not correspond to a self-financed wealth equation with a reasonably modelled stock price process.

The Hamiltonian is given by
\begin{align}\label{eq:HalMV2}
	H(t,x, a,\partial_xv,  \partial_x^2v, v)&= a\sigma\rho \partial_xv(t,x) + \frac{1}{2} a^2\sigma^2 \partial_x^2v(t,x)\\ & \quad +\int_{\mathbb{R}} \big( v(t, x+  a^2) - v(t,x) - a^2 \cdot \partial_xv(t,x) \big) \nu(dz).
\end{align}
If an optimal stochastic policy exists, then it must be
\begin{align} \label{eq:pi-star-MV}
	\boldsymbol{\pi}^*(a| t, x) \propto \exp\bigg(-\frac{1}{\theta} H(t,x, a, \partial_xJ^*,  \partial_x^2J^*, J^*)\bigg).
\end{align}
We show by contradiction that the optimal stochastic policy can not be Gaussian in this case. Note that if there is no optimal stochastic policy, then it would already demonstrate that jumps matter because the optimal stochastic policy for the case of no jumps exists and is Gaussian.

\begin{remark}
The existence of optimal stochastic policy in \eqref{eq:pi-star-MV} is equivalent to the integrability of the quantity $\exp(\frac{1}{\theta} H(t,x, a, \partial_xJ^*,  \partial_x^2J^*, J^*))$ over $a \in \mathcal{A} = (-\infty, \infty)$. This integrability depends on the tail behavior of the Hamiltonian and, in particular, the behavior of $J^*(t, x+ a^2)$ when $a^2$ is large.
\end{remark}

Suppose the optimal stochastic policy $\boldsymbol{\pi}^*(\cdot|t,x)$ is Gaussian for all $(t,x)$, implying that  the Hamilitonian $H(t,x, a, \partial_xJ^*,  \partial_x^2J^*, J^*)$ is a quadratic function of $a$.
It then follows from \eqref{eq:HalMV2} that  there exist functions $h_1(t,x)$ and $h_2(t,x)$ such that
\begin{align}\label{eq:quad}
	J^*(t, x+  a^2 ) - J^*(t,x) - a^2 \partial_xJ^*(t,x) = a^2 \cdot h_1(t,x) + a \cdot h_2(t,x), \quad \text{for all $(t,x,a)$}.
\end{align}
We do not put a term independent of $a$ on the right-hand side because the left-hand side is zero when $a=0$.
Taking derivative with respect to $a$, we obtain
\begin{align}
	\partial_xJ^*(t, x+  a^2 ) \cdot 2a - 2a \partial_xJ^*(t,x) =  2a \cdot h_1(t,x) + h_2(t,x), \quad \text{for all $(t,x,a)$}.
\end{align}
Setting $a=0$, we get $h_2 = 0.$ It follows that
\begin{align}
	a \cdot \left[ \partial_xJ^*(t, x+  a^2 )  -  \partial_xJ^*(t,x) -   h_1(t,x)  \right] = 0  \quad \text{for all $(t,x, a)$}.
\end{align}
Hence, we have $h_1(t,x) = \partial_xJ^*(t, x+  a^2 )  -  \partial_xJ^*(t,x) $ for any $a \ne 0.$  Sending $a$ to zero yields $h_1(t,x) =0$ for all $(t,x)$. Therefore, we obtain from \eqref{eq:quad} that
\begin{align}
	J^*(t, x+  a^2 ) - J^*(t,x) - a^2 \partial_xJ^*(t,x) = 0, \quad \text{for all $(t,x,a)$}.
\end{align}
Taking derivative in $a$ in the above we have
\begin{align}
	\partial_xJ^*(t, x+  a^2 ) -  \partial_xJ^*(t,x) = 0, \quad \text{for all $(t,x,a)$}.
\end{align}
Thus, $\partial_xJ^*$ is constant in $x$ or $J^*$ is affine in $x$, leading to
$J^*(t,x) = g_1(t) x + g_2(t)$ for some functions $g_1(t)$ and $g_2(t)$. The resulting Hamiltonian becomes
\begin{align}
	H(t,x, a, \partial_xJ^*,  \partial_x^2J^*, J^*)&= a\sigma\rho g_1(t).
\end{align}
This is linear in $a \in \mathcal{A} = (-\infty, \infty)$ and hence the integral $\int_{\mathcal{A}}\exp(-\frac{1}{\theta} H(t,x, a, \partial_xJ^*,  \partial_x^2J^*, J^*)) da$ does not exist. It follows that $\boldsymbol{\pi}^*(\cdot| t, x)$ does not exist, which is a contradiction. Therefore, we have shown  that under \eqref{eq:gamma-a2}, the optimal stochastic policy either does not exist or is not Gaussian when it exists.

\begin{remark}
The argument above works for $\gamma(a,z) = a^m$ for any $m >1$. 
\end{remark}

\section{Application: Mean--Variance Hedging of Options}\label{sec:MV-hedge}
The MV portfolio selection problem considered in Section \ref{sec:mean-var} is an LQ problem. In this section, we present another application that is non-LQ. Consider an option seller who needs to hedge a short position in a European-style option that expires at time $T$. The option is written on a risky asset whose price process $S$ is described by the SDE \eqref{eq:stock-SDE} with condition \eqref{eq:exp-nu} satisfied. At the terminal time $T$, the seller pays $G(S_T)$ to the option holder. We assume that the seller's hedging activity will not affect the risky asset price.

To hedge the random payoff, the seller constructs a portfolio consisting of the underlying risky asset and cash. We consider discounted quantities in the problem: the discounted risky asset price $\hat{S}_t\coloneqq e^{-r_ft}S_t$ and discounted payoff $\hat{G}(\hat{S}_T):=e^{-r_fT}G(S_T)$, where $r_f$ is again the constant risk-free interest rate. As an example, for a put option with strike price $\mathcal{K}$, $G(S_T)=(\mathcal{K}-S_T)^+$ ($x^+:=\max(x,0)$), and
$\hat{G}(\hat{S}_T)=(\hat{\mathcal{K}} - \hat{S}_T)^+$, where $\hat{\mathcal{K}}:=e^{-r_fT}\mathcal{K}$. Here
$\hat{S}$ follows the SDE
\begin{equation}
	d\hat{S}_t=\hat{S}_{t-}\bigg(\rho\sigma dt+ \sigma dW_t + \int_{\mathbb{R}} (\exp(z)-1) \widetilde{N}(dt, dz)\bigg),\label{eq:stock-SDE-discount}
\end{equation}
where $\rho$ is defined in \eqref{eq:rho}.

We denote the discounted dollar value in the risky asset and the discounted value of the hedging portfolio at time $t$ by $a_t$ and $X_t$, respectively. As in the MV portfolio selection problem, $a$ is the control variable and $X$ follows the SDE \eqref{eq:wealth-det}. The seller seeks a hedging policy to minimize the deviation from the terminal payoff. A popular formulation is mean--variance hedging (also known as quadratic hedging), which considers the objective
\begin{equation}
	\min_{a}\mathbb{E}\Big[\big(X_{T}^{a}-\hat{G}(\hat{S}_T)\big)^{2}\Big].\label{eq:MV-hedge}
\end{equation}
Note that this expectation is taken under the real-world probability measure, rather than under any martingale measure for option pricing. An advantage of this formulation is that relevant data are observable in the real world (but not necessarily in a risk-neutral world). Although objective \eqref{eq:MV-hedge} looks similar to that of the MV portfolio selection problem, the target the hedging portfolio tries to meet is now random instead of constant. Furthermore, the payoff is generally nonlinear in $\hat{S}_T$. Thus, MV hedging is not an LQ problem. Also, both $\hat{S}_t$ and $X_t^{a}$ matter for the hedging decision at time $t$, whereas $\hat{S}_t$ is irrelevant for decision in the MV portfolio selection problem. This increase in the state dimension makes the hedging problem much more difficult to solve.

We consider the following exploratory formulation to encourage exploration for learning:
\begin{align} \label{eq:MV-hedge-expl}
	\min_{\bpi}\mathbb{E}\bigg[\big(\tilde{X}_{T}^{\bpi}-\hat{G}(\hat{S}_T)\big)^{2} + \theta \int_0^T  \int_{\mathcal{A}}\log \bpi(a|t, X_{t-}^{\bpi}) \bpi(a|t, X_{t-}^{\bpi})dadt\bigg],
\end{align}
where the discounted value of the hedging portfolio under a stochastic policy $\bpi$ follows \eqref{eq:wealth-expl}.

\subsection{Solution of the exploratory control problem}
We consider the HJB equation for problem \eqref{eq:MV-hedge-expl}:
\begin{equation}
	\partial_tV +   \inf_{ \boldsymbol{\pi} \in \mathcal{P} (\mathbb{R})}\int_{\mathbb{R}} \big(    H(t,S,x,a,\partial_SV,\partial_S^2V,\partial_xV,\partial_x^2V,\partial_S\partial_xV,V) + \theta \log \bpi(a | t,S,x)\big) \bpi(a | t,S,x) da = 0, \label{eq:HJB-MV-hedge}
\end{equation}
with terminal condition $V(T,S,x) = (x-\hat{G}(S))^2$. The Hamiltonian of the problem is given by
\begin{align}
	&\phantom{=~}H(t,S,x,a,\partial_SV,\partial_S^2V,\partial_xV,\partial_x^2V,\partial_S\partial_xV,V)\\
	&= \rho\sigma S \partial_SV + \frac{1}{2}\sigma^2S^2 \partial_S^2V + a\rho\sigma \partial_xV + \frac{1}{2}a^2\sigma^2 \partial_x^2V + a\sigma^2S\partial_S\partial_xV \\
	& + \int_{\mathbb{R}} \big( V(t,S+\tilde{\gamma}(S,z),x+\gamma(a, z)) - V(t,S,x) -\tilde{\gamma}(S,z)\partial_SV(t,S,x)-\gamma(a,z)\partial_xV(t,S,x) \big) \nu(dz)\label{eq:Hal-QH}
\end{align}
with $\gamma(a,z)=a(\exp(z)-1)$ and $\tilde{\gamma}(S,z)=S(\exp(z)-1)$.

We make the following ansatz for the solution of the HJB equation \eqref{eq:HJB-MV-hedge}:
\begin{equation}\label{eq:ansatz-MV-hedge}
	V(t,S,x)=(x-h(t,S))^2 f(t) + g(t,S).
\end{equation}
With this, we can simplify the integral term in the Hamiltonian and obtain
\begin{align}
	&\phantom{=~}H(t,S,x,a,\partial_SV,\partial_S^2V,\partial_xV,\partial_x^2V,\partial_S\partial_xV,V)\\
	&= \rho\sigma S \partial_SV + \frac{1}{2}\sigma^2S^2 \partial_S^2V + \frac{1}{2}a^2(\sigma^2+\sigma_J^2) \partial_x^2V \\
	&\phantom{=~}+ a\bigg(\rho\sigma \partial_xV + \sigma^2 S \partial_S\partial_xV - \int_{\mathbb{R}}(\exp(z)-1)\big(h(t,S\exp(z))-h(t,S)\big)\nu(dz)\bigg)\\
	&\phantom{=~}+\int_{\mathbb{R}} \big( V(t,S\exp(z),x) - V(t,S,x)- S(\exp(z)-1)\partial_SV(t,S,x)\big) \nu(dz).\label{eq:Hal-MV-hedge}
\end{align}
The probability density function that minimizes the integral in \eqref{eq:HJB-MV-hedge} is given by
\begin{equation}
	\bpi_c(\cdot|t,S,x)\propto \exp \bigg(-\frac{1}{\theta} H(t,S,x,a,\partial_SV,\partial_S^2V,\partial_xV,\partial_x^2V,\partial_S\partial_xV,V)\bigg),
\end{equation}
which is a candidate for the optimal stochastic policy.
From \eqref{eq:Hal-MV-hedge}, we obtain that $\bpi_c(\cdot|t,S,x)$ is given by
\begin{equation}
\mathcal{N}\bigg(\cdot\ \bigg| -\frac{\rho\sigma \partial_xV + \sigma^2 S \partial_S\partial_xV - \int_{\mathbb{R}}(\exp(z)-1)(h(t,S\exp(z))-h(t,S))\nu(dz)\partial_x^2V}{(\sigma^2+\sigma_J^2)\partial_x^2V}, \frac{\theta}{(\sigma^2+\sigma_J^2)\partial_x^2V}\bigg).
\end{equation}
Substituting it back to the HJB equation \eqref{eq:HJB-MV-hedge}, we obtain
a nonlinear PIDE
\begin{align}
	&\phantom{+~}\partial_tV + \rho\sigma S\partial_SV + \frac{1}{2}\sigma^2S^2\partial_S^2V\\
	&+ \int_{\mathbb{R}} \big( V(t,S\exp(z),x) - V(t,S,x)- S(\exp(z)-1)\partial_SV(t,S,x)\big) \nu(dz)\\
	&-\frac{(\rho\sigma \partial_XV + \sigma^2 S\partial_S\partial_xV-\int_{\mathbb{R}}(\exp(z)-1)(h(t,S\exp(z))-h(t,S))\nu(dz)\partial_x^2V)^2}{2(\sigma^2+\sigma_J^2)\partial_x^2V}\\
	&-\frac{\theta}{2}\ln\frac{2\pi\theta}{(\sigma^2+\sigma_J^2)\partial_x^2V}=0,\quad (t,S,x)\in[0,T)\times\mathbb{R}_+\times\mathbb{R},\ V(T,x)=(x-\hat{G}(S))^2.
\end{align}
We plug in the ansatz \eqref{eq:ansatz-MV-hedge} to the above PIDE. After some lengthy calculations, we can collect similar terms and obtain the following equations satisfied by $f$, $h$ and $g$:
\begin{align}
	& f'(t)-\frac{\rho^2\sigma^2}{\sigma^2+\sigma_J^2}f(t)=0,\ f(T)=1, \label{eq:f-ode-MV-hedge}\\
	&\partial_th + \int_\mathbb{R} \big( h(t,S\exp(z)) - h(t,S)- S(\exp(z)-1)\partial_Sh(t,S)\big) \bigg(1-\frac{\rho\sigma}{\sigma^2+\sigma_J^2}(\exp(z)-1)\bigg)\nu(dz)\\
	& + \frac{1}{2}\sigma^2 S^2 \partial_S^2h = 0, \quad (t,S)\in[0,T)\times\mathbb{R}_+,\ h(T,S)=\hat{G}(S), \label{eq:h-pde-MV-hedge}\\
	&\partial_tg + \rho\sigma S \partial_Sg + \frac{1}{2}\sigma^2 S^2 \partial_S^2g + \int_\mathbb{R} \big(g(t,S\exp(z)) - g(t,S)- S(\exp(z)-1)\partial_Sg(t,S)\big)\nu(dz) \\
	& + \sigma^2 S^2(\partial_Sh)^2f(t) + \int_{\mathbb{R}}\big(h(t,S\exp(z))-h(t,S)\big)^2f(t)\nu(dz) \\
	& - \frac{f(t)}{\sigma^2+\sigma_J^2}\bigg(\sigma^2S\partial_Sh + \int_{\mathbb{R}}(\exp(z)-1)\big(h(t,S\exp(z))-h(t,S)\big)\nu(dz)\bigg)^2 \\	
	& - \frac{\theta}{2}\ln\frac{\pi\theta}{(\sigma^2+\sigma_J^2)f(t)} = 0, \quad (t,S)\in[0,T)\times\mathbb{R}_+,\ g(T,S)=0. \label{eq:g-pde-MV-hedge}
\end{align}
The function $f$ is given by
\begin{equation}\label{eq:ft}
f(t)=\exp\bigg(-\frac{\rho^2\sigma^2}{\sigma^2+\sigma_J^2}(T-t)\bigg).	
\end{equation}
However, the two PIDEs cannot be solved in closed form in general. Below we first present some properties of $h$ and $g$.
\begin{lemma}\label{lem:pde-MV-hedge}
	Assume $\sigma>0$ and $\hat{G}(S)$ is Lipschitz continuous in $S$. The PIDE \eqref{eq:h-pde-MV-hedge} has a unique solution in $C^{1,2}([0,T)\times\mathbb{R}_{+})\cap C([0,T]\times\mathbb{R}_{+})$ that is Lipschitz continuous in $S$. The PIDE \eqref{eq:g-pde-MV-hedge} also has a unique solution in $C^{1,2}([0,T)\times\mathbb{R}_{+})\cap C([0,T]\times\mathbb{R}_{+})$. Moreover, there exists a constant $C>0$ such that
	\begin{equation}
		|h(t,S)|\leq C(1+S),\quad |g(t,S)|\leq C(1+S^2)\ \ \text{for any}\ (t,S)\in[0,T]\times\mathbb{R}_+.
	\end{equation}
\end{lemma}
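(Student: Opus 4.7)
Both PIDEs are linear, so I will construct their solutions via Feynman--Kac representations against appropriate auxiliary jump-diffusions, then read off the claimed regularity and growth bounds from those representations.

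For equation \eqref{eq:h-pde-MV-hedge}, the non-local operator is (formally) the infinitesimal generator of a jump-diffusion with modified L\'evy measure $\tilde\nu(dz)=\phi(z)\nu(dz)$, where $\phi(z):=1-\frac{\rho\sigma}{\sigma^2+\sigma_J^2}(e^z-1)$; up to the absent drift, this corresponds to $\hat S$ under the minimal martingale measure used in \cite{schweizer1996approximation}. My plan is: first, change variables to $y=\log S$, $\tilde h(t,y)=h(t,e^y)$, which yields a linear parabolic PIDE with constant coefficients and Lipschitz terminal data $\hat G(e^y)$; second, appeal to classical existence/uniqueness theory for linear parabolic PIDEs (using that $\sigma>0$ provides uniform parabolicity) to obtain a unique classical solution $\tilde h\in C^{1,2}([0,T)\times\mathbb R)\cap C([0,T]\times\mathbb R)$, with the Lipschitz continuity of $\hat G$ propagating to Lipschitz continuity of $\tilde h$ in $y$; third, convert back to $S$ and, when $\phi\ge 0$, use the Feynman--Kac representation $h(t,S)=\mathbb E[\hat G(\tilde S_T)\mid\tilde S_t=S]$ with $\tilde S$ the process whose generator matches the operator, to deduce $|h(t,S)|\le C(1+S)$ from the linear growth of $\hat G$ and the moment bound $\mathbb E[\tilde S_T\mid\tilde S_t=S]\le CS$ (permitted by condition~\eqref{eq:exp-nu}). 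Lipschitz continuity of $h$ in $S$ then follows either from differentiating the Feynman--Kac representation in the initial state or directly from a flow-derivative estimate for $\tilde S$.

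For equation \eqref{eq:g-pde-MV-hedge}, the homogeneous part of the operator is precisely the infinitesimal generator of $\hat S$ under the real-world measure $\mathbb P$. Once $h$ is in hand, the remaining terms assemble into an explicit source $F(t,S)$ built from $h$, $\partial_S h$ and $f$. By Feynman--Kac,
$$g(t,S)=\mathbb E\!\left[\int_t^T F(s,\hat S_s)\,ds\,\bigg|\,\hat S_t=S\right],$$
which yields existence and uniqueness in $C^{1,2}([0,T)\times\mathbb R_+)\cap C([0,T]\times\mathbb R_+)$. For the quadratic bound, the Lipschitz continuity of $h$ supplies boundedness of $\partial_S h$ and the pointwise estimate $|h(t,Se^z)-h(t,S)|\le CS|e^z-1|$, so $|F(t,S)|\le C(1+S^2)$; combining with the second-moment estimate $\mathbb E[\hat S_s^2\mid\hat S_t=S]\le C(1+S^2)$ (again from \eqref{eq:exp-nu} and the linear-growth of the SDE coefficients) gives $|g(t,S)|\le C(1+S^2)$.

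The delicate point I anticipate is that $\phi(z)$ need not be nonnegative: for large positive $z$, the factor $\frac{\rho\sigma}{\sigma^2+\sigma_J^2}(e^z-1)$ may exceed $1$, so $\phi(z)\nu(dz)$ is in general only a signed measure and the $h$-operator is not automatically the generator of a Markov process, obstructing the direct probabilistic Feynman--Kac argument. To handle this I plan to split $\phi=\phi^+-\phi^-$ and treat the $\phi^-$-piece as a bounded perturbation of the remaining generator (since $\int(e^z-1)^2\nu(dz)<\infty$ by \eqref{eq:exp-nu}), constructing the solution via Picard iteration around the Gaussian heat semigroup in log-coordinates. This delivers the required classical regularity even without a probabilistic representation; the linear-growth bound is then recovered analytically by a maximum-principle/comparison argument using the linearly growing barrier $C(1+S)$ in the PIDE.
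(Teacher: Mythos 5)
Your proposal is correct, and its backbone coincides with the paper's proof: both construct $h$ and $g$ through Feynman--Kac representations (the paper's \eqref{eq:h-rep} and \eqref{eq:g-rep}), obtain classical regularity from linear parabolic PIDE theory (the paper delegates this to Proposition 12.1 of \cite{tankov2004financial}, which itself works in log-coordinates exactly as you do), derive the linear bound on $h$ from the Lipschitz continuity of $\hat G$ plus the martingale property of $\hat S$ under $\mathbb{Q}$, and derive the quadratic bound on $g$ from the boundedness of $\partial_S h$, the estimate $|h(t,Se^z)-h(t,S)|\le CS|e^z-1|$ and the second-moment bound on $\hat S$; uniqueness in both cases is the Feynman--Kac uniqueness class. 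Where you genuinely depart from the paper is in the treatment of the modified jump intensity $\bigl(1-\tfrac{\rho\sigma}{\sigma^2+\sigma_J^2}(e^z-1)\bigr)\nu(dz)$. The paper takes the measure change $d\mathbb{Q}=\Lambda_T\,d\mathbb{P}$ at face value, noting only that \eqref{eq:exp-nu} makes $\Lambda$ a martingale with unit expectation; it does not address the fact that when $\rho>0$ and $\nu$ charges large positive $z$ (as in the Merton specification used in the paper's experiments) the density $\Lambda$ can jump through zero, so $\mathbb{Q}$ is in general a signed measure and the compensator in \eqref{eq:underQ} is not a L\'evy measure. Your contingency plan --- split the weight into positive and negative parts, absorb the negative part as a bounded perturbation (it is a finite measure, being supported on $\{z>z_0\}$ for some $z_0>0$ and dominated there by $e^z\nu(dz)$, which is integrable by \eqref{eq:exp-nu}), build the solution by Picard iteration around the heat semigroup, and recover the growth bound by comparison with the barrier $C(1+S)$ --- is a legitimate purely analytic substitute that closes a gap the paper glosses over, at the cost of losing the probabilistic interpretation of $h$ as an option price under a martingale measure (which the paper exploits later for its actor parametrization). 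One small imprecision in your write-up: Lipschitz continuity of $\hat G$ in $S$ does not become Lipschitz continuity of $\tilde h$ in $y=\log S$ (the terminal datum $\hat G(e^y)$ grows exponentially in $y$); the correct route, which you also mention, is the flow-derivative argument using that $\hat S_T$ is linear in its initial value, giving $|h(t,S)-h(t,S')|\le C|S-S'|$ directly.
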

Next, we provide stochastic representations for the solutions to PIDEs \eqref{eq:h-pde-MV-hedge} and \eqref{eq:g-pde-MV-hedge}, which will be subsequently exploited for our RL study. Construct a new probability measure $\mathbb{Q}$ from the given probability measure $\mathbb{P}$ with the Radon--Nikodym density process $\Lambda_s:=\frac{d\mathbb{Q}}{d\mathbb{P}}|_{\mathcal{F}_s}$ defined by
\begin{equation}
	d\Lambda_s = -\frac{\rho\sigma}{\sigma^2+\sigma_J^2}\Lambda_{s-} \bigg(\sigma dW_s + \int_{\mathbb{R}} (\exp(z)-1) \widetilde{N}(ds, dz)\bigg),\ \Lambda_t = 1.
\end{equation}
Condition \eqref{eq:exp-nu} guarantees that $\{\Lambda_s: t\leq s\leq T\}$ is a true martingale with unit expectation. The  standard measure change results yield that, under $\mathbb{Q}$,
\begin{equation}\label{eq:underQ}
	W'_s := W_s + \frac{\rho\sigma^2}{\sigma^2+\sigma_J^2}(s-t),\ \widetilde{N}'(ds,dz) := N(ds,dz) - \bigg(1-\frac{\rho\sigma}{\sigma^2+\sigma_J^2}(\exp(z)-1)\bigg)\nu(dz)ds
\end{equation}
are standard Brownian motion and compensated Poisson random measure, respectively. Using them we can rewrite the SDE for $\hat{S}$ as
\begin{equation}
	d\hat{S}_s = \hat{S}_{s-}\bigg(\sigma dW'_s + \int_{\mathbb{R}} (\exp(z)-1) \widetilde{N}'(ds, dz)\bigg).\label{eq:stock-MV-hedge-Q}
\end{equation}
Let $Y_t:=1+\sigma W_t + \int_0^t\int_{\mathbb{R}} (\exp(z)-1) \widetilde{N}'(ds, dz)$. Clearly, it is a L\'evy process and also a martingale under condition \eqref{eq:exp-nu}. Process $\hat{S}$ is the stochastic exponential of $Y$ and it follows from \cite[Proposition 8.23]{tankov2004financial} that $\hat{S}$ is also a martingale under $\mathbb{Q}$. The Feynman--Kac theorem gives the representation of the solution $h$ to PIDE \eqref{eq:h-pde-MV-hedge} as
\begin{equation}
	h(t,S) = \mathbb{E}^{\mathbb{Q}}[\hat{G}(\hat{S}_T)\big|\hat{S}_t=S],\quad t\in[0,T]. \label{eq:h-rep}
\end{equation}
We can view $\mathbb{Q}$ as the martingale measure for option valuation and interpret $h(t,S)$ as the option price at time $t$ and underlying price  $S$. For PIDE \eqref{eq:g-pde-MV-hedge}, again by the Feynman--Kac theorem we obtain that its solution $g$ is given by
\begin{align}
	g(t,S) &= g_e(t,S)-\int_t^T\frac{\theta}{2}\ln\frac{\pi\theta}{(\sigma^2+\sigma_J^2)f(s)}ds\\
	& = g_e(t,S) - \frac{\theta}{2}\bigg(\ln\bigg(\frac{\pi\theta}{\sigma^2 + \sigma_J^2}\bigg) (T - t) + \frac{\rho^2\sigma^2}{2(\sigma^2 + \sigma_J^2)}(T-t)^2\bigg), \label{eq:g-rep}
\end{align}
where
\begin{align}
	&g_e(t,S):=\mathbb{E}^{\mathbb{P}}\bigg[\int_t^T f(s)\bigg(\sigma^2 \hat{S}_s^2(\partial_Sh(s,\hat{S}_s))^2 + \int_{\mathbb{R}}\big(h(s,\hat{S}_s\exp(z))-h(s,\hat{S}_s)\big)^2\nu(dz)\bigg)ds\\
	& - \int_t^T\frac{f(s)}{\sigma^2+\sigma_J^2}\bigg(\sigma^2\hat{S}_s\partial_Sh(s,\hat{S}_s) + \int_{\mathbb{R}}(\exp(z)-1)\big(h(s,\hat{S}_s\exp(z))-h(s,\hat{S}_s)\big)\nu(dz)\bigg)^2ds\bigg|\hat{S}_t=S\bigg].\label{eq:g_e}
\end{align}

Using the expression for $V(t,S,x)$, we obtain $\bpi_c(\cdot|t,S,x)$ as
\begin{equation}\label{eq:policy-optimal-MV-hedge}
	\bpi_c(\cdot|t,S,x)\sim\mathcal{N}\bigg(\cdot\ \bigg|\ \mathcal{M}^*(t,S,x), \frac{\theta}{2(\sigma^2+\sigma_J^2)}\exp\bigg(\frac{\rho^2\sigma^2}{\sigma^2+\sigma_J^2}(T-t)\bigg)\bigg),
\end{equation}
where
\begin{equation}\label{eq:mean-policy-optimal-MV-hedge}
	\mathcal{M}^*(t,S,x)=\frac{\sigma^2S\partial_Sh(t,S)+\int_{\mathbb{R}}(\exp(z)-1)(h(t,S\exp(z))-h(t,S))\nu(dz)-\rho\sigma(x-h(t,S))}{\sigma^2+\sigma_J^2}.
\end{equation}
Using Lemma \ref{lem:pde-MV-hedge}, we can directly verify that
$\bpi_c$ is admissible. It is also obvious to see that $V(t,S,x)$ given by \eqref{eq:ansatz-MV-hedge} has quadratic growth in $S$ and $x$. Therefore, by Lemma \ref{lem:expHJB}, we have the following conclusion.
\begin{proposition}\label{prop:MV-opt}
	For the MV hedging problem \eqref{eq:MV-hedge}, the optimal value function $J^*(t,S,x)=V(t,S,x)$ and the optimal stochastic policy $\bpi^*=\bpi_c$ under the assumptions that $\sigma>0$ and $\hat{G}(S)$ is Lipschitz continuous in $S$.
\end{proposition}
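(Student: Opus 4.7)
The plan is to apply Lemma \ref{lem:expHJB} to the minimization version of the exploratory HJB equation \eqref{eq:HJB-MV-hedge}, with $V(t,S,x)$ defined by the ansatz \eqref{eq:ansatz-MV-hedge} where $f$, $h$, $g$ are the solutions of \eqref{eq:f-ode-MV-hedge}, \eqref{eq:h-pde-MV-hedge}, and \eqref{eq:g-pde-MV-hedge}, respectively. Existence, uniqueness, and regularity of $h$ and $g$ in $C^{1,2}([0,T)\times\mathbb{R}_+)\cap C([0,T]\times\mathbb{R}_+)$, together with the Lipschitz property of $h$ in $S$ and the growth bounds $|h(t,S)|\le C(1+S)$, $|g(t,S)|\le C(1+S^2)$, are already guaranteed by Lemma \ref{lem:pde-MV-hedge}. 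Since $f$ is smooth and bounded on $[0,T]$, it follows immediately that $V\in C^{1,2}([0,T)\times\mathbb{R}_+\times\mathbb{R})\cap C([0,T]\times\mathbb{R}_+\times\mathbb{R})$ and has polynomial (indeed quadratic) growth in $(S,x)$, which furnishes the growth condition \eqref{eq:OVF-PolyG}.

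Next, I would check that $V$ solves the exploratory HJB equation \eqref{eq:HJB-MV-hedge}. By plugging the ansatz into the Hamiltonian one arrives at the simplified expression \eqref{eq:Hal-MV-hedge}, which is a quadratic function of $a$ whose coefficient on $a^2$, namely $\tfrac{1}{2}(\sigma^2+\sigma_J^2)\partial_x^2 V = (\sigma^2+\sigma_J^2)f(t)>0$, is strictly positive. Hence the inner minimization over $\bpi\in\mathcal{P}(\mathbb{R})$ is attained by a Gaussian density $\bpi_c(\cdot\mid t,S,x)$ with mean $\mathcal{M}^*(t,S,x)$ and variance $\theta/((\sigma^2+\sigma_J^2)\partial_x^2 V)$, giving \eqref{eq:policy-optimal-MV-hedge} after substituting $\partial_x^2V=2f(t)$. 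Plugging $\bpi_c$ back and separating terms according to their dependence on $x$ (quadratic in $x-h$, linear in $x-h$, and independent of $x$) yields exactly the three equations \eqref{eq:f-ode-MV-hedge}, \eqref{eq:h-pde-MV-hedge}, and \eqref{eq:g-pde-MV-hedge}. So by construction $V$ satisfies the HJB equation with the stated terminal condition $V(T,S,x)=(x-\hat G(S))^2$, and the integrability condition $\int_{\mathcal{A}}\exp(-\tfrac{1}{\theta}H)\,da<\infty$ needed for the Gibbs step in Lemma \ref{lem:expHJB} holds because the integrand is a Gaussian density times a normalizing constant.

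It then remains to verify that $\bpi_c$ lies in the admissible class $\bPi$ of Definition \ref{def:policy}. Items (i) and (iii) are immediate for Gaussian families with locally Lipschitz parameters, so the crux is the continuity/local Lipschitz dependence on $(t,S,x)$ of the mean $\mathcal{M}^*(t,S,x)$ given by \eqref{eq:mean-policy-optimal-MV-hedge}, together with condition (ii) of Assumption \ref{assump:jump}. These follow from Lemma \ref{lem:pde-MV-hedge}: the Lipschitz continuity of $h$ and $\partial_S h$ in $S$ (the latter obtained from the stochastic representation \eqref{eq:h-rep} by differentiating under the expectation using the flow of \eqref{eq:stock-MV-hedge-Q} and the Lipschitz payoff $\hat G$), combined with the integrability \eqref{eq:exp-nu} which controls the integral term $\int_{\mathbb{R}}(e^z-1)(h(t,Se^z)-h(t,S))\nu(dz)$. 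The affine-in-$x$ structure of $\mathcal{M}^*$ then gives the Lipschitz-in-$x$ property and polynomial growth required in Definition \ref{def:policy} and Assumption \ref{assump:jump}(ii). With admissibility secured, Lemma \ref{lem:expHJB} immediately yields $J^*=V$ and $\bpi^*=\bpi_c$.

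The main obstacle I anticipate is the admissibility verification, and specifically establishing enough regularity of $\partial_S h(t,S)$ and of the jump integral in $\mathcal{M}^*$ to carry through the local Lipschitz estimate in $S$ uniformly on compact sets; the HJB verification itself is a bookkeeping exercise that uses only the equations \eqref{eq:f-ode-MV-hedge}--\eqref{eq:g-pde-MV-hedge} built into the ansatz. Once the smoothness of $h$ is pinned down via its martingale representation \eqref{eq:h-rep} under $\mathbb{Q}$, the rest of the argument is routine.
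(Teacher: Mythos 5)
Your proposal follows essentially the same route as the paper: verify that the ansatz \eqref{eq:ansatz-MV-hedge} built from the solutions of \eqref{eq:f-ode-MV-hedge}--\eqref{eq:g-pde-MV-hedge} solves the exploratory HJB equation, use Lemma \ref{lem:pde-MV-hedge} for the regularity and quadratic growth of $V$ and for the admissibility of $\bpi_c$, and then invoke Lemma \ref{lem:expHJB}. The paper states the admissibility check only briefly ("directly verify... using Lemma \ref{lem:pde-MV-hedge}"), so your added discussion of the Lipschitz regularity of $\partial_S h$ and the jump integral is a reasonable elaboration of the same argument rather than a different approach.
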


The mean part of $\bpi^*$, $\mathcal{M}^*(t,S,x)$,  comprises three terms. The quantity $\partial_Sh(t,S)$ is the delta of the option and hence $S\partial_Sh(t,S)$ gives the dollar amount of the risky asset as required by delta hedging, which is used to hedge continuous price movements. The integral term shows the dollar amount of the risky asset that should be held to hedge discontinuous changes. The two hedges are combined using weights $\sigma^2/(\sigma^2+\sigma_J^2)$ and $1/(\sigma^2+\sigma_J^2)$. The last term reflects the adjustment that needs to be made due to the discrepancy between the hedging portfolio value and option price. It is easy to show that $\mathcal{M}^*(t,S,x)$ is the optimal deterministic policy of problem \eqref{eq:MV-hedge}.

The variance of $\bpi^*$ is the same as that in the MV portfolio selection problem (c.f. \eqref{eq:policy-optimal-MV}), which decreases as $t$ approaches the terminal time $T$. This implies that one gradually reduces exploration over time.

\subsection{Parametrizations and actor--critic learning}
We use the previously derived solution of the exploratory problem as knowledge for RL. As we will see, the exploratory solution lends itself to natural parametrizations for the policy (``actor") and value function (``critic"), but less so for parametrizing the q-function.

To simplify the integral in $\mathcal{M}^*(t,S,x)$  defined in \eqref{eq:mean-policy-optimal-MV-hedge}, we first parametrize the L\'evy density $\nu(z)$ as in \eqref{eq:Levy-den-Gauss} where we have three parameters:
$\lambda$ is the jump arrival rate, and $m$ and $\delta$ are the mean and standard deviation of the normal distribution for the jump size.
We then use the Fourier-cosine method developed in \cite{fang2009novel} to obtain an expression for $h(t,S)$. Let $Y_t:=\log(\hat{S}_t/\hat{S}_0)$. Applying It\^o's formula,  under $\mathbb{P}$ we obtain
\begin{equation}
	d\log \hat{S}_t = \bigg(\rho\sigma - \frac{1}{2}\sigma^2 - \lambda\bigg(\exp\bigg(m +\frac{1}{2}\delta^2\bigg)-1\bigg)\bigg) dt  + \sigma dW_t + \int_{\mathbb{R}}zN(dt,dz).
\end{equation}
Noting \eqref{eq:underQ}, under $\mathbb{Q}$ we have
\begin{equation}
	d\log \hat{S}_t = \bigg(\rho\sigma\frac{\sigma_J^2}{\sigma^2+\sigma_J^2} - \frac{1}{2}\sigma^2 - \lambda\bigg(\exp\bigg(m +\frac{1}{2}\delta^2\bigg)-1\bigg)\bigg)dt  + \sigma dW'_t + \int_{\mathbb{R}}zN(dt,dz),
\end{equation}
where the L\'evy measure of $N(dt,dz)$ becomes
\begin{equation}
	\nu'(dz)=\bigg(1-\frac{\rho\sigma}{\sigma^2+\sigma_J^2}\big(\exp(z)-1\big)\bigg)\frac{\lambda}{\sqrt{2\pi\delta^2}}\exp\bigg(-\frac{(z-m)^2}{2\delta^2}\bigg)dz.
\end{equation}
The characteristic function of $Y_t$ under $\mathbb{Q}$ is given by
\begin{equation}
	\Phi_{Y_t}^{\mathbb{Q}}(u) = \exp\bigg( iut\left(\rho\sigma\frac{\sigma_J^2}{\sigma^2+\sigma_J^2}- \frac{1}{2}\sigma^2 - \lambda\left(\exp\left(m +\frac{1}{2}\delta^2\right)-1\right)\right) - \frac{1}{2}\sigma^2u^2 t  + t A(u) \bigg),
\end{equation}
where
\begin{align}
	A(u)=&\int_{\mathbb{R}}\big(\exp(iuz)-1\big)\nu'(dz)\\
	=& \lambda\bigg(\exp\bigg(ium-\frac{1}{2}\delta^2u^2\bigg) -1\bigg) - \frac{\lambda\rho\sigma}{\sigma^2+\sigma_J^2}\times\\
	&\bigg(\exp\bigg(m +\frac{1}{2}\delta^2 + iu(m+\delta^2)-\frac{1}{2}\delta^2u^2\bigg) -\exp\bigg(ium - \frac{1}{2}\delta^2u^2\bigg) -\exp\bigg(m +\frac{1}{2}\delta^2\bigg)+1\bigg).
\end{align}
The Fourier-cosine method then calculates $h(t,S)$ as
\begin{equation}\label{eq:FCos-h}
	\hat{h}(t,S) := \sideset{}{'}\sum_{k=0}^{N-1} \Re\bigg(\Phi_{Y_t}^{\mathbb{Q}}\bigg(\frac{k\pi}{b-a}\bigg)\exp\bigg(ik\pi\frac{\ln(S/\hat{\mathcal{K}})-a}{b-a}\bigg)\bigg) V_k.
\end{equation}
Here, $\Re(x)$ denotes the real part of complex number $x$, $\Sigma'$ indicates the first term in the sum is halved, $\hat{\mathcal{K}}$ is the discounted strike price of the call or put option under consideration, and $[a,b]$ is the truncation region for $Y_T$ (chosen wide enough so that the probability of $Y_T$ going outside under $\mathbb{Q}$ can be neglected). The expression of $V_k$ is given by Eqs. (24) and (25) in \cite{fang2009novel} for call and put options, respectively, and it only depends on $a, b, \hat{\mathcal{K}}$. We can also calculate $\partial_Sh(t,S)$ by differentiating $\hat{h}(t,S)$ w.r.t. $S$, which yields
\begin{equation}\label{eq:FCos-hS}
	\partial_S\hat{h}(t,S) := \sideset{}{'}\sum_{k=0}^{N-1}  \Re\bigg(\Phi_{Y_t}^{\mathbb{Q}}\bigg(\frac{k\pi}{b-a}\bigg)\frac{ik\pi}{(b-a)S}
\exp\bigg(ik\pi\frac{\ln(S/\hat{\mathcal{K}})-a}{b-a}\bigg)\bigg)V_k.
\end{equation}

\textbf{Actor parametrization}. We follow the form of the optimal stochastic policy given in \eqref{eq:policy-optimal-MV-hedge} to parametrize our policy. To obtain a parsimonious parametrization, we introduce $\phi_1, \phi_2, \phi_3, \phi_4, \phi_5$, which respectively represent the agent's understanding -- relative to her learning -- of drift $\mu$, volatility $\sigma$, jump arrival rate $\lambda$, mean $m$ and standard deviation $\delta$ of the jump size of the risky asset. We emphasize that in our learning algorithm these parameters are not {\it estimated} by any statistical method (and hence they do not  necessarily correspond to the true dynamics of the risky asset); rather they will be updated based on the hedging experiences (via exploration and exploitation) from the real environment. To implement \eqref{eq:policy-optimal-MV-hedge}, we also need two dependent parameters $\phi_6$ and $\phi_7$ that are calculated by
\begin{equation}\label{eq:dep-params}
	\phi_6=\frac{\phi_1-r_f}{\phi_2},\
	\phi_7=\sqrt{\phi_3\bigg(\exp(2\phi_4+2\phi_5^2)-2\exp\bigg(\phi_4+\frac{1}{2}\phi_5^2\bigg)+1\bigg)}.	
\end{equation}
These two parameters show the agent's understanding of $\rho$ and $\sigma_J$, and they will be updated in the learning process by the above equations with the update of the independent parameters $\phi_1$ to $\phi_5$. Denote $\phi=(\phi_i)_{i=1}^7$.

We use $\phi_1,\phi_2,\phi_3,\phi_4,\phi_5,\phi_6,\phi_7$ to replace their counterparts $\mu,\sigma,\lambda,m,\delta,\rho,\sigma_J$ in \eqref{eq:policy-optimal-MV-hedge} to obtain a parametrized policy.
For the mean part, we calculate the option price and its delta by plugging $\phi$ into \eqref{eq:FCos-h} and \eqref{eq:FCos-hS}, with the results denoted by $\hat{h}^{\phi}(t,S)$ and $\hat{h}_S^{\phi}(t,S)$ respectively, and approximate the integral by Gauss-Hermite (G-H) quadrature, which yields
\begin{equation}
	\mathcal{M}^{\phi}(t,S,x) = \frac{\phi_2^2S\partial_S\hat{h}^{\phi}(t,S)+\frac{\phi_3}{\sqrt{2\pi}}\sum\limits_{q=1}^Qw_q(\exp(z_q)-1)\left(\hat{h}^{\phi}(t,S\exp(z_q))-\hat{h}^{\phi}(t,S)\right)-\phi_2\phi_6(x-\hat{h}^{\phi}(t,S))}{\phi_2^2+\phi_7^2},
\end{equation}
where $z_q = \phi_4+\phi_5 u_q$ and $\{(w_q,u_q), q=1,\cdots,Q\}$ are the weights and abscissa of the $Q$-point G-H rule that are predetermined and independent of $\phi$. For the variance part, we use $\phi$ in \eqref{eq:ft} to obtain $f^\phi(t)$. Thus, our parametrized policy is given by
\begin{equation}\label{eq:param-policy-MV-hedge}
	\bpi^{\phi}(\cdot|t,S,x)\sim\mathcal{N}\bigg(\cdot\ \bigg|\ \mathcal{M}^\phi(t,S,x), \frac{\theta}{2(\phi_2^2+\phi_7^2)f^\phi(t)}\bigg).
\end{equation}

\textbf{Critic parametrization}. For the value function of $\bpi^{\phi}$, following \eqref{eq:ansatz-MV-hedge} we parametrize it as
\begin{equation}
	J^{\psi,\phi}(t,S,x) = (x-\hat{h}^{\phi}(t,S))^2 f^\phi(t)
	  + g_e^{\psi}(t,S) - \frac{\theta}{2}\bigg(\ln\bigg(\frac{\pi\theta}{\phi_2^2 + \phi_7^2}\bigg) (T - t) + \frac{\phi_2^2\phi_6^2}{2(\phi_2^2 + \phi_7^2)}(T-t)^2\bigg).
\end{equation}
We do not specify any parametric form for $g_e^{\psi}(t,S)$, but will learn it using Gaussian process regression (\citealp{williams2006gaussian}), where $\psi$ denotes its parameters.
We choose Gaussian process for our problem because it is able to generate flexible shapes and  can often achieve a reasonably good fit with a moderate amount of data.

\textbf{Actor--critic learning}.  Consider a time grid $\{t_k\}_{k=0}^K$ with $t_0=0$, $t_K=T$, and time step $\Delta t$. In an iteration, we start with a given policy $\bpi^{\phi}$ and use it to collect $M$ episodes from the environment, where the $m$-th episode is given by $\{(t_k, \hat{S}_{t_k}^{(m)}, X_{t_k}^{\bpi^\phi,(m)}, a_{t_k}^{\bpi^\phi,(m)})\}_{k=0}^K$. We then perform two steps.

In the critic step, the main task is to learn $g_e^{\psi}(t_k,S)$ as a function of $S$ for each $t_k$. The stochastic representation \eqref{eq:g_e}  shows that $g_e$ can be viewed as the value function of a stream of rewards that depend on time and the risky asset price; so the task is a policy evaluation problem. We first obtain the running reward path in every episode, where the integral involving the L\'evy measure is again calculated by the $Q$-point G-H rule. Specifically, the reward at time $t_k$ for $k=0,\cdots, K-1$ is given by
\begin{align}
&R_{t_k}^g:=f^\phi(t_k)\bigg(\phi_2^2 \hat{S}_{t_k}^2(\partial_S\hat{h}^\phi(t_k,\hat{S}_{t_k}))^2 + \frac{\phi_3}{\sqrt{2\pi}}\sum_{q=1}^Qw_q\big(\hat{h}^{\phi}(t_k,\hat{S}_{t_k}\exp(z_q))-\hat{h}^{\phi}(t_k,\hat{S}_{t_k})\big)^2\bigg)\Delta t \\
&-\frac{f^\phi(t_k)}{\phi_2^2+\phi_7^2}\bigg(\phi_2^2\hat{S}_{t_k}\partial_S\hat{h}^\phi(t_k,\hat{S}_{t_k}) + \frac{\phi_3}{\sqrt{2\pi}}\sum_{q=1}^Qw_q(\exp(z_q)-1)\big(\hat{h}^\phi(t_k,\hat{S}_{t_k}\exp(z_q))-\hat{h}^\phi(t_k,\hat{S}_{t_k})\big)\bigg)^2\Delta t,
\end{align}
where the risky asset price path $\{(\hat{S}_{t_k})\}_{k=0}^{K-1}$ is collected from the real environment.
We stress again that the above calculation uses the agent's parameters $\phi$ and does not involve any parameters of the true dynamics. We then fit a Gaussian process to the sample of the cumulative rewards $\{\sum_{\ell=k}^{K-1}R_{t_\ell}^{g,(m)}\}_{m=1}^M$ to estimate $g_e^{\psi}(t_k,S)$. Figure \ref{fig:GP-fit} illustrates the fit at three time points with the radial basis kernel. 
The Gaussian process is able to provide a good fit to the shape of $g_e(t,S)$ as a function of $S$ for each given $t$.

In the actor step, we update the policy parameter $\phi_i$ for $i=1,\cdots,5$ by
\begin{align}\label{eq:update-MV-hedge}
	\phi_i \leftarrow &~\phi_i  -  \frac{\alpha_{\phi_i}}{M}\sum_{m=1}^M\sum_{k=0}^{K-1}\Big(\theta\log \bpi^{\phi}( a_{t_k}^{\bpi^{\phi},(m)}|~t_k, \hat{S}_{t_k}^{(m)}, X_{t_k}^{\bpi^{\phi},(m)})\Delta t -\beta  J^{\psi,\phi}(t_k, X_{t_k}^{\bpi^{\phi},(m)}) \Delta t\\
	&\phantom{\phi_i  -  \frac{\alpha_{\phi_i}}{M}\sum_{m=1}^M\sum_{k=0}^{K-1}\Big(} + J^{\psi,\phi}(t_{k+1}, \hat{S}_{t_{k+1}}^{(m)}, X_{t_{k+1}}^{\bpi^{\phi},(m)}) - J^{\psi,\phi}(t_{k}, \hat{S}_{t_{k}}^{(m)}, X_{t_{k}}^{\bpi^{\phi},(m)})  \Big)\\
	&\phantom{\phi_i  -  \frac{\alpha_{\phi_i}}{M}\sum_{m=1}^M\sum_{k=0}^{K-1}\Big(}\cdot \partial_{\phi_i} \log\boldsymbol{\pi}^{\phi}( a_{t_k}^{\bpi^{\phi}, (m)}|~t_k, \hat{S}_{t_{k}}^{(m)}, X_{t_k}^{\bpi^{\phi}, (m)}).
\end{align}
This update essentially follows from \eqref{eq:update-un2};  the difference is that we do batch processing and update the parameters only after observing all the episodes in a batch. Furthermore, we flip the sign before $\alpha_{\phi_i}$ and $\theta$ as we consider minimization here while the problem in the general discussion is for maximization.

\begin{figure}[htbp!]
	\centering
	\includegraphics[width=1\textwidth]{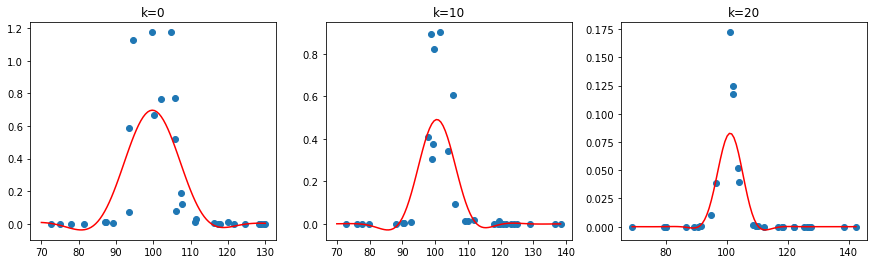}
	\caption{Fit of Gaussian process to the cumulative reward data for $g_e$ at $t_k$ ($k=0,10,20$) for a one-month put option with strike $\mathcal{K}=100$. The variable $S$ is on the $x$-axis.  We use the MLE estimates of the MJD model shown in Table \ref{tab:simul-MV-hedge} to calculate the rewards for $g_e$ from the sampled index price paths.}
	\label{fig:GP-fit}
\end{figure}

\subsection{Simulation study}
We check the convergence behavior of our actor--critic algorithm in a simulation study. We assume the environment is described by an MJD model with its true parameters shown in the upper row of Table \ref{tab:simul-MV-hedge}.\footnote{These values are chosen close to the estimates obtained from the market data of S\&P 500 in Table \ref{tab:MVPS-params} to make the MJD model better resemble the reality in the simulation study.}  We consider hedging a 4-month put option with strike price $\mathcal{K}=100$ because short-term options with maturity of a few months are typically traded much more actively than longer-term options. We assume the risk-free rate is zero and implement daily rebalancing.

We use 20 quadrature points for the G-H rule, which suffices for high accuracy. In each iteration, we sample $M=32$ episodes from the environment simulated by the MJD model. The learning rates are chosen to be $\alpha_{\phi_1}=5\mathrm{e}{-3}$, $\alpha_{\phi_2}=2\mathrm{e}{-5}$, $\alpha_{\phi_3}=15$, $\alpha_{\phi_4}=1\mathrm{e}{-6}$, $\alpha_{\phi_5}=1\mathrm{e}{-6}$, while the temperature parameter $\theta=0.1$. We keep the learning rates constant in the first 30 iterations and then they decay at rate $l(j)=j^{-0.5}$, where $j$ is the iteration index.  Before our actor--critic learning, we sample a 15-year path from the MJD model as data for MLE and obtain estimates of the parameters shown in Table \ref{tab:simul-MV-hedge}. MLE estimates all the parameters quite well except the drift $\mu$, where the estimate is more than twice the true value.\footnote{This is expected due to the well-known mean-blur problem, as 15 years of data are insufficient to obtain a reasonably accurate estimate of the mean using statistical methods.} We then use these estimates as initial values of $\phi_1$ to $\phi_5$ for learning. Figure \ref{fig:hedge-converge} clearly shows that our RL algorithm is able to converge near the true values. In particular, the estimate of $\mu$ eventually moves closely around the true level despite being distant to it initially.

\begin{table}[htbp!]
	\centering
	\begin{tabular}{cc}
		\hline
		 & \textbf{Parameters}  \\ \hline
		True & $\mu = 0.06, \sigma = 0.13, \lambda=28, m=-0.004, \delta=0.03$ \\ \hline
		MLE & $\mu = 0.1289, \sigma = 0.1331, \lambda = 26.1091, m = -0.0033, \delta = 0.0317$ \\ \hline
	\end{tabular}
	\caption{Parameters of the true MJD model and MLE estimates from 15 years of data}\label{tab:simul-MV-hedge}
\end{table}

\begin{figure}[htbp!]
	\centering
	\includegraphics[width=\textwidth]{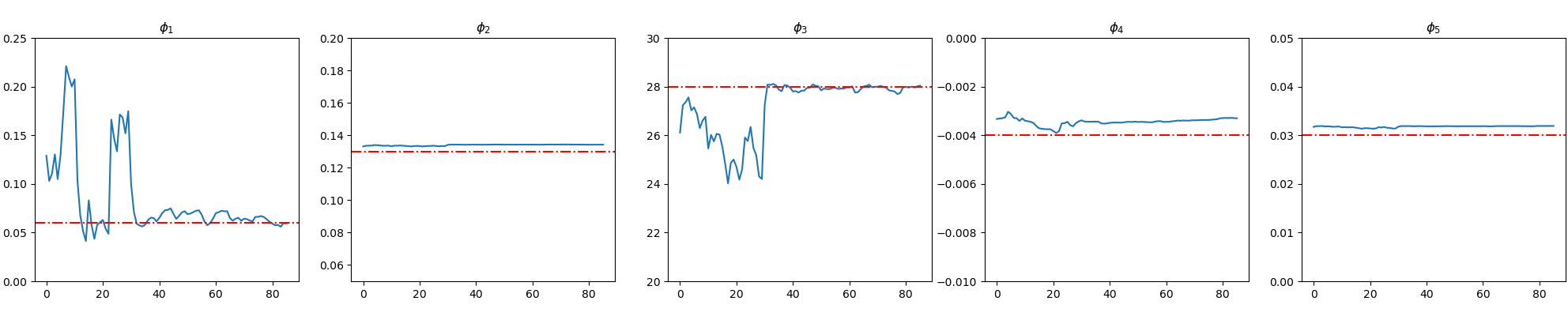}
	\caption{Training paths of five parameters (iteration index on the $x$-axis)}
	\label{fig:hedge-converge}
\end{figure}

\subsection{Empirical study}
In the empirical study we consider hedging a European put option written on the S\&P 500 index. This type of option is popular in the financial market as a tool for investors to protect their high beta investment. We assume the risk-free rate is zero, implement daily rebalancing and use the same S\&P 500 data as in Section \ref{sec:MV-Port-Emp} with the first 20 years as the training period and the last 4 years as the test period.

We use the MLE estimates from the first 20 years of observations to initialize $\phi_1$ to $\phi_5$ for learning. To sample an episode from the environment for training our RL algorithm, we bootstrap index returns from the training period. Similarly, we bootstrap index returns from the test period to generate index price paths to test the learned policy. We use the 20-point G-H rule with $M=32$ episodes in each training iteration. The learning rates are set as
$\alpha_{\phi_1}=5\mathrm{e}{-3}$, $\alpha_{\phi_2}=5\mathrm{e}{-4}$, $\alpha_{\phi_3}=50$, $\alpha_{\phi_4}=5\mathrm{e}{-6}$, $\alpha_{\phi_5}=1\mathrm{e}{-5}$ with the same learning rate decay schedule as in the simulation study. We set the temperature parameter $\theta=0.1$.

We consider four maturities ranging from 1 to 4 months for the put option. For a standard S\&P500 index option traded on CBOE, in practice its strike is quoted in points commensurate with the scale of the index level and converted to dollar amount by a multiplier of 100. That is, the dollar strike $\mathcal{K}=100\mathcal{K}_p$, where $\mathcal{K}_p$ is the strike in points.\footnote{For example, for an at-the-money option contract where the current index is 5000 points, we have $\mathcal{K}_p=5000$. The contract's notional size is 500000 dollars.} Let $S_t$ be the index level at time $t$ converted to dollars by the multiplier of 100. We sample $S_0$ uniformly from the range $[0.7\mathcal{K}, 1.3\mathcal{K}]$ to generate training and test episodes. This allows us to include scenarios where the option is deep out of money or in the money. We train our algorithm for 100 iterations and then test it on 5000 index price episodes. Finally, although we use a stochastic policy to interact with the environment for learning/updating the parameters during training, in the test stage we apply a deterministic policy which is the \emph{mean} of the learned stochastic policy. This can reduce the variance of the final hedging error.

To measure the test performance, we consider the mean squared hedging error (denominated in squared dollars) divided by $\mathcal{K}_p^2$. This effectively normalizes the contract's scale; so we can just set $\mathcal{K}_p=1$ in our implementation. The test result can be interpreted as the average mean squared hedging error in squared
 dollars for every point of strike price.

We compare the performances of the policy learned by our RL algorithm and the one obtained by plugging the MLE estimates from the training period data to \eqref{eq:mean-policy-optimal-MV-hedge}. Both policies are tested on the same index price episodes. We check the statistical significance of the difference in the squared hedging errors of the two policies by applying the t-test. Table \ref{tab:hedge-test-perf} reports the mean squared hedging error for each policy as well as the p-value of the t-test. Not surprisingly, the hedging error of each policy increases as the maturity becomes longer due to greater uncertainty in the problem. In every case, the RL policy achieves notable reduction in the hedging error compared with the MLE-based policy. The p-value further shows the outperformance of the former is statistically significant.

\begin{table}[htbp!]
	\centering
	\begin{tabular}{lccc}
		\hline
		Maturity & RL policy & MLE policy & p-value \\ \hline
		1 month & 0.3513 (0.034) & 0.4006 (0.043) & 3.2e-7 \\
		2 months & 0.5668 (0.029) & 0.6796 (0.050) & 1.2e-5 \\
		3 months & 0.8283 (0.031) & 1.0080 (0.060) & 1.6e-5 \\
		4 months & 0.8649 (0.047) & 1.1544 (0.068) & 4.2e-13 \\ \hline
	\end{tabular}
	\caption{Mean squared hedging errors of the two policies in the empirical test with their standard errors shown in parenthesis}\label{tab:hedge-test-perf}
\end{table}

\section{Conclusions} \label{sec:conclusion}

Fluctuations in data or time series coming from nonlinear, complex dynamic systems are characterized by two types: slow changes and sudden jumps, the latter occurring much rarely than the former. Hence jump-diffusions capture the key {\it structural}  characteristics of many data generating processes in areas such as physics, astrophysics, earth science,
engineering, finance, and medicine. As a result, RL for jump-diffusions is important both theoretically and practically. This paper endeavors to lay a theoretical foundation for the study. A key insight from this research is that temporal--difference algorithms designed for diffusions can work seamlessly for jump-diffusions. However,
unless using general neural networks, policy parameterization does need to respond to the presence of jumps if one is to take advantage of any special structure of an underlying problem.

There are plenty of open questions starting from here, including \rev{convergence of the exploratory HJB equation as the temperature parameter decays to zero}, convergence of the RL algorithms, regret bounds, decaying rates of the temperature parameters, and learning rates of the gradient decent and/or stochastic approximation procedures involved.

\bibliographystyle{chicago}
\bibliography{RL2,new}

\appendix

\section{Proofs}\label{appendix:proofs}

\begin{proof}[\bf Proof of Lemma~\ref{lem:expl-jump-LG}]
	We observe that for each $k=1,\cdots,\ell$,
	\begin{equation}
		\int_{\mathbb{R}\times[0,1]^n}\left|\gamma_k\left(t,x,G^{\bpi_{t,x}}(u),z\right)\right|^p\nu_k(dz)du = \int_{\mathcal{A}}\int_{\mathbb{R}}|\gamma_k(t,x,a,z)|^p\nu_k(dz)\bpi(a|t,x)da.
	\end{equation}
	From Assumption \ref{assump:SDE}-(iii), we have \rev{$\sum_{k=1}^\ell \int_{\mathbb{R}}|\gamma_k(t,x,a,z)|^p\nu_k(dz)\le C_p(1+|x|^p+|a|^p)$} for any $(t, x, a) \in[0, T] \times \mathbb{R}^d \times \mathcal{A}$. It follows from condition (iii) of admissibility (see Definition \ref{def:policy}) that the desired result holds.
\end{proof}


\begin{proof}[\bf Proof of Lemma~\ref{lem:expl-jump-LC}]
We consider
\begin{equation}
	\int_{\mathbb{R}\times[0,1]^n}\left|\gamma_k\left(t,x,G^{\bpi_{t,x}}(u),z\right)-\gamma_k\left(t,x',G^{\bpi_{t,x'}}(u),z\right)\right|^p\nu_k(dz)du,
\end{equation}
which is bounded by
\begin{align}
	&C_p\bigg(\int_{\mathbb{R}\times[0,1]^n}\left|\gamma_k\left(t,x,G^{\bpi_{t,x}}(u),z\right)-\gamma_k\left(t,x,G^{\bpi_{t,x'}}(u),z\right)\right|^p\nu_k(dz)du\\
	&\phantom{C_p~~}+\int_{\mathbb{R}\times[0,1]^n}\left|\gamma_k\left(t,x,G^{\bpi_{t,x'}}(u),z\right)-\gamma_k\left(t,x',G^{\bpi_{t,x'}}(u),z\right)\right|^p\nu_k(dz)du\bigg)
\end{align}
for some constant $C_p>0$.

For the first integral, using Assumption \ref{assump:jump} we obtain
\begin{align}
	&\int_{\mathbb{R}\times[0,1]^n}\left|\gamma_k\left(t,x,G^{\bpi_{t,x}}(u),z\right)-\gamma_k\left(t,x,G^{\bpi_{t,x'}}(u),z\right)\right|^p\nu_k(dz)du \\
	\leq&C_{K,p}\int_{[0,1]^n}\left|G^{\bpi}(t,x,u)-G^{\bpi}(t,x',u)\right|^pdu
	\leq C'_{K,p}|x-x'|^p.
\end{align}

For the second integral, we have
\begin{align}
	&\int_{\mathbb{R}\times[0,1]^n}\left|\gamma_k\left(t,x,G^{\bpi_{t,x'}}(u),z\right)-\gamma_k\left(t,x',G^{\bpi_{t,x'}}(u),z\right)\right|^p\nu_k(dz)du\\
	=&\int_{\mathcal{A}}\int_{\mathbb{R}}\left|\gamma_k\left(t,x,a,z\right)-\gamma_k\left(t,x',a,z\right)\right|^p\nu_k(dz)\bpi(a|t,x')da\\
	\leq&\int_{\mathcal{A}} C''_{K,p}|x-x'|^p\bpi(a|t,x')da\\
	=&C''_{K,p}|x-x'|^p,
\end{align}
where we used Assumption \ref{assump:SDE}-(ii) and $C''_{K,p}$ is the constant there. The desired claim is obtained by combining these results.
\end{proof}


\begin{proof}[\bf Proof of Lemma~\ref{lem:PK}]
	Fix $t\in[0,T)$ and suppose $\tilde X^{\pi}_t=x$. Let $a_s^{\bpi}(u)=G^{\bpi}(s,\tilde X^{\pi}_{s-},u)$. Define a sequence of stopping times $\tau_n = \inf \{s \ge t: |\tilde X^{\pi}_s| \ge n\}$ for $n\in\mathbb{N}$.
	Applying It\^{o}'s formula \eqref{eq:Ito-JD} to the process $e^{-\beta s}\phi(s,  \tilde X^{\pi}_s)$, where $\tilde X^{\pi}_s$ follows the exploratory SDE \eqref{eq:SDE-expl}, we obtain $\forall t'\in[t,T]$,
	\begin{align}
		&e^{-\beta(t'\wedge\tau_n)}\phi(t'\wedge\tau_n,  \tilde X^{\pi}_{r\wedge\tau_n}) - e^{-\beta t}\phi(t, x)\\
		=& \int_t^{t'\wedge\tau_n} e^{-\beta s}\left(\mathcal{L}^{\bpi} \phi(s, \tilde X^{\pi}_{s-})-\beta \phi(s, \tilde X^{\pi}_{s-}) \right)ds \\
		+& \int_t^{t'\wedge\tau_n} e^{-\beta s}\partial_x\phi(s, \tilde X^{\pi}_{s-})\circ\tilde{\sigma}(s, \tilde X^{\pi}_{s-}, \pi(\cdot|s, \tilde X^{\pi}_{s-})) dW_s  \label{eq:ito1W} \\
		+& \int_t^{t'\wedge\tau_n} e^{-\beta s}\sum_{k=1}^{\ell}\int_{\mathbb{R}\times[0,1]^n} \left( \phi(s, \tilde X^{\pi}_{s-}+ \gamma_k(s, \tilde X^{\pi}_{s-}, a_s^{\bpi}(u), z)) -  \phi(s, \tilde X^{\pi}_{s-})  \right)  \widetilde{N}'_k(ds, dz, du), \\
		\label{eq:ito1N}
	\end{align}
	where $ \mathcal{L}^{\bpi}$ is  given in \eqref{eq:expl-gen-2}. We next show that the expectations of \eqref{eq:ito1W} and \eqref{eq:ito1N} are zero.
	
	Note that for $s\in [t,t'\wedge\tau_n]$, $|\tilde X^{\pi}_{s-}|\leq n$. Thus, $\partial_x\phi(s, \tilde X^{\pi}_{s-})$ is also bounded. Using the linear growth of $\tilde{\sigma}(s, \tilde X^{\pi}_{s-}, \pi(\cdot|s, \tilde X^{\pi}_{s-}))$ in Lemma \ref{lem:expl-diff-coeffs} and the moment estimate \eqref{eq:moment-expl}, we can see that \eqref{eq:ito1W} is a square-integrable martingale and hence its expectation is zero.
	
	Next, we analyze the following stochastic integral:
	\begin{equation}\label{eq:FK-jump-integral}
		\int_t^{t'\wedge\tau_n} e^{-\beta s}\int_{\mathbb{R}\times[0,1]^n} \left( \phi (s, \tilde X^{\pi}_{s-}+ \gamma_k(s, \tilde X^{\pi}_{s-}, a_s^{\bpi}(u),z)) -  \phi(s, \tilde X^{\pi}_{s-})  \right)  \widetilde{N}_k(ds, dz, du).
	\end{equation}
	We consider the finite and infinite jump activity cases separately.
	
	Case 1: $\int_{\mathbb{R}}\nu_k(dz)<\infty$. In this case, both of the processes
	\begin{align}
		&\int_t^{t'\wedge\tau_n} e^{-\beta s}\int_{\mathbb{R}\times[0,1]^n} \phi (s, \tilde X^{\pi}_{s-}+ \gamma_k(s, \tilde X^{\pi}_{s-}, a_s^{\bpi}(u), z)) \widetilde{N}_k(ds, dz, du), \label{eq:FK-FA-int1}\\
		&\int_t^{t'\wedge\tau_n} e^{-\beta s}\int_{\mathbb{R}\times[0,1]^n} \phi (s, \tilde X^{\pi}_{s-}) \widetilde{N}_k(ds, dz, du) \label{eq:FK-FA-int2},
	\end{align}
	are square-integrable martingales and hence have zero expectations. We prove this claim for \eqref{eq:FK-FA-int1} and analyzing \eqref{eq:FK-FA-int2} is entirely analogous.
	
	Using the polynomial growth of $\phi$, Lemma \ref{lem:expl-jump-LG}, and $|\tilde X^{\pi}_{s-}|\leq n$ for $s\in[t, t'\wedge\tau_n]$, we obtain
	\begin{align}
		&\mathbb{E}_{t,x}\left[\int_t^{t'\wedge\tau_n} e^{-2\beta s}\int_{\mathbb{R}\times[0,1]^n} \left|\phi (s, \tilde X^{\pi}_{s-}+ \gamma_k(s, \tilde X^{\pi}_{s-}, a_s^{\bpi}(u), z))\right|^2 \nu_k(dz)duds\right]\\
		\leq& C_p\cdot \mathbb{E}_{t,x}\left[\int_t^{t'\wedge\tau_n} \int_{\mathbb{R}\times[0,1]^n} \left(1 + |\tilde{X}^{\pi}_{s-}|^{2p}  + |\gamma_k(s, \tilde X^{\pi}_{s-}, a_s^{\bpi}(u),z))|^{2p}\right)\nu_k(dz)duds\right]\\
		\leq& C_p'\cdot\mathbb{E}_{t,x}\left[\int_t^{t'\wedge\tau_n} (1+|\tilde X^{\pi}_{s-}|^{2p})ds\right]<\infty.
	\end{align}
	This implies the process \eqref{eq:FK-FA-int1} is a square-integrable martingale; see e.g. \cite[Section 1.9]{rong2006theory}  for square-integrability of stochastic integrals with respect to compensated Poisson random measures.
	
	Case 2: $\int_{\mathbb{R}}\nu_k(dz)=\infty$. Let $B_{1}=\{|z|\leq 1\}\times[0,1]^n$ and $B_{1}^c=\{|z|> 1\}\times[0,1]^n$. The stochastic integral \eqref{eq:FK-jump-integral} can be written as the sum of two integrals:
	\begin{align}
	&\int_t^{t'\wedge\tau_n} e^{-\beta s}\int_{B_1} \big( \phi (s, \tilde X^{\pi}_{s-}+ \gamma_k(s, \tilde X^{\pi}_{s-}, a_s^{\bpi}(u), z)) -  \phi(s, \tilde X^{\pi}_{s-})  \big)  \widetilde{N}_k(ds, dz,du)\label{eq:ito-N-small}\\
	+&\int_t^{t'\wedge\tau_n} e^{-\beta s}\int_{B_1^c} \big( \phi(s, \tilde X^{\pi}_{s-}+ \gamma_k(s, \tilde X^{\pi}_{s-}, a_s^{\bpi}(u), z)) -  \phi(s, \tilde X^{\pi}_{s-})  \big)  \widetilde{N}_k(ds, dz, du).\label{eq:ito-N-big}
	\end{align}
	Using the mean-value theorem, the stochastic integral \eqref{eq:ito-N-small} is equal to
	\begin{equation}
		\int_t^{t'\wedge\tau_n} e^{-\beta s} \int_{B_1} \partial_x\phi(s, \tilde X^{\pi}_{s-}+\alpha_s\gamma_k(s, \tilde X^{\pi}_{s-}, a_s^{\bpi}(u),z))\circ\gamma_k(s, \tilde X^{\pi}_{s-}, a_s^{\bpi}(u),z)\widetilde{N}_k(ds, dz, du)
	\end{equation}
	for some $\alpha_s\in[0,1]$. For $s\in [t,t'\wedge\tau_n]$, $|\tilde X^{\pi}_{s-}|\leq n$. By Assumption \ref{assump:SDE-1}-(v),  $|\gamma_k(s, \tilde X^{\pi}_{s-} ,a_s^{\bpi}(u),z)|$ is bounded for any $|z|\leq 1$, $s\in [t,t'\wedge\tau_n]$ and $u\in[0,1]^n$, which further implies the boundedness of $|\phi_x(s, \tilde X^{\pi}_{s-}+\alpha_s\gamma_k(s, \tilde X^{\pi}_{s-}, a_s^{\bpi}(u), z))|$. Now, for each $j=1,\cdots,d$,
	\begin{equation}
		\int_t^{t'\wedge\tau_n} e^{-\beta s}\int_{B_1} \partial_{x_j}\phi(s, \tilde X^{\pi}_{s-}+\alpha_s\gamma_k(s, \tilde X^{\pi}_{s-}, a_s^{\bpi}(u), z))\gamma_{jk}(s, \tilde X^{\pi}_{s-}, a_s^{\bpi}(u), z)\widetilde{N}_k(ds, dz, du)
	\end{equation}
	is a square-integrable martingale because
	\begin{align}
		&\mathbb{E}_{t,x}\left[\int_t^{t'\wedge\tau_n}\int_{B_1} \big(\partial_{x_j}\phi(s, \tilde X^{\pi}_{s-}+\alpha_s\gamma_k(s, \tilde X^{\pi}_{s-} , a_s^{\bpi}(u), z))\big)^2\gamma_{jk}^2(s, \tilde X^{\pi}_{s-}, a_s^{\bpi}(u), z)\nu_k(dz)duds\right]\\
		\leq& C\cdot \mathbb{E}_{t,x}\left[\int_t^{t'\wedge\tau_n} \int_{B_1} \gamma_{jk}^2(s, \tilde X^{\pi}_{s-}, a_s^{\bpi}(u), z)\nu_k(dz)duds\right]\\
		\leq& C'\cdot\mathbb{E}_{t,x}\left[\int_t^{t'\wedge\tau_n} (1+|\tilde X^{\pi}_{s-}|^2)ds\right]<\infty,
	\end{align}
	where we used Lemma \ref{lem:expl-jump-LG} and boundedness of $|\tilde X^{\pi}_{s-}|$ in the above. Thus, \eqref{eq:ito-N-small} is a square-integrable martingale with mean zero.
	
	For \eqref{eq:ito-N-big}, we can use the same arguments as in the finite activity case by observing $\int_{|z|>1} \nu_k(dz)<\infty$
	to show that each of the two processes in \eqref{eq:ito-N-big} is a square-integrable martingale with mean zero.
		
	Combining the results above, setting $t'=T$, and taking expectation, we obtain
	\begin{align*}
	\mathbb{E}_{t,x}[e^{-\beta(T\wedge\tau_n)}\phi( \tilde X^{\bpi}_{T\wedge\tau_n})] - e^{-\beta t}\phi(t, x)
	&= \mathbb{E}^{\bar{\mathbb{P}}}_{t,x} \bigg[ \int_t^{T\wedge\tau_n} e^{-\beta s}\big(\mathcal{L}^{\bpi} \phi(s, \tilde X^{\pi}_{s-})-\beta\phi(s, \tilde X^{\pi}_{s-})\big)ds\bigg].
	\end{align*}
	As $\phi(s,x)$ satisfies Eq.~\eqref{eq:FK}, it follows from \eqref{eq:Hal-1} that
\begin{align}
	\phi(t,x)
	&= \mathbb{E}_{t,x} \bigg[ \int_t^{T\wedge \tau_n} e^{-\beta (s-t)}\int_{\mathcal{A}}  \big(r(s, \tilde X^{\bpi}_{s-}, a) - \theta\log{\bpi}(a|s, \tilde X^{\pi}_{s-})\big)\bpi(a|s,\tilde X^{\bpi}_{s-})dads \\
	&\phantom{\mathbb{E}_{t,x} \bigg[\bigg]~~} + e^{-\beta (T\wedge\tau_n-t)}\phi( \tilde X^{\bpi}_{T\wedge\tau_n})\bigg]. \label{eq:prelimit}
\end{align}
It follows from Assumption \ref{assump:SDE}-(iv), Definition \ref{def:policy}-(iii), and Eq.~\eqref{eq:VF-PolyG} that the term on the right-hand side of \eqref{eq:prelimit} is dominated by $C(1 + \sup_{t \le s \le T} |\tilde X_{s}^{\bpi}|^p)$ for some $p\geq 2$, which has finite expectation from the moment estimate \eqref{eq:moment-expl}.
Therefore, sending $n$ to infinity in \eqref{eq:prelimit} and applying the dominated convergence theorem, we obtain $\phi(t,x)=J(t, x, \bpi)$.\end{proof}

\begin{proof}[\bf Proof of Lemma~\ref{lem:expHJB}]
We first maximize the integral in (\ref{eq:HJB-explore}). Applying \cite[Lemma 13]{jia2022q}, we deduce that  $\bpi^*$ given by \eqref{eq:opt-pol} is the unique maximizer. Next, we  show that $\psi(t,x)$ is the optimal value function.
	
On one hand, given any admissible stochastic policy $\bpi\in\bPi$, from \eqref{eq:HJB-explore} we have
\begin{align}
	\partial_t\psi(t, x) + \int_{\mathcal{A}} \big( H(t,x, a, \partial_x\psi,  \partial_{x}^2\psi, \psi)   - \theta \log \bpi(a|t,x)\big)\bpi(a|t,x)da - \beta\psi(t, x) \leq 0.
\end{align}
Using similar arguments as in the proof of Lemma~\ref{lem:PK}, we obtain $J(t, x, \boldsymbol{\pi}) \leq \psi(t,x)$ for any $\bpi\in\bPi$. Thus, $J^*(t, x) \leq \psi(t,x)$.

On the other hand, Eq.~\eqref{eq:HJB-explore} becomes
\begin{align} \label{eq:HJB2}
	\partial_t\psi(t, x) +   \int_{\mathcal{A}} \big( H(t,x, a, \partial_x\psi,  \partial_x^2\psi, \psi)  - \theta \log \boldsymbol{\pi}^*(a)\big) \boldsymbol{\pi}^*(a) da  - \beta  \psi(t, x) &= 0,
\end{align}
with $\psi (T, x) = h (x).$
By Lemma~\ref{lem:PK}, we obtain that $J(t, x, \boldsymbol{\pi}^*) = \psi(t,x)$. It follows that $J^*(t, x) \geq \psi(t,x)$.

Combining these results, we conclude that $J^*(t, x)=\psi(t,x)$ and $\bpi^*$ is the optimal stochastic policy.
\end{proof}

\begin{proof}[\bf Proof of Proposition \ref{prop:SDE-sampled}]
	In this proof, $C_p$ is a generic positive constant that depends on $p$, which may take different values each time it occurs.
	Consider the grid $\mathbb{S}= \{t = t_0 < t_1 < \ldots < t_N = T\}$. We prove the well-posedness by induction. Suppose for $i=0,\cdots,n$, the SDE \eqref{eq:samplestate} admits a unique strong solution on $[t_0,t_i]$ satisfying $\mathbb{E}_{t,x}[\sup_{t \le s\le t_i}|X_s^{\bpi, \mathbb{S}}|^p]<\infty$ for any $p\geq 2$. Now consider the equation on $(t_n,t_{n+1}]$. Note that the random action $a_{t_n}^{\bpi,\mathbb{S}}\coloneqq G^{\bpi}(t_n, X^{\bpi, \mathbb{S}}_{t_n}, U_{n+1})$ is applied throughout $(t_n,t_{n+1}]$. For any $p\geq 2$, we have
	\begin{align}
		\mathbb{E}_{t,x}\big[|a_{t_n}^{\bpi,\mathbb{S}}|^p\big] &= \mathbb{E}_{t,x}\big[\mathbb{E}_{t,x}\big[|G^{\bpi}(t_n, X^{\bpi, \mathbb{S}}_{t_n}, U_{n+1})|^p|X^{\bpi, \mathbb{S}}_{t_n}\big]\big]\\
		&=\mathbb{E}_{t,x}\Big[\int_{\mathcal{A}}|a|^p\bpi(a | t_n, X^{\bpi, \mathbb{S}}_{t_n})da\Big]\\
		&\leq C_p(1+\mathbb{E}_{t,x}[|X^{\bpi, \mathbb{S}}_{t_n}|^p]), \quad (\text{by Definition \ref{def:policy}-(iii)})
		\label{eq:action-moment-grid}
	\end{align}
	which is finite. This result together with Assumption \ref{assump:SDE} is sufficient for us to use the arguments in \cite{kunita2004stochastic} and \cite{rong2006theory} to show the existence and uniqueness of the strong solution on $(t_n,t_{n+1}]$ that satisfies $\mathbb{E}_{t,x}[\sup_{t_{n}\le s\le t_{n+1}}|X_s^{\bpi, \mathbb{S}}|^p]<\infty$ for any $p\geq 2$.
	
	Recall the action process $a^{\bpi,\mathbb{S}}$ satisfies $a^{\bpi,\mathbb{S}}_{\tau}=a_{t_n}^{\bpi,\mathbb{S}}$ for $\tau\in (t_n,t_{n+1}]$. The SDE can be written as
	\begin{equation}
	 X^{\bpi, \mathbb{S}}_s = x + \int_t^s b(\tau, X^{\bpi, \mathbb{S}}_{\tau-}, a^{\bpi,\mathbb{S}}_{\tau}) d\tau + \int_t^s\sigma (\tau, X^{\bpi, \mathbb{S}}_{\tau-}, a^{\bpi,\mathbb{S}}_{\tau}) dW_{\tau} + \int_t^s\int_{\mathbb{R}^\ell} \gamma(\tau, X^{\bpi, \mathbb{S}}_{\tau-}, a^{\bpi,\mathbb{S}}_{\tau}, z) \widetilde N(d\tau, dz).
 	\end{equation}
	Using \cite[Theorem 2.11]{kunita2004stochastic}, for any $p\geq 2$ we obtain
	\begin{align}
		&\phantom{\leq~}\mathbb{E}_{t,x}\Big[\sup_{t\le \tau\le s}|X_{\tau}^{\bpi, \mathbb{S}}|^p\Big]\\
		&\leq C_p\Big(|x|^p + \mathbb{E}\Big[\Big(\int_t^s |b(\tau, X^{\bpi, \mathbb{S}}_{\tau-}, a^{\bpi,\mathbb{S}}_{\tau})| d\tau\Big)^p\Big] + \mathbb{E}\Big[\Big(\int_t^s |\sigma(\tau, X^{\bpi, \mathbb{S}}_{\tau-}, a^{\bpi,\mathbb{S}}_{\tau})|^2 d\tau\Big)^{p/2}\Big]\\
		&\phantom{\leq~} +  \mathbb{E}\Big[\Big(\int_t^s \int_{\mathbb{R}^\ell} |\gamma(\tau, X^{\bpi, \mathbb{S}}_{\tau-}, a^{\bpi,\mathbb{S}}_{\tau}, z)|^2 \nu(dz) d\tau\Big)^{p/2}\Big] + \mathbb{E}\Big[\int_t^s \int_{\mathbb{R}^\ell} |\gamma(\tau, X^{\bpi, \mathbb{S}}_{\tau-}, a^{\bpi,\mathbb{S}}_{\tau}, z)|^p \nu(dz) d\tau\Big]\Big).
	\end{align}
	Then
	\begin{align}
		\mathbb{E}\Big[\Big(\int_t^s |b(\tau, X^{\bpi, \mathbb{S}}_{\tau-}, a^{\bpi,\mathbb{S}}_{\tau}| d\tau\Big)^p\Big]&\leq C_p \mathbb{E}\Big[\int_t^s |b(\tau, X^{\bpi, \mathbb{S}}_{\tau-}, a^{\bpi,\mathbb{S}}_{\tau})|^p d\tau\Big]\\
		 &\leq C_p \int_t^s \mathbb{E}\Big[|b(\tau, X^{\bpi, \mathbb{S}}_{\tau-}, a^{\bpi,\mathbb{S}}_{\tau})|^p \Big]d\tau\\
		 &\leq C_p \int_t^s \big(1+ \mathbb{E}\big[|X^{\bpi, \mathbb{S}}_{\tau-}|^p\big] + \mathbb{E}\big[|X^{\bpi, \mathbb{S}}_{\delta(\tau)}|^p\big]\big) d\tau\\
		 &\leq C_p \int_t^s\Big(1+ 2\mathbb{E}\Big[\sup_{t\le u\le \tau}|X_{u}^{\bpi, \mathbb{S}}|^p\Big]\Big)d\tau,
	\end{align}
	where $\delta(\tau)\coloneqq t_n$ for $\tau\in (t_n,t_{n+1}]$, and we have used the linear growth property in Assumption \ref{assump:SDE}-(iii) and \eqref{eq:action-moment-grid}. We can also obtain similar estimates for the other three terms. Together, we have
	\begin{equation}
		\mathbb{E}_{t,x}\Big[\sup_{t\le \tau\le s}|X_{\tau}^{\bpi, \mathbb{S}}|^p\Big]\leq C_p(1+|x|^p) + C_p\int_t^s\mathbb{E}\Big[\sup_{t\le u\le \tau}|X_{u}^{\bpi, \mathbb{S}}|^p\Big]d\tau.
	\end{equation}
	The estimate \eqref{eq:moment-grid} then follows from Gronwall's inequality. \color{black}
	\end{proof}


\begin{proof}[\bf Proof of Theorem~\ref{thm:mart1}]
\rev{In this proof, $C$ is a generic constant whose values may vary from line to line. Consider part (i). Applying It\^{o}'s formula to the value function \eqref{eq:J2} of policy $\boldsymbol{\pi}$ over the grid sample state process \eqref{eq:samplestate}
and
using the definition of q-function, we obtain that for $0 \le t < s \le T$,
\begin{align}
	& e^{-\beta s}  J(s,  X_{s}^{\boldsymbol{\pi}, \mathbb{S}}; \boldsymbol{\pi}) - e^{-\beta t}   J(t,  x; \boldsymbol{\pi}) +  \int_t^s e^{-\beta \tau}  \big( r(\tau, X_{\tau-}^{\boldsymbol{\pi}, \mathbb{S}}, a_\tau^{\boldsymbol{\pi}, \mathbb{S}}) -  \hat q(\tau,  X_{\tau-}^{\boldsymbol{\pi}, \mathbb{S}}, a_\tau^{\boldsymbol{\pi}, \mathbb{S}}) \big)d\tau \nonumber  \\
	& =    \int_t^s e^{-\beta \tau}   \big(  q(\tau,  X_{\tau-}^{\boldsymbol{\pi}, \mathbb{S}}, a_\tau^{\boldsymbol{\pi}, \mathbb{S}}; \boldsymbol{\pi}) -  \hat  q(\tau,  X_{\tau-}^{\boldsymbol{\pi}, \mathbb{S}}, a_\tau^{\boldsymbol{\pi}, \mathbb{S}}) \big)  d\tau +   \int_t^s  e^{-\beta \tau} \partial_xJ(\tau,  X_{\tau-}^{\boldsymbol{\pi}, \mathbb{S}};   \boldsymbol{\pi} ) \circ {\sigma}(\tau,  X^{\boldsymbol{\pi}, \mathbb{S} }_{\tau-}, a^{\boldsymbol{\pi}, \mathbb{S} }_\tau ) dW_\tau \nonumber  \\
	& \phantom{=~} +  \sum_{k=1}^\ell \int_t^s e^{-\beta \tau} \int_{\mathbb{R}} \big( J (\tau,   X^{\boldsymbol{\pi}, \mathbb{S}}_{\tau-} + {\gamma}_k( \tau,X^{\boldsymbol{\pi}, \mathbb{S} }_{\tau-} , a_\tau^{\boldsymbol{\pi}, \mathbb{S}},z)) -  J(\tau,X^{\boldsymbol{\pi}, \mathbb{S}}_{\tau-}; \boldsymbol{\pi})  \big)  \widetilde{ N}_k(d\tau, dz). \label{eq:J-q}
\end{align}
Suppose $\hat q(t,x, a) = q(t,x, a; \boldsymbol{\pi})$ for all $(t,x,a)$. Then the first term on the right-hand side of \eqref{eq:J-q} is zero. We verify the following two conditions:
\begin{align}
		& \mathbb{E}_{t,x}  \bigg[ \int_t^T  e^{-2\beta \tau}  |  J_x(\tau,  X_{\tau-}^{\boldsymbol{\pi}, \mathbb{S}};   \boldsymbol{\pi} )\circ{\sigma}(\tau,  X^{\boldsymbol{\pi}, \mathbb{S} }_{\tau-}, a^{\boldsymbol{\pi}, \mathbb{S} }_\tau ) |^2 d\tau \bigg] < \infty, \label{eq:M1-W} \\
		& \mathbb{E}_{t,x} \bigg[ \int_t^T  e^{-2\beta \tau}  \int_{\mathbb{R}} | J (\tau, X^{\boldsymbol{\pi}, \mathbb{S}}_{\tau-} + {\gamma}_k( \tau, X^{\boldsymbol{\pi}, \mathbb{S}}_{\tau-} , a_\tau^{\boldsymbol{\pi}, \mathbb{S}}, z) ;   \boldsymbol{\pi}) -  J(\tau, X^{\boldsymbol{\pi}, \mathbb{S}}_{\tau-}; \boldsymbol{\pi})|^2 \nu_k(dz) d\tau \bigg] < \infty. \label{eq:M2-J}
	\end{align}
Equality \eqref{eq:M1-W} follows from Assumption~\ref{assump:SDE}-(iii), the polynomial growth of $\partial_xJ$ in $x$, and the moment estimate \eqref{eq:moment-grid}. For \eqref{eq:M2-J}, we apply the mean-value theorem to turn  the integral to
\begin{equation}
	\int_t^T  e^{-2\beta \tau}  \int_{\mathbb{R}} | \partial_xJ(\tau, X^{\boldsymbol{\pi}, \mathbb{S}}_{\tau-} + \alpha_{\tau}{\gamma}_k( \tau, X^{\boldsymbol{\pi}, \mathbb{S}}_{\tau-} , a_\tau^{\boldsymbol{\pi}, \mathbb{S}}, z) ;   \boldsymbol{\pi})\circ
	{\gamma}_k( \tau, X^{\boldsymbol{\pi}, \mathbb{S}}_{\tau-} , a_\tau^{\boldsymbol{\pi}, \mathbb{S}}, z)|^2 \nu_k(dz) d\tau
\end{equation}
for some $\alpha_{\tau}\in[0,1]$. Using the polynomial growth of $\partial_xJ$ in $x$, we can bound the integral by
\begin{align}
	&\phantom{\leq~}\int_t^T  e^{-2\beta \tau}  \int_{\mathbb{R}} |\partial_xJ(\tau, X^{\boldsymbol{\pi}, \mathbb{S}}_{\tau-} + \alpha_{\tau}{\gamma}_k( \tau, X^{\boldsymbol{\pi}, \mathbb{S}}_{\tau-} , a_\tau^{\boldsymbol{\pi}, \mathbb{S}}, z) ;   \boldsymbol{\pi})|^2\cdot
	|{\gamma}_k( \tau, X^{\boldsymbol{\pi}, \mathbb{S}}_{\tau-} , a_\tau^{\boldsymbol{\pi}, \mathbb{S}}, z)|^2 \nu_k(dz) d\tau\\
	&\leq C\int_t^T  e^{-2\beta \tau}  \int_{\mathbb{R}} (1+ |X^{\boldsymbol{\pi}, \mathbb{S}}_{\tau-} + \alpha_{\tau}{\gamma}_k( \tau, X^{\boldsymbol{\pi}, \mathbb{S}}_{\tau-} , a_\tau^{\boldsymbol{\pi}, \mathbb{S}}, z)|^p)^2\cdot
	|{\gamma}_k( \tau, X^{\boldsymbol{\pi}, \mathbb{S}}_{\tau-} , a_\tau^{\boldsymbol{\pi}, \mathbb{S}}, z)|^2 \nu_k(dz) d\tau\\
	&\leq C\int_t^T  e^{-2\beta \tau}  \int_{\mathbb{R}} (1+ |X^{\boldsymbol{\pi}, \mathbb{S}}_{\tau-}|^p + |{\gamma}_k( \tau, X^{\boldsymbol{\pi}, \mathbb{S}}_{\tau-} , a_\tau^{\boldsymbol{\pi}, \mathbb{S}}, z)|^p)^2\cdot
	|{\gamma}_k( \tau, X^{\boldsymbol{\pi}, \mathbb{S}}_{\tau-} , a_\tau^{\boldsymbol{\pi}, \mathbb{S}}, z)|^2 \nu_k(dz) d\tau\\
	&\leq C\int_t^T (1+|X^{\boldsymbol{\pi}, \mathbb{S}}_{\tau-}|^p)^2\int_{\mathbb{R}}|{\gamma}_k( \tau, X^{\boldsymbol{\pi}, \mathbb{S}}_{\tau-} , a_\tau^{\boldsymbol{\pi}, \mathbb{S}}, z)|^2 \nu_k(dz)d\tau\\
	&\phantom{\leq~} + C\int_t^T\int_{\mathbb{R}}|{\gamma}_k( \tau, X^{\boldsymbol{\pi}, \mathbb{S}}_{\tau-} , a_\tau^{\boldsymbol{\pi}, \mathbb{S}}, z)|^{2p+2} \nu_k(dz)d\tau \\
	&\phantom{\leq~} + 2C \int_t^T(1+ |X^{\boldsymbol{\pi}, \mathbb{S}}_{\tau-}|^p)\int_{\mathbb{R}}|{\gamma}_k( \tau, X^{\boldsymbol{\pi}, \mathbb{S}}_{\tau-} , a_\tau^{\boldsymbol{\pi}, \mathbb{S}}, z)|^{p+2} \nu_k(dz)d\tau.
\end{align}
Using Assumption~\ref{assump:SDE}-(iii) and the moment estimate \eqref{eq:moment-grid}, we obtain \eqref{eq:M2-J}.  \rev{It follows that the second and third processes on the right-hand side of \eqref{eq:J-q} are $\mathcal{G}^{\mathbb{S}}$-martingales.}
Thus, we have the martingale property of the process given by \eqref{eq:mart1}.
}

\rev{
Conversely, suppose \eqref{eq:mart1} is a $\mathcal{G}^{\mathbb{S}}$-martingale for any $(t,x)$ and any grid $\mathbb{S}$ of $[t,T]$.
We can infer from \eqref{eq:J-q} that the process
\begin{equation}
\int_t^s e^{-\beta \tau}   \big(  q(\tau,  X_{\tau-}^{\boldsymbol{\pi}, \mathbb{S}}, a_\tau^{\boldsymbol{\pi}, \mathbb{S}}; \boldsymbol{\pi}) -  \hat  q(\tau,  X_{\tau-}^{\boldsymbol{\pi}, \mathbb{S}}, a_\tau^{\boldsymbol{\pi}, \mathbb{S}}) \big)  d\tau	
\end{equation}
is also a $\mathcal{G}^{\mathbb{S}}$-martingale. Furthermore, it has continuous sample paths and finite variation and thus is zero almost surely. That is, for any $t<s,$
\begin{equation}\label{eq:f-zero}
\int_t^s e^{-\beta \tau}   f(\tau,  X_{\tau-}^{\boldsymbol{\pi}, \mathbb{S}}, a_\tau^{\boldsymbol{\pi}, \mathbb{S}})  d\tau =0, \quad \text{almost surely,}	
\end{equation}
where $f(t,x,a) := q(t,x,a; \pi) - \hat q(t,x,a)$, and it is a continuous function by our assumption.
We need to show that $f(t,x,a)=0$ for every $(t,x,a)$, and we prove this by contradiction. Suppose that there exists some time-state action $(t^*, x^*, a^*)$ such that $f(t^*, x^*, a^*)$ is nonzero. Without loss of generality, we suppose $f(t^*, x^*, a^*)> \epsilon$ for some $\epsilon>0.$ Then there exists some $\delta^*>0$ such that $f(t, x, a)> \epsilon/2$ for
all $(u, x', a')$ with $ |u - t^*| \vee | x'-x^*| \vee |a' -a^*| \le \delta^*$.
}

\rev{
Fix $\delta \in (0, \delta^*)$ and choose a grid $\mathbb{S}$ of $[t^*, T]$ such that $[t^*, t^* + \delta]$ is within the first subinterval of the grid.
Consider the grid sample state process $X^{\boldsymbol{\pi}, \mathbb{S}}$ starting from $(t^*, x^*)$, with $X^{\boldsymbol{\pi}, \mathbb{S}}_{t^*} = x^*$. By the definition of the action process, the action at time $s \in [t^*, t^* + \delta]$ is given by $a_{s}^{\bpi, \mathbb{S}} = a_{t^*}^{\bpi, \mathbb{S}} = G^{\bpi}(t^*, x^*, U) $, where $U$ is an uniform random vector on $[0,1]^n$.
Define
\begin{align}
	T_{\delta, \mathbb{S}} = \inf \{t' \ge t^*:  |X_{t'}^{\boldsymbol{\pi}, \mathbb{S}} - x^*| > \delta\}  \wedge (t^* + \delta).
\end{align}
We can see that $T_{\delta, \mathbb{S}} > t^*$, $\mathbb{P}$-almost surely. That is, $T_{\delta, \mathbb{S}}(\omega) > t^*$ for all $\omega \in \Omega \backslash \Omega_0$ where $\mathbb{P}(\Omega_0)=0.$
This is because the L\'evy processes that drive the controlled grid sample state $X^{\bpi, \mathbb{S}}$
are stochastic continuous, i.e., the probability of having a jump at the fixed time $t^*$ is zero.
Now for $\omega \in \Omega \backslash \Omega_0$ and $s \in [t^*, T_{\delta, \mathbb{S}}(\omega)] \subset [t^*, t^* + \delta]$, the action is given by
$a_{s}^{\bpi, \mathbb{S}}(\omega)  = a_{t^*}^{\bpi, \mathbb{S}}(\omega) = G^{\bpi}(t^*, x^*, U(\omega))$ .
By Definition~\ref{def:policy}-(i), if we let $\mathcal{Z} = \{\omega \in \Omega: |a_{t^*}^{\bpi, \mathbb{S}}(\omega) - a^*| \le \delta \}$, then $\mathbb{P}(\mathcal{Z}) >0$.
This implies that for $\omega \in \mathcal{Z} \backslash \Omega_0$ and $\tau \in [t^*, T_{\delta, \mathbb{S}}(\omega)]$, we have $f(\tau,  X_{\tau-}^{\boldsymbol{\pi}, \mathbb{S}}(\omega), a_\tau^{\boldsymbol{\pi}, \mathbb{S}} (\omega)) > \epsilon/2$. 
 This leads to a contradiction to \eqref{eq:f-zero}. }

\rev{ Part (ii) directly follows from applying It\^{o}'s formula to $J(s,  X_{s}^{\boldsymbol{\pi'}, \mathbb{S}}; \boldsymbol{\pi})$ and together with a similar argument  in proving part (i).
For part (iii), again a similar argument in proving part (i) deduces  that
the process
\begin{equation}
\int_t^s e^{-\beta \tau}   \big(   q(\tau,  X_{\tau-}^{\boldsymbol{\pi'}, \mathbb{S}}, a_\tau^{\boldsymbol{\pi'}, \mathbb{S}}; \boldsymbol{\pi}) -  \hat  q(\tau,  X_{\tau-}^{\boldsymbol{\pi'}, \mathbb{S}}, a_\tau^{\boldsymbol{\pi'}, \mathbb{S}}) \big)  d\tau	
\end{equation}
is a $\mathcal{G}^{\mathbb{S}}$-martingale. Then the same argument in part (i) applies to obtain the desired conclusion.
}
\end{proof}


\begin{proof}[\bf Proof of Theorem~\ref{thm:mart2}]

(i) The ``if" part directly follows from the argument in the proof of Theorem~\ref{thm:mart1}. In the following, we prove the ``only if" part.

\rev{Assume for any
	$(t,x) \in [0, T] \times \mathbb{R}^d $ and any grid $\mathbb{S}$ of $[t,T]$, the following process
	\begin{align}
		e^{-\beta s} \hat J(s, X_s^{ \boldsymbol{\pi} , \mathbb{S}}) +      \int_t^s e^{-\beta \tau } \big( r( \tau, X_{\tau}^{\boldsymbol{\pi} , \mathbb{S}}, a_{\tau}^{\boldsymbol{\pi},  \mathbb{S}}) - \hat q ( \tau, X_{\tau}^{\boldsymbol{\pi}, \mathbb{S}}, a_{\tau}^{\boldsymbol{\pi, \mathbb{S}}}) \big) d\tau
	\end{align}
	is a $\mathcal{G}^{\mathbb{S}}$-martingale. Applying It\^{o}'s formula to $\hat J$ over the grid sample state process $(X_s^{ \boldsymbol{\pi}, \mathbb{S}})$  similarly as in the proof of part (i) of Theorem~\ref{thm:mart1}, and letting $g(t,x, a):= \mathcal{L}^a \hat J(t,x) - \beta \hat J(t,x)$ where the operator $\mathcal{L}^a$ is given in \eqref{eq:gene1}, we obtain
\begin{align}
e^{-\beta s} \hat J(s, X_s^{ \boldsymbol{\pi}, \mathbb{S}}) -    \int_t^s e^{-\beta \tau}   g(\tau, X_{\tau}^{\boldsymbol{\pi}, \mathbb{S}}, a_{\tau}^{\boldsymbol{\pi}, \mathbb{S}}) d\tau
\end{align}
is a $\mathcal{G}^{\mathbb{S}}$-martingale. Therefore,
\begin{align}
 \int_t^s e^{-\beta \tau}  (r- \hat q + g)(\tau, X_{\tau}^{\boldsymbol{\pi}, \mathbb{S}}, a_{\tau}^{\boldsymbol{\pi}, \mathbb{S}}) d\tau
\end{align}
is a $\mathcal{G}^{\mathbb{S}}$-martingale, which further implies that $(r- \hat q + g)(t,x,a)=0$ for all $(t,x,a)$ by a similar argument as in the proof of Theorem~\ref{thm:mart1}. Hence we have $$\hat q(t,x,a) = r(t,x,a) + \mathcal{L}^a \hat J(t,x) - \beta \hat J(t,x) = \partial_t\hat J(t,x) + H(t,x, a, \partial_x{\hat J},  \partial_x^2{\hat J}, {\hat J}) - \beta \hat J(t,x).$$ The constraint for $\hat q$ in \eqref{eq:constraints} then implies that
\begin{align}
	\int_{\mathcal{A}} \left\{ \partial_t\hat J(t,x) + H(t,x, a, \partial_x{\hat J},  \partial_x^2{\hat J}, {\hat J}) - \beta \hat J(t,x) - \theta \log \boldsymbol{\pi}(a| t,x)\right\} \boldsymbol{\pi}(a|t,x) da =0,
\end{align}
for all $(t,x) \in [0, T] \times \mathbb{R}^d.$ Because $\hat J(T,x) =h(x)$, by Lemma~\ref{lem:PK}, we deduce that $\hat J(t,x) = J(t,x; \boldsymbol{\pi}) $ for all $(t,x)$. Then it follows from Theorem~\ref{thm:mart1}-(iii) that
$\hat q(t,x, a) = q(t,x, a; \boldsymbol{\pi}) $ for all $(t,x,a)$.
}

\rev{Part (ii) follows directly from part (ii) of Theorem~\ref{thm:mart1}. Part (iii) follows from a similar argument as in the proof of part (i), by applying It\^{o}'s formula to $\hat J$ over the grid sample state process $(X_s^{ \boldsymbol{\pi}', \mathbb{S}})$ under $\boldsymbol{\pi}'$. Finally,
if $\boldsymbol{\pi}(a|t,x) = \frac{\exp(\frac{1}{\theta} \hat q(t,x,a))} {\int_{\mathcal{A} } {\exp(\frac{1}{\theta} \hat q(t,x,a)) da}}$, then it follows from Theorem~\ref{thm:PI} that $\boldsymbol{\pi}$ is the optimal policy and $\hat J$ is the optimal value function.}
\end{proof}


\begin{proof}[\bf Proof of Theorem~\ref{thm:opt-mart}]
\rev{Part (i) directly follows from Part (ii) of Theorem~\ref{thm:mart2}. The proof of part (ii) follows the same argument as in the proof of part (ii) of Theorem 9 in \cite{jia2025erratum}.}
\end{proof}

\begin{proof}[\bf Proof of Lemma~\ref{lem:pde-MV-hedge}]
	(1) Consider the function $h'(t,S)$ defined by the RHS of \eqref{eq:h-rep}. Proposition 12.1 in \cite{tankov2004financial} shows that $h'\in C^{1,2}([0,T)\times\mathbb{R}_{+})\cap C([0,T]\times\mathbb{R}_{+})$ and it satisfies the PIDE \eqref{eq:h-pde-MV-hedge}. Furthermore, $h'(t,S)$ is Lipschitz continuous in $S$. The Lipschitz continuity of function $\hat{G}$ also implies $|\hat{G}(\hat{S}_T)|\leq C(1+\hat{S}_T)$ for some constant $C>0$. It follows that
	\begin{equation}
		|h'(t,S)|\leq C(1+ \mathbb{E}^{\mathbb{Q}}[\hat{S}_T|\hat{S}_t=S]) \leq C(1+S),
	\end{equation}
	where we used the martingale property of $\hat{S}$ under $\mathbb{Q}$. The Feynman-Kac Theorem (see \cite{zhu2015feynman}) implies uniqueness of classical solutions satisfying the linear growth condition.
	
	(2) To study the PIDE \eqref{eq:g-pde-MV-hedge}, we consider the function $g'(t,S)$ defined by the RHS of \eqref{eq:g-rep}.
	Under the assumption of Lemma \ref{lem:pde-MV-hedge}, the arguments of Proposition 12.1 in \cite{tankov2004financial} can be used to show that $g'\in C^{1,2}([0,T)\times\mathbb{R}_{+})\cap C([0,T]\times\mathbb{R}_{+})$ and it satisfies the PIDE \eqref{eq:g-pde-MV-hedge}. The Lipschits continuity of $h(t,S)$ in $S$ implies that $\partial_Sh(t,S)$ is bounded and $|h(t,S\exp(z))-h(t,S)|\leq C(\exp(z)-1)$. We also have $\mathbb{E}[\sup_{t\leq s\leq T}\hat{S}_s^2|\hat{S}_t=S]\leq C(1+S^2)$ \cite[Theorem 3.2]{kunita2004stochastic}. Combining these results, we see that $g'(t,S)$ shows quadratic growth in $S$. Again the Feynman-Kac Theorem implies uniqueness of classical solutions satisfying the quadratic growth condition.
\end{proof}


\section{Convergence of value functions with grid sample state processes}
In this section, we show that, under appropriate conditions, the value functions with grid sample state processes converge to the value function of the exploratory state process, and the convergence rate is of the same order in the mesh size of the time grid.

Consider the time grid $\mathbb{S}= \{0 = t_0 < t_1 < \ldots < t_N = T\}$ with its mesh size $|\mathbb{S}|:=\max_{1\leq n\leq N}|t_{n}-t_{n-1}|$. Define
\begin{equation}
	g(t,x,a):=r(t,x,a) - \theta\log\bpi(a|t,x),\ \ \tilde{g}(t,x):=\int_{\mathcal{A}}g(t,x,a)\bpi(a|t,x)da.
\end{equation}
At $t=0$, the value function associated with  the grid sample state process following SDE \eqref{eq:samplestate} is
\begin{equation}
J^{\bpi,\mathbb{S}}(0,x) \coloneqq \mathbb{E}_{0,x}\left[  \int_0^T  e^{-\beta t} g(t, X_{t-}^{\bpi, \mathbb{S}}, a_{t}^{\bpi, \mathbb{S}}) dt +  e^{-\beta T } h(X_T^{\bpi, \mathbb{S}} )\right].
\end{equation}
For the exploratory state process following SDE \eqref{eq:SDE-expl}, the corresponding (initial) value function is
\begin{equation}
	J^{\bpi}(0,x) \coloneqq \mathbb{E}_{0,x}\left[  \int_0^T  e^{-\beta t} \tilde{g}(t, \tilde{X}_{t-}^{\bpi})dt +  e^{-\beta T } h( \tilde X_T^{\bpi} )\right].
\end{equation}
In the above, $\mathbb{E}_{0,x}$ denotes taking expectation given the initial state is $x$.

For further analysis, we introduce some terminology and function spaces. When we say a function $f(t,x)$ shows polynomial growth in $x$, we mean $\sup_{0\leq t\leq T}|f(t,x)|\leq C(1+|x|^p)$ for some $p\geq 2$. Additionally, a function $f(t,x,a)$ has polynomial growth in $x$ and $a$ if $\sup_{0\leq t\leq T}|f(t,x,a)|\leq C(1+|x|^p+|a|^p)$ for some $p\geq 2$. Let $C^{k}_{p}([0,T]\times\mathbb{R}^d)$ be the space of functions $f: [0,T]\times\mathbb{R}^d\mapsto \mathbb{R}$ such that for nonnegative integer $i$ and multi-index $j$ with $2i+|j|\leq k$, $\partial_t^i\partial_x^j f(t,x)$ is continuous and has polynomial growth in $x$. Let $C^{k,0}_p([0,T]\times\mathbb{R}^d\times\mathcal{A})$ be the space of functions $f: [0,T]\times\mathbb{R}^d\times\mathcal{A}\mapsto \mathbb{R}$ such that for nonnegative integer $i$ and multi-index $j$ with $2i+|j|\leq k$, $\partial_t^i\partial_x^j f(t,x,a)$ is continuous and shows polynomial growth in $x$ and $a$. We impose the following assumption.

\begin{assumption}\label{assumption-converg}
We assume
\begin{itemize}
\item [(i)] $b, \sigma^2\in C^{2,0}_p([0,T]\times\mathbb{R}^d\times\mathcal{A})$;
\item [(ii)] for any $t\in(0,T]$, the PIDE
\begin{align}\label{eq:PIDE}
 \mathcal{L}^{\bpi} \phi(s,x) &= 0, \quad (s,x) \in [0,t) \times \mathbb{R}^d,  \\
\phi(t,x) &= f(x), \quad x \in \mathbb{R}^d,
\end{align}
has a unique solution $\phi \in C^{4}_p([0,t]\times\mathbb{R}^d)$ for $f=h, \tilde{g}(t,\cdot)$, where $\mathcal{L}^{\bpi}$ is given by \eqref{eq:expl-gen-2};
\item[(iii)] for $k=1,\cdots,\ell$ and $p\in \{0,1\}\cup[2,\infty)$, $\int_{\mathbb{R}}\gamma_k(t,x,a,z)^p\varphi(t,x,a,z)\nu_k(dz)$ has polynomial growth in $x$ and $a$ for $\varphi=\partial_t\gamma_k, \partial_x\gamma_k, (\partial_x\gamma_k)^2, \partial_{x}^2\gamma_k$;
\item[(iv)] $g\in C^{2,0}_p([0,T]\times\mathbb{R}^d\times\mathcal{A})$ and $\tilde{g}\in C^{2}_p([0,T]\times\mathbb{R}^d)$.
\end{itemize}
\end{assumption}

In the two applications considered in Sections 5 and 6, conditions (i) and (iii) clearly hold and condition (iv) can be verified for the Gaussian policies studied. Condition (ii) can be easily verified in the MV portfolio selection problem for the Gaussian policies due to the LQ structure of the problem (the solution is quadratic in $x$ and has smooth dependence on $t$). It is more difficult to check this condition in the MV hedging problem, but for the optimal stochastic policy given by \eqref{eq:policy-optimal-MV-hedge} it should be possible to verify the required polynomial growth of the derivatives for calls and puts based on the representations \eqref{eq:h-rep} and \eqref{eq:g-rep} under  additional assumptions on the derivatives of the transition density of the L\'evy process that models the log stock price.

The following result shows the convergence of the value functions with grid sample state processes to that of the exploratory SDE, which extends Theorem 4.1 and Proposition 5.1 of \cite{jia2025accuracy} from diffusions to jump-diffusions.

\begin{theorem}\label{thm:value-converge}
Suppose Assumptions~\ref{assump:SDE}--\ref{assumption-converg} hold. For any admissible stochastic policy $\bpi$, we have
\begin{equation}
	|J^{\bpi,\mathbb{S}}(0,x) - J^{\bpi}(0,x)|\leq C|\mathbb{S}|
\end{equation}
for some positive constant $C$ which may depend on $x$.
\end{theorem}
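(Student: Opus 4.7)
The plan is to introduce the exploratory value function
\[
V(s,x) := \mathbb{E}_{s,x}\!\left[\int_s^T e^{-\beta(r-s)} \tilde g(r, \tilde X_{r-}^{\bpi})\,dr + e^{-\beta(T-s)} h(\tilde X_T^{\bpi})\right],
\]
so that $V(0,x) = J^{\bpi}(0,x)$, and observe that $V$ satisfies the linear PIDE $\mathcal{L}^{\bpi}V - \beta V + \tilde g = 0$ with $V(T,\cdot) = h$. Writing $V$ as a superposition of the Feynman--Kac solutions from Assumption~\ref{assumption-converg}-(ii) with terminal data $h$ and $\tilde g(r,\cdot)$, I inherit the regularity $V \in C^{2,4}_p$ (with polynomial-growth derivatives), which is enough to justify the It\^o calculations below.

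Next I would apply It\^o's formula \eqref{eq:Ito-JD} to $e^{-\beta s} V(s, X_s^{\bpi,\mathbb{S}})$ along the grid sample state process and take expectations. The Brownian and compensated-Poisson integrals are true martingales by the moment bound \eqref{eq:moment-grid}, the polynomial growth of $\partial_x V$, and the jump integrability in Assumption~\ref{assumption-converg}-(iii), exactly as in the vanishing-martingale step of Lemma~\ref{lem:PK}. Using the PIDE satisfied by $V$ to substitute $-\beta V = -\mathcal{L}^{\bpi}V - \tilde g$ gives
\[
J^{\bpi,\mathbb{S}}(0,x) - J^{\bpi}(0,x) = \mathbb{E}_{0,x}\!\left[\int_0^T e^{-\beta s} \big(F(s, X^{\bpi,\mathbb{S}}_{s-}, a^{\bpi,\mathbb{S}}_s) - \widetilde F(s, X^{\bpi,\mathbb{S}}_{s-})\big)\,ds\right],
\]
where $F(t,x,a) := \mathcal{L}^a V(t,x) + g(t,x,a)$ and $\widetilde F(t,x) := \int_{\mathcal{A}} F(t,x,a)\,\bpi(a|t,x)\,da$. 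The whole discrepancy is thus the expected integral of a ``sampled minus averaged'' functional.

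For $s \in (t_i, t_{i+1}]$, I would decompose by freezing the state at the last grid point:
\begin{align*}
F(s, X^{\bpi,\mathbb{S}}_{s-}, a^{\bpi,\mathbb{S}}_s) - \widetilde F(s, X^{\bpi,\mathbb{S}}_{s-})
&= \big[F(s, X^{\bpi,\mathbb{S}}_{s-}, a^{\bpi,\mathbb{S}}_s) - F(t_i, X^{\bpi,\mathbb{S}}_{t_i}, a^{\bpi,\mathbb{S}}_s)\big] \\
&\quad + \big[F(t_i, X^{\bpi,\mathbb{S}}_{t_i}, a^{\bpi,\mathbb{S}}_s) - \widetilde F(t_i, X^{\bpi,\mathbb{S}}_{t_i})\big] \\
&\quad + \big[\widetilde F(t_i, X^{\bpi,\mathbb{S}}_{t_i}) - \widetilde F(s, X^{\bpi,\mathbb{S}}_{s-})\big].
\end{align*}
Since $a^{\bpi,\mathbb{S}}_s = G^{\bpi}(t_i, X^{\bpi,\mathbb{S}}_{t_i}, U_{i+1})$ and $U_{i+1}$ is independent of $\mathcal{G}^{\mathbb{S}}_{t_i}$ with conditional law $\bpi(\cdot|t_i, X^{\bpi,\mathbb{S}}_{t_i})$, the middle bracket has conditional expectation zero by the very definition of $\widetilde F$; only the first and third brackets survive.

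To bound those two brackets I would apply It\^o's formula a \emph{second} time, to $F(\cdot, \cdot, a^{\bpi,\mathbb{S}}_s)$ (with the action frozen) and to $\widetilde F$, on $(t_i, s]$ along $X^{\bpi,\mathbb{S}}$. The $C^{2,4}_p$ regularity of $V$, combined with Assumptions~\ref{assump:SDE}-(i),(iii), \ref{assumption-converg}-(i),(iii),(iv) and Lemmas~\ref{lem:expl-diff-coeffs}--\ref{lem:expl-jump-LC}, makes $F$ and $\widetilde F$ of class $C^{1,2}_p$ in $(t,x)$; the resulting drift terms are bounded in expectation uniformly over the interval and the stochastic integrals vanish, yielding $|\mathbb{E}_{0,x}[\text{first/third bracket}]| \le C(s-t_i)$. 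Integrating over $s\in(t_i,t_{i+1}]$ contributes $O((t_{i+1}-t_i)^2)$ per interval, and summing over $i$ gives the stated $O(|\mathbb{S}|)$ bound. The main obstacle is exactly this regularity bookkeeping: certifying $V\in C^{2,4}_p$ so that the second It\^o application to $F$ is legitimate, and controlling the jump integrals that appear in $\mathcal{L}^a V$ when $\nu_k$ may be infinite --- which is precisely why Assumption~\ref{assumption-converg}-(iii) and the extra smoothness of $b,\sigma^2,g$ in Assumption~\ref{assumption-converg}-(i),(iv) are imposed.
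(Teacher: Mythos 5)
Your strategy is sound and rests on the same three pillars as the paper's proof --- Feynman--Kac/PIDE regularity from Assumption~\ref{assumption-converg}-(ii), the tower property killing the action-randomization error at grid points (your ``middle bracket''), and a second application of It\^o's formula to convert the state drift between grid points into an $O((s-t_i))$ bound, hence $O(|\mathbb{S}|^2)$ per interval --- but it packages them differently. The paper never forms the inhomogeneous value function $V$: it telescopes $\mathbb{E}[\phi(t_j,X_{t_j}^{\bpi,\mathbb{S}})-\phi(t_{j-1},X_{t_{j-1}}^{\bpi,\mathbb{S}})]$ for the \emph{homogeneous} PIDE solutions with terminal data $h$ and $\tilde g(t,\cdot)$ separately, and then handles the running reward by a three-way split of $\mathbb{E}[g(\tau,X_\tau^{\bpi,\mathbb{S}},a_i)-\tilde g(\tau,\tilde X_\tau^{\bpi})]$ that re-uses the terminal-data estimate at each grid time. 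Your route is more unified --- a single error representation $\mathbb{E}_{0,x}[\int_0^T e^{-\beta s}F(s,X_{s-}^{\bpi,\mathbb{S}},a_s^{\bpi,\mathbb{S}})\,ds]$ with $F=\mathcal{L}^aV-\beta V+g$ (note that your $\widetilde F\equiv 0$ by the PIDE, so your third bracket vanishes identically) --- and it buys a cleaner bookkeeping at the price of needing $V\in C^{2,4}_p$. That is the one real gap: Assumption~\ref{assumption-converg}-(ii) only grants $C^4_p$ regularity of the homogeneous problems with data $h$ and $\tilde g(t,\cdot)$ at each fixed terminal time $t$, and your superposition $V(s,x)=\phi^h(s,x)+\int_s^T\phi^{\tilde g(r,\cdot)}(s,x)\,dr$ requires, in addition, polynomial-growth bounds on $\phi^{\tilde g(r,\cdot)}$ and its derivatives that are \emph{uniform in the terminal time} $r$, plus justification for differentiating under the integral (including the boundary term $-\tilde g(s,x)$ from the lower limit). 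To be fair, the paper's part (2) also implicitly needs such uniformity over grid points $t_{i-1}$, so this is a sharpening of the hypothesis rather than a fatal flaw; you should state the uniformity explicitly rather than asserting that the regularity is ``inherited.'' With that caveat addressed, and with the control of the nonlocal terms in $\mathcal{L}^aV$ handled as in Lemma~\ref{lem:value-converge-lemma} (which you correctly identify as the role of Assumption~\ref{assumption-converg}-(iii)), your argument goes through.
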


We need the following lemma for proving the theorem.
\begin{lemma}\label{lem:value-converge-lemma}
	For $\phi \in C^{4}_p([0,t]\times\mathbb{R}^d)$ and $k=1,\cdots,\ell$, let
	\begin{equation}
	I^\phi_k(\tau,x,a)\coloneqq \int_\mathbb{R}\big(\phi(\tau,x+\gamma_k(\tau,x,a,z))-\phi(\tau,x)-\gamma_k(\tau,x,a,z)\circ\partial_x\phi(\tau,x)\big)\nu_k(dz),\quad \tau\in[0,t].	\label{eq:I-phi}
	\end{equation}
Under Assumption \ref{assump:SDE}-(iii) and Assumption \ref{assumption-converg}-(iii), $\partial_\tau I^\phi_k, \partial_xI^\phi_k, \partial_{x}^2I^\phi_k$ have polynomial growth in $x$ and $a$.
\end{lemma}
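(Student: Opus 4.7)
The plan is to justify differentiation under the integral sign in \eqref{eq:I-phi} and then bound each resulting integrand by a polynomial in $x$ and $a$. A convenient starting point is the second-order Taylor remainder
\begin{equation}
\phi(\tau,x+\gamma_k)-\phi(\tau,x)-\gamma_k\circ\partial_x\phi(\tau,x)=\int_0^1(1-s)\,\gamma_k^\top\,\partial_x^2\phi(\tau,x+s\gamma_k)\,\gamma_k\,ds,
\end{equation}
where $\gamma_k=\gamma_k(\tau,x,a,z)$. Since $\phi\in C^{4}_p([0,t]\times\mathbb{R}^d)$, the function $\partial_x^2\phi$ and all its partials up to order two have polynomial growth in the spatial variable, so $|\partial_x^2\phi(\tau,x+s\gamma_k)|\le C(1+|x|^p+|\gamma_k|^p)$ for some $p\geq 2$. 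Combined with Assumption~\ref{assump:SDE}-(iii), which controls $\int_{\mathbb{R}}|\gamma_k|^q\nu_k(dz)$ for every $q\ge 1$, this already yields $|I^\phi_k(\tau,x,a)|\le C(1+|x|^{p'}+|a|^{p'})$ for an appropriate $p'$, serving as a warm-up and fixing notation.

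For the three derivatives, I would differentiate the Taylor-remainder representation under the integral sign, with the passage to the derivative justified in each case by a dominated-convergence argument whose dominating function is produced by the same kind of polynomial-in-$x$-and-$a$ bound. Concretely:
\begin{itemize}
\item For $\partial_\tau I^\phi_k$, differentiating the integrand produces three types of terms, namely (a) $\gamma_k^\top\,\partial_\tau\partial_x^2\phi(\tau,x+s\gamma_k)\,\gamma_k$, (b) $\gamma_k^\top\,\partial_x^3\phi(\tau,x+s\gamma_k)\,\partial_\tau\gamma_k\cdot s$ coupled with $\gamma_k$, and (c) $\partial_\tau\gamma_k^\top\,\partial_x^2\phi(\tau,x+s\gamma_k)\,\gamma_k$ (and its symmetric version). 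Type (a) is handled exactly as the warm-up using $\partial_\tau\partial_x^2\phi\in C_p$. Types (b) and (c) are precisely the shapes covered by Assumption~\ref{assumption-converg}-(iii) with $\varphi=\partial_\tau\gamma_k$ (after expanding the polynomial bound on $\partial_x^j\phi(\tau,x+s\gamma_k)$ into monomials of $|\gamma_k|$ and $|x|$).
\item For $\partial_x I^\phi_k$, the chain rule applied to $\gamma_k^\top\,\partial_x^2\phi(\tau,x+s\gamma_k)\,\gamma_k$ produces (a$'$) $\gamma_k^\top\,\partial_x^3\phi(\tau,x+s\gamma_k)(I+s\,\partial_x\gamma_k)\,\gamma_k$ and (b$'$) $\partial_x\gamma_k^\top\,\partial_x^2\phi(\tau,x+s\gamma_k)\,\gamma_k$ (and symmetric). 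The pieces involving $\partial_x\gamma_k$ are matched against $\varphi=\partial_x\gamma_k$ in Assumption~\ref{assumption-converg}-(iii).
\item For $\partial_x^2 I^\phi_k$, one more differentiation brings in $\partial_x^4\phi$ (still in $C_p$), $(\partial_x\gamma_k)^2$ and $\partial_x^2\gamma_k$, which are exactly the remaining choices of $\varphi$ in Assumption~\ref{assumption-converg}-(iii).
\end{itemize}
In every case, after distributing the polynomial bound on the relevant derivative of $\phi$, the resulting $z$-integrands are finite sums of expressions of the form $|\gamma_k|^{q}\cdot|\varphi|$ or pure powers $|\gamma_k|^q$, each of which integrates to a function with polynomial growth in $x$ and $a$ by Assumption~\ref{assump:SDE}-(iii) and Assumption~\ref{assumption-converg}-(iii).

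The main obstacle I foresee is purely bookkeeping rather than conceptual: making sure that every term generated by the chain rule on $\partial_x^2\phi(\tau,x+s\gamma_k)\gamma_k\gamma_k^\top$ fits the template $|\gamma_k|^{q}\cdot\varphi$ with $\varphi\in\{1,\partial_\tau\gamma_k,\partial_x\gamma_k,(\partial_x\gamma_k)^2,\partial_x^2\gamma_k\}$ as listed in Assumption~\ref{assumption-converg}-(iii), after expanding the polynomial-growth bound on $\partial_x^j\phi(\tau,x+s\gamma_k)$ into a finite sum of monomials in $|x|$ and $|\gamma_k|$ (so that the $|x|$-factors can be taken out of the $\nu_k$-integral). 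Once this matching is written out carefully, the dominated-convergence justification for differentiating under the integral is immediate, and the polynomial-growth conclusion for $\partial_\tau I^\phi_k$, $\partial_x I^\phi_k$ and $\partial_x^2 I^\phi_k$ follows.
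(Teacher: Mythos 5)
Your proposal is correct and follows essentially the same route as the paper's proof: differentiate under the integral sign, use a second-order expansion to extract factors of $\gamma_k$ from the increments of $\phi$ and its derivatives, and then invoke the polynomial growth of $\partial_t^i\partial_x^j\phi$ together with Assumption~\ref{assump:SDE}-(iii) and Assumption~\ref{assumption-converg}-(iii) to bound each resulting integrand. The only cosmetic difference is that you work from the integral-form Taylor remainder while the paper differentiates the difference form directly and applies the mean-value theorem afterwards; both yield the same family of terms $|\gamma_k|^q\cdot|\varphi|$ and the same justification for interchanging differentiation and integration.
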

\begin{proof}[\bf Proof]
Without loss of generality we only consider $d=1$ for notational simplicity. Consider
\begin{align}
	\psi(\tau,x,a)&\coloneqq\int_{\mathbb{R}}\big(\partial_{\tau}\phi(\tau,x+\gamma_k(\tau,x,a,z)) + \partial_x\phi(\tau,x+\gamma_k(\tau,x,a,z))\partial_\tau\gamma_k(\tau,x,a,z)\\
	&\phantom{\int_{\mathbb{R}}\quad~~} - \partial_\tau\phi(\tau,x) - \partial_\tau\gamma_k(\tau,x,a,z)\partial_x\phi(\tau,x) - \gamma_k(\tau,x,a,z)\partial_{\tau}\partial_x\phi(\tau,x)\big)\nu_k(dz)\\
	&=\int_{\mathbb{R}}\big(\partial_{\tau}\partial_x\phi(\tau,x+\alpha\gamma_k(\tau,x,a,z))\gamma_k(\tau,x,a,z)- \partial_{\tau}\partial_x\phi(\tau,x)\gamma_k(\tau,x,a,z) \\
	&\phantom{\int_{\mathbb{R}}\quad~~} + \partial_x^2\phi(\tau,x+\alpha'\gamma_k(\tau,x,a,z))\gamma_k(\tau,x,a,z)\partial_\tau\gamma_k(\tau,x,a,z) \big)\nu_k(dz)
\end{align}
for some $\alpha,\alpha'\in(0,1)$ by the mean-value theorem. Using $\phi \in C^{4}_p([0,t]\times\mathbb{R}^d)$, Assumption \ref{assump:SDE}-(iii) and Assumption \ref{assumption-converg}-(iii), we obtain that $\psi(\tau,x,a)$ has polynomial growth in $x$ and $a$ (and hence has finite values). The interchangeability  of the differentiation and integration thus follows leading to $\partial_\tau I^\phi_k = \psi(\tau,x,a)$.

Using the same argument, we obtain
\begin{align}
	\partial_xI^\phi_k(\tau,x,a)&=\int_{\mathbb{R}}\big(\partial_x\phi(\tau,x+\gamma_k(\tau,x,a,z))(1+\partial_x\gamma_k(\tau,x,a,z)) - \partial_x\phi(\tau,x)\\
	&\phantom{\int_{\mathbb{R}}\quad~~}  - \partial_x\gamma_k(\tau,x,a,z)\partial_x\phi(\tau,x) - \gamma_k(\tau,x,a,z)\partial_x^2\phi(\tau,x)\big)\nu_k(dz),\\
	\partial_x^2I^\phi_k(\tau,x,a)&=\int_{\mathbb{R}}\big(\partial_x^2\phi(\tau,x+\gamma_k(\tau,x,a,z))(1+\partial_x\gamma_k(\tau,x,a,z))^2 - \partial_x^2\phi(\tau,x)\\
	&\phantom{\int_{\mathbb{R}}\quad~~} + \partial_x\phi(\tau,x+\gamma_k(\tau,x,a,z))\partial_x^2\gamma_k(\tau,x,a,z) - \partial_x\phi(\tau,x)\partial_x^2\gamma_k(\tau,x,a,z) \\
	&\phantom{\int_{\mathbb{R}}\quad~~}-2\partial_x\gamma_k(\tau,x,a,z)\partial_x^2\phi(\tau,x)-\gamma_k(\tau,x,a,z)\partial_{x}^3\phi(\tau,x)\big)\nu_k(dz),
\end{align}
which show polynomial growth in $x$ and $a$.
\end{proof}

\begin{proof}[\bf Proof of Theorem~\ref{thm:value-converge}]
In this proof, $C$ is a generic constant independent of $n$, whose values may vary from line to line. We consider $\beta=0$ for simplicity, but the proof can be easily adapted to accommodate discounting.

\noindent(1) Suppose $t=t_i\in\mathbb{S}$. We first show that for $f=h, \tilde{g}(t,\cdot)$,
\begin{equation} \label{eq:hT-limit}
|\mathbb{E}_{0,x}[f(X_{t}^{\bpi, \mathbb{S}})] - \mathbb{E}_{0,x}[f(\tilde{X}_{t}^{\bpi} )]|\leq C|\mathbb{S}|.
\end{equation}

To this end, consider the PIDE \eqref{eq:PIDE}. By the Feynman--Kac formula, we have $\phi(0, x) = \mathbb{E}_{0,x}[f(\tilde X_t^{\bpi})]$ . We also have
\begin{align}\label{eq:terminal-gap}
\mathbb{E}_{0,x}[f( X_t^{\bpi,\mathbb{S}})] - \mathbb{E}_{0,x}[f(\tilde X_T^{\bpi})]
&= \mathbb{E}_{0,x}[\phi(t, X_t^{\bpi, \mathbb{S}})] - \phi(0, x) \\
& =  \sum_{j=1}^{i} \mathbb{E}_{0,x}[\phi(t_j, X_{t_j}^{\bpi, \mathbb{S}}) - \phi(t_{j-1}, X_{t_{j-1}}^{\bpi, \mathbb{S}})].
\end{align}
It\^o's formula yileds
\begin{align*}
&\phi(t_j, X_{t_j}^{\bpi,\mathbb{S}}) - \phi(t_{j-1}, X_{t_{j-1}}^{\bpi,\mathbb{S}}) \\
&= \int_{t_{j-1}}^{t_j}\mathcal{L}^{a_j} \phi(\tau, X_{\tau-}^{\bpi,\mathbb{S}})d\tau +   \int_{t_{j-1}}^{t_j}\partial_x\phi(\tau, X_{\tau-}^{\bpi, \mathbb{S}}) \circ {\sigma}(\tau,  X^{\bpi, \mathbb{S}}_{\tau-}, a_j) dW_\tau \\
& \quad +  \sum_{k=1}^\ell \int_{t_{j-1}}^{t_j}  \int_{\mathbb{R}} \big( \phi(\tau, X^{\bpi, \mathbb{S}}_{\tau-} + \gamma_k(\tau, X^{\bpi, \mathbb{S}}_{\tau-}, a_j,z)) -  \phi(\tau,X^{\bpi, \mathbb{S}}_{\tau-})\big) \widetilde{N}_k(d\tau, dz),
\end{align*}
where $a_j=G^{\bpi}(t_{j-1}, X^{\bpi, \mathbb{S}}_{t_{j-1}}, U_{j})$ and the operator $\mathcal{L}^{a_j}$ is given by \eqref{eq:gene1}. Assumption \ref{assump:SDE} implies that the two stochastic integrals driven by the Brownian motion and compensated Poisson random measure are martingales. It follows that
\begin{align*}
 \mathbb{E}_{0,x}[\phi(t_j, X_{t_j}^{\bpi,\mathbb{S}}) - \phi(t_{j-1}, X_{t_{j-1}}^{\bpi, \mathbb{S}})] = \mathbb{E}_{0,x}\Big[ \int_{t_{j-1}}^{t_j}\mathcal{L}^{a_j}\phi(\tau, X_{\tau-}^{\bpi, \mathbb{S}})d\tau\Big].
\end{align*}
As $a_j\sim\bpi(\cdot|t_{j-1}, X^{\bpi, \mathbb{S}}_{t_{j-1}})$, we have
\begin{align}
	\mathbb{E}_{0,x}\Big[\int_{t_{j-1}}^{t_j}\mathcal{L}^{a_j}\phi(t_{j-1}, X_{t_{j-1}}^{\bpi, \mathbb{S}})d\tau\Big] &= \mathbb{E}_{0,x}\Big[\mathbb{E}_{0,x}\Big[\int_{t_{j-1}}^{t_j}\mathcal{L}^{a_j}\phi(t_{j-1}, X_{t_{j-1}}^{\bpi, \mathbb{S}})d\tau\Big|X_{t_{j-1}}^{\bpi, \mathbb{S}}\Big]\Big]\\
	&= \mathbb{E}_{0,x}\Big[\int_{t_{j-1}}^{t_j}\mathcal{L}^{\bpi} \phi(t_{j-1}, X_{t_{j-1}}^{\bpi,\mathbb{S}})d\tau\Big]\\
	&=0,
\end{align}
where the last equality follows from \eqref{eq:PIDE}. Hence, we can write
\begin{equation}
 \mathbb{E}_{0,x}[\phi(t_j, X_{t_j}^{\bpi , \mathbb{S}}) - \phi(t_{j-1}, X_{t_{j-1}}^{\bpi, \mathbb{S}})] =  \mathbb{E}_{0,x}[e_j],\  e_j\coloneqq\int_{t_{j-1}}^{t_j}
 \big(\mathcal{L}^{a_j} \phi(\tau , X_{\tau-}^{\bpi, \mathbb{S}}) -  \mathcal{L}^{a_j} \phi(t_{j-1}, X_{t_{j-1}}^{\bpi, \mathbb{S}})\big)d\tau.
\end{equation}
Using the expression of $\mathcal{L}^{a_j}$, we obtain
\begin{align}
\mathbb{E}_{0,x}[e_j]&=\mathbb{E}_{0,x}\Big[\int_{t_{j-1}}^{t_j}\big(\partial_{\tau}\phi(\tau, X_{\tau-}^{\bpi, \mathbb{S}})-\partial_{\tau}\phi(t_{j-1}, X_{t_{j-1}}^{\bpi, \mathbb{S}})\big)d\tau\Big]\\
&\phantom{==}+\mathbb{E}_{0,x}\Big[\int_{t_{j-1}}^{t_j}b(\tau, X_{\tau-}^{\bpi, \mathbb{S}}, a_{j})\circ\big(\partial_{x}\phi(\tau, X_{\tau-}^{\bpi, \mathbb{S}})-\partial_{x}\phi(t_{j-1}, X_{t_{j-1}}^{\bpi, \mathbb{S}})\big)d\tau\Big]\\
&\phantom{==}+\mathbb{E}_{0,x}\Big[\int_{t_{j-1}}^{t_j}\frac{1}{2}\sigma^2(\tau, X_{\tau-}^{\bpi, \mathbb{S}}, a_{j})\circ\big(\partial_x^2\phi(\tau, X_{\tau-}^{\bpi, \mathbb{S}})-\partial_x^2\phi(t_{j-1}, X_{t_{j-1}}^{\bpi, \mathbb{S}})\big)d\tau\Big]\\
&\phantom{==}+\mathbb{E}_{0,x}\Big[\int_{t_{j-1}}^{t_j}\big(b(\tau, X_{\tau-}^{\bpi, \mathbb{S}}, a_{j})-b(t_{j-1}, X_{t_{j-1}}^{\bpi, \mathbb{S}}, a_{j})\big)\circ\partial_{x}\phi(t_{j-1}, X_{t_{j-1}}^{\bpi, \mathbb{S}})d\tau\Big]\\
&\phantom{==}+\mathbb{E}_{0,x}\Big[\int_{t_{j-1}}^{t_j}\frac{1}{2}\big(\sigma^2(\tau, X_{\tau-}^{\bpi, \mathbb{S}}, a_{j})-\sigma^2(t_{j-1}, X_{t_{j-1}}^{\bpi, \mathbb{S}}, a_{j})\big)\circ \partial_x^2\phi(t_{j-1}, X_{t_{j-1}}^{\bpi, \mathbb{S}})d\tau\Big]\\
&\phantom{==}+\sum_{k=1}^\ell\mathbb{E}_{0,x}\Big[\int_{t_{j-1}}^{t_j}\big(I_k^\phi(\tau, X_{\tau-}^{\bpi, \mathbb{S}},a_{j})-I_k^\phi(t_{j-1}, X_{t_{j-1}}^{\bpi, \mathbb{S}},a_{j})\big)d\tau\Big],
\end{align}
where $I_k^{\phi}$ is defined by \eqref{eq:I-phi}. The first five terms can be analyzed using the arguments in \cite{jia2025accuracy}, which show that their sum is bounded by $C(t_j-t_{j-1})^2$. For the last term, we obtain from It\^o's formula that
\begin{align}
	&\phantom{=~}I_k^\phi(\tau, X_{\tau-}^{\bpi, \mathbb{S}},a_{j})-I_k^\phi(t_{j-1}, X_{t_{j-1}}^{\bpi, \mathbb{S}},a_{j})\\
	&=\int_{t_{j-1}}^{\tau}\big(\partial_sI_k^\phi(s, X_{s-}^{\bpi, \mathbb{S}},a_{j}) + b(s, X_{s-}^{\bpi, \mathbb{S}},a_{j})\circ\partial_xI_k^\phi(s, X_{s-}^{\bpi, \mathbb{S}},a_{j}) \big)ds \\
	&\phantom{=~} + \int_{t_{j-1}}^{\tau}\frac{1}{2}\sigma^2(s, X_{s-}^{\bpi, \mathbb{S}},a_{j})\circ\partial_{x}^2I_k^\phi(s, X_{s-}^{\bpi, \mathbb{S}},a_{j}) \big)ds\\
	&\phantom{=~} + \int_{t_{j-1}}^{\tau}\int_\mathbb{R}\big(I_k^\phi(s,X_{s-}^{\bpi, \mathbb{S}}+\gamma_k(s,X_{s-}^{\bpi, \mathbb{S}},a_j,z),a_j)-I_k^\phi(s,X_{s-}^{\bpi, \mathbb{S}},a_j)\\
	&\phantom{=~\int_{t_{j-1}}^{\tau}\int_\mathbb{R}()~} -\gamma_k(s,X_{s-}^{\bpi, \mathbb{S}},a_j,z)\circ\partial_xI_k^\phi(s,X_{s-}^{\bpi, \mathbb{S}},a_j)\big)\nu_k(dz)ds\\
	&\phantom{=~} +  \int_{t_{j-1}}^{\tau}\partial_xI_k^\phi(s, X_{s-}^{\bpi, \mathbb{S}},a_{j}) \circ {\sigma}(s,  X^{\bpi, \mathbb{S}}_{s-}, a_j) dW_s \\
	& \quad +  \sum_{k=1}^\ell \int_{t_{j-1}}^{\tau} \int_{\mathbb{R}}\big(I_k^\phi(s, X^{\bpi, \mathbb{S}}_{s-} + \gamma_k(s, X^{\bpi, \mathbb{S}}_{s-}, a_j, z), a_j)-  I_k^\phi(s, X^{\bpi, \mathbb{S}}_{s-}, a_j)\big) \widetilde{N}_k(ds, dz).
\end{align}
By Lemma \ref{lem:value-converge-lemma} and Proposition \ref{prop:SDE-sampled}, the last two terms are martingales and hence their expectations are zero. By Assumptions \ref{assump:SDE} and \ref{assumption-converg}, the admissibility of $\bpi$, Lemma \ref{lem:value-converge-lemma} and Proposition \ref{prop:SDE-sampled}, the expectations of the first three terms are finite and bounded by $C(\tau-t_{j-1})$. It follows that
\begin{align}
	&\phantom{\leq~}\Big|\mathbb{E}_{0,x}\Big[\int_{t_{j-1}}^{t_j}\big(I_k^\phi(\tau, X_{\tau-}^{\bpi, \mathbb{S}}, a_j)-I_k^\phi(t_{j-1}, X_{t_{j-1}}^{\bpi, \mathbb{S}}, a_j)\big)d\tau\Big]\Big|\\
	&\leq\int_{t_{j-1}}^{t_j}\mathbb{E}_{0,x}\Big[\big|I_k^\phi(\tau, X_{\tau-}^{\bpi, \mathbb{S}}, a_j)-I_k^\phi(t_{j-1}, X_{t_{j-1}}^{\bpi, \mathbb{S}}, a_j)\big|\Big]d\tau\\
	&\leq C\int_{t_{j-1}}^{t_j}\int_{t_{j-1}}^{\tau}(\tau-t_{j-1})dsd\tau\\
	&\leq C(t_j-t_{j-1})^2.
\end{align}
Putting these estimates together, we have $|E[e_j]|\leq C(t_j-t_{j-1})^2$. Finally, we obtain
\begin{align}
	\big|\mathbb{E}_{0,x}[f( X_t^{\bpi,\mathbb{S}})] - \mathbb{E}_{0,x}[f(\tilde X_t^{\bpi})]\big|
	&\leq \sum_{j=1}^{i} \big|\mathbb{E}_{0,x}[e_j]\big|\leq C|\mathbb{S}|.
\end{align}

\noindent (2) We estimate the difference between $J^{\bpi,\mathbb{S}}(0,x)$ and $J^{\bpi}(0,x)$. Note that
\begin{equation}
	J^{\bpi,\mathbb{S}}(0,x) - J^{\bpi}(0,x) = \big(\mathbb{E}_{0,x}[h(X_T^{\bpi,\mathbb{S}})] - \mathbb{E}_{0,x}[h(\tilde{X}_T^{\bpi})]\big) + \sum_{i=1}^N \mathbb{E}\Big[\int_{t_{i-1}}^{t_i} \big(g(\tau, X_\tau^{\bpi, \mathbb{S}}, a_i) - \tilde{g}(\tau, \tilde{X}_{\tau}^{\bpi})\big)d\tau\Big],
\end{equation}
where $a_i=G^{\bpi}(t_{i-1}, X^{\bpi, \mathbb{S}}_{t_{i-1}}, U_{i})$. We have already analyzed the first term in part (1), which is bounded by $C|\mathbb{S}|$. For the second term, we have for $\tau\in(t_{i-1},t_i]$,
\begin{align}
	&\phantom{~~~~}\mathbb{E}_{0,x}[g(\tau, X_\tau^{\bpi, \mathbb{S}}, a_i) - \tilde{g}(\tau, \tilde{X}_{\tau}^{\bpi})]\\
	&= \mathbb{E}_{0,x}[g(\tau, X_\tau^{\bpi, \mathbb{S}}, a_i) - g(t_{i-1}, X_{t_{i-1}}^{\bpi, \mathbb{S}}, a_i)] + \mathbb{E}_{0,x}[g(t_{i-1}, X_{t_{i-1}}^{\bpi, \mathbb{S}}, a_i) - \tilde{g}(\tau, \tilde{X}_{\tau}^{\bpi})]\\
	&= \mathbb{E}_{0,x}[g(\tau, X_\tau^{\bpi, \mathbb{S}}, a_i) - g(t_{i-1}, X_{t_{i-1}}^{\bpi, \mathbb{S}}, a_i)] + \mathbb{E}_{0,x}[\tilde{g}(t_{i-1}, X_{t_{i-1}}^{\bpi, \mathbb{S}}) - \tilde{g}(t_{i-1}, \tilde{X}_{t_{i-1}}^{\bpi})]\\
	&\phantom{=~}+ \mathbb{E}_{0,x}[\tilde{g}(t_{i-1}, \tilde{X}_{t_{i-1}}^{\bpi}) - \tilde{g}(\tau, \tilde{X}_{\tau}^{\bpi})]
\end{align}
where we noted $\mathbb{E}_{0,x}[g(t_{i-1}, X_{t_{i-1}}^{\bpi, \mathbb{S}}, a_i)]=\mathbb{E}_{0,x}[\mathbb{E}_{0,x}[g(t_{i-1}, X_{t_{i-1}}^{\bpi, \mathbb{S}}, a_i)|X_{t_{i-1}}^{\bpi, \mathbb{S}}]]=\mathbb{E}_{0,x}[\tilde{g}(t_{i-1}, X_{t_{i-1}}^{\bpi, \mathbb{S}})]$. As $g\in C^{2,0}_p([0,T]\times\mathbb{R}^d\times\mathcal{A})$, it follows from It\^o's formula, Assumption \ref{assump:SDE} and Proposition \ref{prop:SDE-sampled} that
\begin{equation}
	\big|\mathbb{E}_{0,x}[g(\tau, X_\tau^{\bpi, \mathbb{S}}, a_i) - g(t_{i-1}, X_{t_{i-1}}^{\bpi, \mathbb{S}}, a_i)]\big| \leq C|\mathbb{S}|.
\end{equation}
Part (1) shows
\begin{equation}
	\big|\mathbb{E}_{0,x}[\tilde{g}(t_{i-1}, X_{t_{i-1}}^{\bpi, \mathbb{S}}) - \tilde{g}(t_{i-1}, \tilde{X}_{t_{i-1}}^{\bpi})]\big| \leq C|\mathbb{S}|.
\end{equation}
As $\tilde{g}\in C^{2}_p([0,T]\times\mathbb{R}^d)$, using It\^o's formula, Lemma \ref{lem:expl-diff-coeffs}, Lemma \ref{lem:expl-jump-LG} and Proposition \ref{prop:expl-SDE-wellposed} yields
\begin{equation}
	\big|\mathbb{E}_{0,x}[\tilde{g}(t_{i-1}, \tilde{X}_{t_{i-1}}^{\bpi}) - \tilde{g}(\tau, \tilde{X}_{\tau}^{\bpi})]\big| \leq C|\mathbb{S}|.
\end{equation}
Combining these results, we obtain
\begin{equation}
	\Big|\sum_{i=1}^N \mathbb{E}_{0,x}\Big[\int_{t_{i-1}}^{t_i} \big(g(\tau, X_\tau^{\bpi, \mathbb{S}}, a_i) - \tilde{g}(\tau, \tilde{X}_{\tau}^{\bpi})\big)d\tau\Big]\Big| \leq C|\mathbb{S}|.
\end{equation}
The proof of the theorem is now complete.  \color{black}
\end{proof}

\end{document}